\documentclass[a4paper]{article}

\RequirePackage{amsthm,amsmath,amsfonts,amssymb}
\RequirePackage[authoryear]{natbib}
\RequirePackage{xcolor}
\RequirePackage[colorlinks,citecolor=blue,urlcolor=blue,hypertexnames=false,hyperfootnotes=false]{hyperref}
\RequirePackage{graphicx}
\usepackage{fullpage}
\usepackage[doublespacing]{setspace}
\usepackage{soul}
\usepackage[normalem]{ulem}
\usepackage[utf8]{inputenc}
\usepackage[T1]{fontenc}
\usepackage{authblk}
\usepackage{array}
\usepackage{enumerate}
\usepackage{algorithm}
\usepackage{algpseudocode}
\usepackage{multirow}
\usepackage[section]{placeins}
\usepackage{etoolbox}
\makeatletter
\let\th@thmstyleone\th@plain
\let\th@thmstyletwo\th@remark
\let\th@thmstylethree\th@definition
\makeatother

\usepackage{booktabs}
\usepackage{microtype}
\usepackage{bbm}
\usepackage{bm}
\usepackage{mathtools}
\usepackage{xparse}

\allowdisplaybreaks[4]

\let\REQUIRE\Require
\let\STATE\State
\let\FOR\For
\let\ENDFOR\EndFor
\let\IF\If
\let\ENDIF\EndIf

\def\tod{\overset{\text{d}}{\rightarrow}}

\mathtoolsset{showonlyrefs}

\theoremstyle{thmstyleone}
\newtheorem{thm}{Theorem}
\newtheorem{lem}{Lemma}
\newtheorem{cor}{Corollary}
\newtheorem{prop}{Proposition}
\theoremstyle{thmstyletwo}
\newtheorem{rem}{Remark}

\theoremstyle{thmstylethree}
\newtheorem{asmp}{Assumption}
\newtheorem{defn}{Definition}

\NewDocumentCommand{\mybar}{ O{0.8} O{0pt} m }{%  <---- Set the default values here
    \mathrlap{\hspace{#2}\overline{\scalebox{#1}[1]{\phantom{\ensuremath{#3}}}}}\ensuremath{#3}
}

 \newcommand{\ind}{\perp\!\!\!\!\perp} 
\def\1{\bm{1}}

\DeclareMathAlphabet{\mathsfit}{\encodingdefault}{\sfdefault}{m}{sl}
\SetMathAlphabet{\mathsfit}{bold}{\encodingdefault}{\sfdefault}{bx}{n}

\def\0{{\bf 0}}
\def\1{{\bf 1}}

\def\OM{{\mathcal O}}

\def\RM{{\mathcal R}}

\def\RB{{\mathbb R}}
\def\EB{{\mathbb E}}

\def\PB{{\mathbb P}}

\def\argmin{\mathop{\rm argmin}}

\newcommand{\Var}{\mathrm{Var}}

\DeclareMathOperator{\Tr}{Tr}

\AfterEndEnvironment{asmp}{\par\addvspace{0.5\baselineskip}}

\begin{document}

\title{Statistical Inference for Stochastic Gradient Descent: Beyond Finite Variance}

\author[1]{Jose Blanchet}
\author[1]{Peter Glynn}
\author[1]{Wenhao Yang}
\affil[1]{{\normalsize Department of Management Science and Engineering, Stanford University}}

\maketitle

\begin{abstract}
Stochastic gradient descent (SGD) is foundational to large-scale statistical learning and stochastic optimization. However, in some modern statistical learning problems, stochastic gradients can exhibit infinite-variance behavior. Consequently, classical inference methods for SGD that rely on a finite-variance assumption break down. We develop a model-agnostic methodology for constructing confidence regions from SGD iterates in both the finite- and infinite-variance regimes. We first show that Polyak--Ruppert averaging has an asymptotic directional scale no larger than that of the fastest-rate final iterate, analogous to its lower asymptotic variance in the finite-variance setting. Accordingly, we focus our inference methodology on the Polyak--Ruppert averaged estimator. Specifically, we establish a joint central limit theorem for this estimator and an empirical second-moment normalizer from the same iterates. This joint limit yields a self-normalized statistic in which the leading tail-dependent scaling terms cancel. We then use subsampling to estimate the relevant quantiles, avoiding explicit estimation of nuisance parameters including tail indices, slowly varying functions, or stable-law parameters. The resulting confidence regions are straightforward to implement and asymptotically valid in both the finite- and infinite-variance regimes. Empirical studies show reliable coverage in various settings, supporting the proposed method as a practical tool for uncertainty quantification in stochastic optimization.

\end{abstract}

\noindent\textbf{Keywords:} infinite-variance; stochastic gradient descent; statistical inference

\section{Introduction}
\label{sec: intro}
Many parameter estimation problems in statistics can be formulated as stochastic optimization problems. Given a convex population loss function $\ell(\cdot)$, we consider the problem
\begin{align}
    \theta^*\in\argmin_{\theta\in\RB^d}\ell(\theta).
\end{align}
A classical example is M-estimation, where the loss is an empirical average of a sample-wise criterion, $\ell(\theta):=\frac{1}{N}\sum_{i=1}^N f(\theta,X_i)$. To solve this optimization problem, \citet{robbins1951stochastic} introduced stochastic approximation (SA), where the parameter is updated iteratively according to
\begin{align}
\label{eq: sa}
    \theta_n=\theta_{n-1}-\eta_n g(\theta_{n-1},\xi_n),\qquad n\ge1,
\end{align}
where $\EB[g(\theta,\xi)]=\nabla \ell(\theta)$, $\eta_n=c\cdot n^{-\rho}$ with $\rho\in(1/2,1]$, and $\xi_n$ denotes the randomness in the stochastic gradient query. This recursion \eqref{eq: sa} is commonly referred to as stochastic gradient descent (SGD), and it is now widely used in large-scale statistical learning and stochastic optimization. A large body of work has studied its convergence properties in various settings \citep{bertsekas2000gradient, harold1997stochastic, mou2022optimal, moulines2011non, pelletier1998almost, pelletier1998weak, robbins1971convergence}. However, convergence guarantees do not quantify the statistical uncertainty of the estimator produced by the SGD trajectory. This motivates the problem of constructing asymptotically valid confidence regions for \(\theta^*\) from the iterates generated by SGD.

% In modern data-driven problems, both the scale of datasets and the complexity of statistical models have grown dramatically, making computational efficiency an essential concern. SGD is particularly well suited to this setting because each update requires only a single stochastic gradient (or a small mini-batch), which keeps the per-iteration complexity low and independent of the full sample size. Furthermore, the algorithm operates in an online manner, updating parameters sequentially without the need to store the entire dataset, thereby greatly reducing memory requirements. Beyond its computational efficiency, SGD has also demonstrated strong empirical performance. In nonconvex optimization problems, it often generalizes well in practice \citep{hardt2016train, zhou2020towards} and exhibits the ability to escape saddle points \citep{jin2017escape, daneshmand2018escaping}, contributing to its effectiveness in training complex models such as deep neural networks. Meanwhile, from a theoretical perspective, a substantial body of work \citep{bertsekas2000gradient, harold1997stochastic, mou2022optimal, moulines2011non, pelletier1998almost, pelletier1998weak, robbins1971convergence} has focused on analyzing the convergence behavior of SGD in various settings.

Formally, we aim to construct a confidence region $\text{CI}_n$ for the true parameter $\theta^*$ with nominal level $1-\delta$ such that
\begin{align}
    \lim_{n\to+\infty}\PB\left(\theta^*\in \text{CI}_n\right)=1-\delta,
\end{align}
where $n$ is the number of SGD iterations.
% Beyond convergence analysis, as SGD has become a standard optimization tool in large-scale and online learning, understanding its long-run statistical behavior is crucial for quantifying uncertainty. There is a flourishing line on establishment of statistical inference for SGD.  
% Specifically, the statisticians aim to establish an asymptotically valid confidence region $\text{CI}_n$ for true parameter $\theta^*$ with given level $1-\delta$ such that:
% \begin{align}
%     \lim_{n\to+\infty}\PB\left(\theta^*\in \text{CI}_n\right)=1-\delta,
% \end{align}
% where $n$ is the number of SGD iterations and the confidence region $\text{CI}_n$ is constructed along the SGD learning trajectory. 
A standard route to asymptotically valid confidence regions is to first establish a central limit theorem and then estimate the relevant quantiles of the limiting distribution. In early work, \citet{sacks1958asymptotic} established asymptotic normality for the final SGD iterate under suitable conditions.
% \[\resizebox{0.95\textwidth}{!}{$
%     \sqrt{\eta_n^{-1}}\left(\theta_n-\theta^*\right)\tod N\left(0,\int_0^{+\infty}\exp\left(-\left(H-\frac{\1(\rho=1) }{2c}\right)t\right)\Sigma\exp\left(-\left(H-\frac{\1(\rho=1) }{2c}\right) t\right)dt\right),$
% }\]
% where $\eta_n=c\cdot n^{-\rho}$ with $\rho\in(1/2,1]$, $H$ is the Hessian of $\ell(\theta^*)$, and $\Sigma$ is the covariance of stochastic gradient $g(\theta^*,\xi)$. 
To achieve the fastest convergence rate, the learning rate should be chosen as $\eta_n=c/n$, where $c>1/(2\sigma_{\min}(H))$, $\sigma_{\min}(H)$ denotes the smallest eigenvalue of $H$, and $H=\nabla^2\ell(\theta^*)$. Since $H$ is unknown and difficult to estimate, it is challenging to derive a reliable confidence region. Subsequently, \citet{polyak1992acceleration} established asymptotic normality for Polyak--Ruppert averaged SGD.
% \begin{align}
% \label{eq: pa_normality}
%     \sqrt{n}\left(\frac{1}{n}\sum_{i=1}^n\theta_i - \theta^*\right)\tod N\left(0, H^{-1} \Sigma H^{-1}\right),
% \end{align}
% where $\eta_n=c\cdot n^{-\rho}$ with $\rho\in(1/2,1)$. 
The advantage of Polyak--Ruppert averaging is that it achieves the fastest convergence rate, in the sense that its asymptotic distribution has a smaller variance than that of the final iterate and is independent of the choice of learning rate. Building on this result, recent work has developed several inference methodologies for SGD, including plug-in estimators \citep{chen2020statistical}, batch means \citep{chen2020statistical,zhu2021constructing}, random scaling \citep{lee2022fast}, replication \citep{zhu2024high}, and bootstrap methods \citep{zhong2023online}. However, these methods typically rely on the assumption that the stochastic gradients have finite variance. While analytically convenient, this assumption can be inconsistent with empirical findings in deep learning, where stochastic gradient noise often exhibits heavy-tailed\footnote{In this manuscript, a distribution is referred to as heavy-tailed if it does not have a finite variance, and light-tailed if it has a finite variance.} behavior \citep{simsekli2019tail, csimcsekli2019heavy}. This gap between theory and practice raises important questions about the reliability of classical inference procedures in more realistic settings.

Heavy-tailed stochastic gradients may arise from several sources, including the structure of neural networks and the choice of learning rate and batch size \citep{simsekli2019tail, csimcsekli2019heavy}. When stochastic gradients no longer have finite variance, many uncertainty quantification procedures \citep{chen2020statistical,zhu2021constructing,lee2022fast,zhu2024high,zhong2023online} fail because the classical central limit theorem of \citet{polyak1992acceleration} no longer holds. Recently, \citet{blanchet2024limit} showed that, under suitable conditions, the final iterate of SGD converges weakly to a non-Gaussian distribution:
\begin{align}
    h(\eta_n)\left(\theta_n-\theta^*\right)\tod -\int_0^{+\infty}\exp\left(-\left(H-\frac{1-\alpha^{-1}}{c}\mathbbm{1}(\rho=1)I_d\right)t\right)dL_t,
\end{align}
where $h(\eta_n):=\eta_n^{\frac{1}{\alpha}-1}b_1(\eta_n^{-1})$ is regularly varying at zero with index $\frac{1}{\alpha}-1$, and $L_t$ is a L\'evy process with index $\alpha$. This limit theorem provides a theoretical foundation for understanding infinite-variance SGD, but it does not immediately provide implementable confidence regions because of two difficulties:
\begin{enumerate}[(a)]
    \item When $\rho=1$, the final iterate attains its fastest convergence rate, but the learning-rate constant must satisfy $c>(1-\alpha^{-1})/\sigma_{\min}(H)$, so its choice depends on the unknown Hessian $H$.
    \item The normalizing function $h(\cdot)$ and the quantiles of the limiting distribution depend on the unknown tail index $\alpha$, the slowly varying function $b_1(\cdot)$, and other nuisance parameters determined by the tail behavior of the stochastic gradient $g(\theta,\xi)$.
\end{enumerate}
In certain settings where heavy-tailed noise is artificially injected into the iterates, the nuisance parameters and limiting distribution become explicitly known, making statistical inference more direct. We discuss this approach in Section~\ref{sec: q_aware} of the Supplementary Material. However, in this setting, the injected perturbations must dominate the intrinsic stochastic-gradient noise in the tails so that the asymptotic distribution is explicitly determined by the injected noise. Consequently, the resulting confidence regions may become overly conservative and fail to capture useful information about the curvature of the loss landscape. It is therefore desirable to develop inference procedures that do not rely on artificially injected randomness.

In this work, we address these challenges by developing a model-agnostic methodology for constructing confidence regions from SGD iterates. We summarize our main contributions below:
\begin{enumerate}[(a)]
    \item We first establish a central limit theorem for the Polyak--Ruppert averaged SGD estimator under infinite-variance stochastic gradients. We then show that its stable limit has no larger directional scale than that of the fastest-rate ($\rho=1$) final iterate in every direction. This parallels its smaller asymptotic variance in the finite-variance setting. Here directional scale plays the role of asymptotic variance when second moments do not exist. We further establish a joint central limit theorem for the averaged estimator and an empirical second-moment normalizer computed from the same SGD iterates. This joint result enables the construction of a self-normalized statistic in which the unknown scaling function $h(\cdot)$, the tail index $\alpha$, and other leading nuisance quantities cancel out. As a result, the statistic can be computed from the observed SGD trajectory without estimating the detailed tail behavior of the stochastic gradients.
    \item Although self-normalization removes the leading scale terms, the resulting limiting distribution still depends on unknown parameters, including the tail behavior of the stochastic gradient, the parameters of the limiting stable distribution, and the Hessian matrix. To address this issue, we introduce a subsampling procedure based on short auxiliary SGD trajectories to estimate the quantiles. The procedure applies without prior knowledge of whether the stochastic gradients have finite or infinite variance. We support the theory with numerical experiments that examine coverage and interval length across a range of tail behaviors.
\end{enumerate}
Together, these contributions bridge the gap between heavy-tailed limit theorems and statistical inference for SGD, providing a practical tool for uncertainty quantification in stochastic optimization.

\subsection{Related literature}
For weak convergence of heavy-tailed SGD, \citet{krasulina1969stochastic} established the first limit theorem in the one-dimensional case with Pareto-type noise. More recently, \citet{blanchet2024limit} extended this result to high dimensions and relaxed the assumptions on the stochastic gradients to cover a broader class of heavy-tailed distributions. In related work, \citet{goodsell1976almost} and \citet{li1994almost} investigated conditions for almost sure convergence, while \citet{wang2021convergence} analyzed the weak convergence of Polyak--Ruppert averaged SGD under Pareto-type noise, without giving a detailed characterization of the limiting distribution or developing an inference methodology.

Beyond limit theorems for SGD, heavy-tailed phenomena in stochastic optimization have also been studied extensively. Both theoretical analyses and empirical evidence show that hyperparameters such as the learning rate, batch size, and loss structure can induce heavy-tailed behavior \citep{gurbuzbalaban2021heavy, hodgkinson2021multiplicative, schertzer2024stochastic, jiao2024emergence, damek2024analysing}. On the algorithmic side, \citet{cutkosky2021high}, \citet{liu2023stochastic}, and \citet{wang2021convergence} investigated non-asymptotic convergence rates under various noise assumptions. Moreover, several works demonstrate that heavy-tailed behavior is positively correlated with generalization properties across different learning settings \citep{mahoney2019traditional, simsekli2020hausdorff, barsbey2021heavy}. In particular, heavy-tailed SGD dynamics have been shown to favor wider local minima over sharper ones \citep{csimcsekli2019heavy,simsekli2019tail}, while \citet{wang2021eliminating} proposed truncated heavy-tailed SGD to mitigate convergence to sharp minima.

In the broader literature on heavy-tailed statistics, \citet{logan1973limit} introduced self-normalization techniques that cancel dominant nuisance parameters in heavy-tailed mean estimation problems. Building on this idea, \citet{romano1999subsampling} proposed a subsampling approach to estimate quantiles of the limiting distribution of self-normalized statistics in i.i.d. heavy-tailed mean estimation. More recently, \citet{bai2016unified} extended this framework to accommodate long-range dependent time series.

\subsection{Notation}
Let $\bar{\mathbb{R}} := \mathbb{R}\cup\{\pm\infty\}$ and $\bar{\mathbb{R}}^d := (\bar{\mathbb{R}})^d$.
We write $\overset{d}{\to}$ for convergence in distribution, $\overset{p}{\to}$ for convergence in probability, $\overset{\mathrm{a.s.}}{\to}$ for almost sure convergence, $\overset{v}{\to}$ for vague convergence, and $\Rightarrow$ for weak convergence in function spaces.
We write $X_n=o_p(1)$ if $X_n\overset{p}{\to}0$, and $X_n=O_p(1)$ if $X_n$ is bounded in probability.
The $(d-1)$-dimensional unit sphere is denoted by
$\mathbb{S}^{d-1}:=\{x\in\mathbb{R}^d:\|x\|=1\}$.
We denote by $D_{J_1}([0,1],\mathbb{R}^d)$ the Skorokhod space endowed with the $J_1$ topology.
$\mathbb{S}_{++}^d$ denotes the set of all $d\times d$ symmetric positive definite matrices, $I_d$ denotes the $d\times d$ identity matrix, and $[d]:=\{1,\ldots,d\}$.
We write $a\lesssim b$ if there exists $C>0$ such that $a\le Cb$.

\section{Problem setup}
\label{sec: prel}
In this section, we introduce the stochastic gradient descent setup and the assumptions used throughout the paper. We then summarize the relevant limit theory for heavy-tailed SGD and explain why these results alone do not immediately yield a practical inference procedure. Foundational results from heavy-tailed statistics are deferred to Section~\ref{apd: gclt} of the Supplementary Material.

\subsection{Background and assumptions}
Given an optimization problem
\begin{align}
    \min_{\theta\in\RB^d}\ell(\theta),
\end{align}
where $\ell(\theta)$ is a twice-differentiable, strongly convex function, stochastic gradient descent iteratively approximates the minimizer $\theta^*\in\argmin_{\theta}\ell(\theta)$ via
\begin{align}
\label{eq: sgd}
    \theta_n=\theta_{n-1}- \eta_n g(\theta_{n-1}, \xi_n),\qquad n\ge1,
\end{align}
where $\EB_{\xi\sim\PB}[g(\theta,\xi)]=\nabla \ell(\theta)$ and $\{\xi_i\}_{i\ge1}$ are i.i.d. random variables. Motivated by empirical observations that stochastic gradients may exhibit infinite variance \citep{simsekli2019tail}, we impose the following multivariate regular variation condition adapted from \citet{blanchet2024limit}.
\begin{asmp}
    \label{asmp: tail_norm}
    For each $\theta\in\mathbb{R}^d$, there exist $\alpha\in(1,2)$, a
    differentiable slowly varying function
    $b_0(\cdot,\theta):\mathbb{R}_+\to\mathbb{R}_+$, and a bounded Lipschitz
    function $C:\mathbb{R}^d\to\mathbb{R}_+$ satisfying
    \begin{align}
        \PB\left(\|g(\theta,\xi)\|> r\right)
        &=r^{-\alpha}b_0(r,\theta),
        \label{eq: g_norm}
    \end{align}
    where $b_0$ additionally satisfies
    \begin{align}
        \lim_{r\to+\infty}\frac{b_0(r,\theta)}{b_0(r,\theta^*)}
        &=C(\theta),
        \qquad
        \lim_{r\to+\infty}\frac{r\,\partial_r b_0(r,\theta)}
        {b_0(r,\theta)}=0.
        \notag
    \end{align}
    Let $b_1:\mathbb{R}_+\to\mathbb{R}_+$ be the slowly varying normalization chosen so that
    \[
        \PB\left(\|g(\theta^*,\xi)\|>x^{1/\alpha}b_1(x)^{-1}\right)=x^{-1}.
    \]
    Then, uniformly for $\theta$ in compact sets,
    \begin{align}
        x\PB\left(x^{-1/\alpha}b_1(x)g(\theta,\xi)\in\cdot\right)
        \overset{v}{\to}\nu(\theta,\cdot),\qquad x\to+\infty,
        \label{eq: g_radon}
    \end{align}
    where $\nu(\theta,\cdot)$ is a nonzero Radon measure on
    $\bar{\mathbb{R}}^d\setminus\{0\}$ that is homogeneous of order
    $-\alpha$.
\end{asmp}
We also impose regularity assumptions on the loss $\ell(\theta)$ and the stochastic gradient $g(\theta,\xi)$, following \citet{wang2021convergence} and \citet{blanchet2024limit}.
\begin{asmp}
    \label{asmp: ppd}
    The set $\{\nabla^2\ell(\theta):\theta\in\RB^d\}$ is bounded and uniformly $p$-positive definite for any $p\in(1,\alpha)$. Equivalently, for any $\theta\in\RB^d$ and $\|u\|_p=1$, $\nabla^2\ell(\theta)$ is bounded and satisfies $u^\top\nabla^2\ell(\theta)\left(\text{sgn}(u_1)|u_1|^{p-1},\ldots,\text{sgn}(u_d)|u_d|^{p-1}\right)^\top>0$ for any $p\in(1,\alpha)$.
\end{asmp}
\begin{asmp}
    \label{asmp: lip_error}
    There exists $C_{\rm Lip}>0$ such that for any $\theta_1,\theta_2$,
    \begin{align}
        \|g(\theta_1, \xi)-g(\theta_2,\xi)\|\le C_{\rm Lip}\|\theta_1-\theta_2\|.
    \end{align}
\end{asmp}
\begin{asmp}
\label{asmp: talyor_error}
    There exist $q\in(1,\alpha)$ and $K>0$ such that for any $\theta$,
    \begin{align}
        \|\nabla\ell(\theta)-\nabla^2\ell(\theta^*)(\theta-\theta^*)\|\le K\|\theta-\theta^*\|^q.
    \end{align}
    We denote $H:=\nabla^2\ell(\theta^*)$.
\end{asmp}
Assumption~\ref{asmp: ppd} strengthens the standard strong convexity condition and is commonly used in the analysis of heavy-tailed stochastic optimization; see \citet{wang2021convergence}. Assumption~\ref{asmp: lip_error} imposes a smoothness requirement on the stochastic gradients, while Assumption~\ref{asmp: talyor_error} bounds the approximation error of the nonlinear loss function $\ell(\cdot)$. Similar assumptions are also adopted by \citet{polyak1992acceleration} and \citet{wang2021convergence}.

\subsection{Limit theory for SGD iterates}
In prior work by \citet{blanchet2024limit}, a central limit theorem for the final iterate of SGD was established, as summarized below.
\begin{thm}[\citet{blanchet2024limit}]
\label{thm: final_limit}
    Suppose the learning rate satisfies $\eta_n=c\cdot n^{-\rho}$ with $\rho\in(\alpha^{-1},1]$. When $\rho=1$, also suppose $c>(1-\alpha^{-1})/\sigma_{\min}(H)$. Under Assumptions~\ref{asmp: tail_norm}, \ref{asmp: ppd}, \ref{asmp: lip_error}, and \ref{asmp: talyor_error}, we have
    \begin{align}
        n^{\rho(1-\frac{1}{\alpha})}b_1(n^{\rho})\left(\theta_n-\theta^*\right)\tod Z_{\text{final},\rho}\overset{d}{=}-c^{1-\frac{1}{\alpha}}\int_0^{\infty}\exp\left(-H_{\rho}t\right)dL_t,
    \end{align}
    where $L_t$ is a L\'evy process with characteristics $\left(0, \nu(\theta^*,\cdot), -\int_{\|x\|>1}x\nu(\theta^*,dx)\right)$, and $H_{\rho}=H-\frac{1-\alpha^{-1}}{c}\mathbbm{1}(\rho=1)I_d$.
\end{thm}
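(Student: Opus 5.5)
We reduce the recursion to a linear one driven by i.i.d. heavy-tailed noise, and then identify the limit of the resulting weighted sum through a functional stable limit theorem for the noise.

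\textbf{Decomposition.} Write $\Delta_n:=\theta_n-\theta^*$ and $\zeta_n:=g(\theta_{n-1},\xi_n)-\nabla\ell(\theta_{n-1})$. Then
\begin{align}
\Delta_n=(I_d-\eta_n H)\Delta_{n-1}-\eta_n\zeta_n-\eta_n r_n,
\end{align}
where $r_n:=\nabla\ell(\theta_{n-1})-H\Delta_{n-1}$ satisfies $\|r_n\|\le K\|\Delta_{n-1}\|^q$ by Assumption~\ref{asmp: talyor_error}. Further split $\zeta_n=\zeta_n^*+(\zeta_n-\zeta_n^*)$ with $\zeta_n^*:=g(\theta^*,\xi_n)$, which is i.i.d. and mean zero because $\nabla\ell(\theta^*)=0$. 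Assumption~\ref{asmp: lip_error} gives $\|\zeta_n-\zeta_n^*\|\le 2C_{\rm Lip}\|\Delta_{n-1}\|$. Setting $\Pi_{k,n}:=\prod_{j=k+1}^n(I_d-\eta_jH)$, unrolling yields
\begin{align}
\Delta_n=\Pi_{0,n}\Delta_0-\sum_{k=1}^n\eta_k\Pi_{k,n}\zeta_k^*-\sum_{k=1}^n\eta_k\Pi_{k,n}\big(\zeta_k-\zeta_k^*+r_k\big).
\end{align}

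\textbf{Step 1: a priori $L^p$ rate.} Using the $p$-positive definiteness in Assumption~\ref{asmp: ppd}, show via a Lyapunov argument on $\|\Delta_n\|_p^p$ for $p\in(1,\alpha)$ that $\EB\|\Delta_n\|^p\lesssim \eta_n^{p-1}$, up to slowly varying factors. This follows the approach of \citet{wang2021convergence}, using the inequality $|a+b|^p\le|a|^p+p\,\mathrm{sgn}(a)|a|^{p-1}b+C|b|^p$, valid for $p\in(1,2]$. With this rate, the Lipschitz remainder is a martingale term. It has conditional variance of order $\eta_k^2\|\Delta_{k-1}\|^2$, which we control in $L^p$ through von Bahr--Esseen. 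The Taylor remainder has size $\eta_k\|\Delta_{k-1}\|^q$. After multiplication by $h(\eta_n)\approx\eta_n^{1/\alpha-1}b_1$, both remainders are $o_p(1)$: the first because it carries an extra factor $\|\Delta\|\to0$, and the second because $q>1$ makes $\|\Delta\|^q$ of smaller order than $\eta^{1-1/\alpha}$. The initial term decays like $\exp(-c\sigma_{\min}(H)\sum\eta_j)$, which is polynomially small of sufficient order when $\rho=1$ exactly because $c\sigma_{\min}(H)>1-\alpha^{-1}$, and stretched-exponentially small when $\rho<1$.

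\textbf{Step 2: the linear heavy-tailed sum.} It remains to find the limit of $n^{\rho(1-1/\alpha)}b_1(n^\rho)\sum_k\eta_k\Pi_{k,n}\zeta_k^*$. Introduce the intrinsic time $t_k:=\sum_{j=k+1}^n\eta_j$, so that $\Pi_{k,n}\approx\exp(-Ht_k)$, corrected for $\rho=1$ by the shift described below. By Assumption~\ref{asmp: tail_norm} and the generalized CLT, the partial sums $m^{-1/\alpha}b_1(m)\sum_{k\le mt}\zeta_k^*$ converge in $D_{J_1}$ to the L\'evy process $L$ with the stated characteristics. Here the drift $-\int_{\|x\|>1}x\nu$ arises from centering, since $\zeta^*$ has mean zero.

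Each increment $\eta_k\zeta_k^*$ is a block of noise over intrinsic time $\eta_k$. Rescaling, one step contributes $\eta_k\cdot\eta_k^{-1/\alpha}$ in stable scale at local time, so $\eta_k\zeta_k^*\approx\eta_k^{1-1/\alpha}c^{\ldots}\,dL$. The weight $\eta_k^{1-1/\alpha}$ relative to $\eta_n^{1-1/\alpha}$ behaves like $(k/n)^{-\rho(1-1/\alpha)}$. When $\rho<1$ this ratio is $\to1$ on the relevant window $t_k=O(1)$. When $\rho=1$ it equals $e^{(1-1/\alpha)t/c}$, which combines with $e^{-Ht}$ to produce $H_\rho$ and the factor $c^{1-1/\alpha}$. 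To make this rigorous, approximate the sum by a stochastic integral of a deterministic continuous kernel against the rescaled partial-sum process. Then apply the continuous mapping theorem via integration by parts, since the integrand is of bounded variation. Finally, justify truncating $t\in[0,T]$ and sending $T\to\infty$ by a uniform $L^p$ tail bound, using $\int_T^\infty\|e^{-H_\rho t}\|^pdt\to0$.

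\textbf{Main obstacle.} The delicate step is Step 2 in the case $\rho=1$, together with the tail truncation. There the polynomial weights and the slowly varying $b_1$ must be handled uniformly; for this we would use Potter bounds. Once that is in place, the contraction margin $c\sigma_{\min}(H)-(1-\alpha^{-1})>0$ guarantees that the truncation errors vanish.
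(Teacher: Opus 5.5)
The paper gives no proof of this theorem; it is quoted from \citet{blanchet2024limit}. So your linearize-and-unroll route has to be judged on its own. Its skeleton is sound. The identification of $H_\rho$ through $(\eta_k/\eta_n)^{1-1/\alpha}=e^{(1-1/\alpha)t/c}$ is right, as are the initial-term analysis and the remainder bounds for $\rho<1$ with $p$ close to $\alpha$. (The factor $c^{1-1/\alpha}$, though, comes from $n^{\rho(1-1/\alpha)}b_1(n^\rho)\sim c^{1-1/\alpha}h(\eta_n)$ for every $\rho$, not from the $\rho=1$ shift.)

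\textbf{Tail truncation in Step 2.} This step fails with the tool you name. An $L^p$ bound with $p<\alpha$ of von Bahr--Esseen type (Lemma~\ref{lem: moment}) is what the integral $\int_T^\infty\|e^{-H_\rho t}\|^p\,dt$ reflects, and it is not uniform in $n$. Set $R_{n,T}:=h(\eta_n)\sum_{k:\,t_k>T}\eta_k\Pi_{k,n}\zeta_k^*$. For $\rho<1$ that bound reads $\EB\|R_{n,T}\|^p\lesssim h(\eta_n)^p\sum_{k:\,t_k>T}\eta_k^p\|\Pi_{k,n}\|^p\asymp\eta_n^{p/\alpha-1}b_1(\eta_n^{-1})^p\int_T^\infty\|e^{-Ht}\|^p\,dt$. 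This diverges as $n\to\infty$ for every fixed $T$; at $\rho=1$ the same computation gives a factor $n^{1-p/\alpha}$. The reason is that such inequalities charge $m^{1/p}$ for a sum of $m$ i.i.d. terms, whereas the stable scale is $m^{1/\alpha}/b_1(m)$. Your remainders absorb this loss through their extra decay; for instance, the Lipschitz martingale term only needs $p>2\alpha/(\alpha+1)$. The leading term cannot absorb it.

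You need a bound calibrated to the $\alpha$-scale. With $a_{n,k}:=h(\eta_n)\eta_k\Pi_{k,n}$, truncate $\zeta_k^*$ at level $\|a_{n,k}\|^{-1}$ and recenter both pieces using $\EB\zeta^*=0$. Bound the small part in $L^2$ and the large part in $L^1$ via Karamata's theorem; this gives $\PB(\|R_{n,T}\|>\varepsilon)\lesssim(\varepsilon^{-1}+\varepsilon^{-2})\sum_{k:\,t_k>T}\|a_{n,k}\|^{\alpha}b_0(\|a_{n,k}\|^{-1},\theta^*)$. Potter bounds then dominate the right-hand side by $\int_T^\infty\|e^{-H_\rho t}\|^{\alpha}e^{\epsilon t}\,dt$, uniformly in $n$. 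Alternatively, compute characteristic functions directly with the same domination. The hypothesis $c\sigma_{\min}(H)>1-\alpha^{-1}$ is exactly $\alpha$-th-power integrability of the kernel $x^{cH-I}$ at $x=0$. So at $\rho=1$ the tail must be controlled at exponent $\alpha$, not $p$.

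\textbf{Step 1 at $\rho=1$.} The Lyapunov recursion for $\EB\|\Delta_n\|_p^p$ contracts at rate $pc\lambda_p/n$, where $\lambda_p$ is the uniform $p$-positive-definiteness constant of Assumption~\ref{asmp: ppd}. Testing along a bottom eigenvector of $H$ shows $\lambda_p\le\sigma_{\min}(H)$, and $\lambda_p$ can be much smaller: it is an infimum over all Hessians and is not adapted to the eigenbasis of $H$. Your argument therefore only gives $\EB\|\Delta_n\|^p\lesssim n^{-\min\{p-1,\,pc\lambda_p\}}$ up to logarithms. The hypothesis on $c\sigma_{\min}(H)$ does not deliver $n^{1-p}$. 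Your Taylor-term bound needs $\EB\|\Delta_k\|^q$ to decay faster than $k^{-(1-\alpha^{-1})}$, so a slower a priori rate is not enough.

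At $\rho=1$ you must localize:
\begin{enumerate}[(a)]
\item Obtain $\theta_n\to\theta^*$ almost surely via Robbins--Siegmund on $\|\Delta_n\|_p^p$.
\item Rerun the Lyapunov argument on a small neighborhood of $\theta^*$, with $\|\cdot\|_p^p$ taken in an eigenbasis of $H$. There Assumption~\ref{asmp: talyor_error} gives contraction at rate close to $pc\sigma_{\min}(H)$, which exceeds $p-1$ because $p<\alpha$.
\item Control exits from the neighborhood with stopping times.
\end{enumerate}
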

In practice, one naturally prefers algorithms that achieve the fastest possible convergence rate. Theorem~\ref{thm: final_limit} shows that, for SGD, the optimal rate is attained when the learning rate is chosen as $\eta_n = c \cdot n^{-1}$. However, this choice requires the constant $c$ to satisfy
\begin{align}
    c>\frac{1-\alpha^{-1}}{\sigma_{\min}(H)},
\end{align}
which is difficult to verify or control in practice. To address this issue, we draw on the averaging technique of \citet{polyak1992acceleration}, which removes such parameter dependence and yields convergence guarantees without the delicate tuning of $c$. The proof is deferred to Section~\ref{apd: pa_limit} of the Supplementary Material.
\begin{thm}[Polyak--Ruppert averaging]
\label{thm: pa_limit}
    Suppose the learning rate satisfies $\eta_n=c\cdot n^{-\rho}$ with $\rho\in(q^{-1},1)$, where $q\in(1,\alpha)$ is the exponent in Assumption~\ref{asmp: talyor_error}, and let $\bar{\theta}_n=n^{-1}\sum_{k=1}^n\theta_k$. Under Assumptions~\ref{asmp: tail_norm}, \ref{asmp: ppd}, \ref{asmp: lip_error}, and \ref{asmp: talyor_error}, we have
    \begin{align}
        n^{1-\frac{1}{\alpha}}b_1(n)\left(\bar{\theta}_n-\theta^*\right)\tod Z_{\text{Polyak}},
    \end{align}
    where $Z_{\text{Polyak}}\overset{d}{=}-H^{-1}L_1$ and $L_t$ is defined in Theorem~\ref{thm: final_limit}.
\end{thm}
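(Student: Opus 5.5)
The plan follows the classical Polyak--Juditsky argument, with the square-integrable martingale CLT replaced by a stable functional limit theorem for heavy-tailed martingale differences. Write $\Delta_n=\theta_n-\theta^*$ and decompose the stochastic gradient as
\begin{align}
g(\theta_{n-1},\xi_n)=H\Delta_{n-1}+r(\theta_{n-1})+\zeta_n,
\end{align}
where $\zeta_n:=g(\theta_{n-1},\xi_n)-\nabla\ell(\theta_{n-1})$ is a martingale difference and $r(\theta):=\nabla\ell(\theta)-H(\theta-\theta^*)$. We further split $\zeta_n=\varepsilon_n+\psi_n$, with $\varepsilon_n:=g(\theta^*,\xi_n)$ i.i.d.\ (mean zero since $\nabla\ell(\theta^*)=0$) and $\psi_n:=g(\theta_{n-1},\xi_n)-g(\theta^*,\xi_n)-\nabla\ell(\theta_{n-1})$. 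By Assumption~\ref{asmp: lip_error}, $\|\psi_n\|\le 2C_{\rm Lip}\|\Delta_{n-1}\|$.

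The recursion becomes $\Delta_n=(I-\eta_nH)\Delta_{n-1}-\eta_n(r(\theta_{n-1})+\zeta_n)$. Following \citet{polyak1992acceleration}, we unroll it and sum, using matrices
\begin{align}
A_j^n:=\eta_j\sum_{k=j}^{n}\prod_{i=j+1}^{k}(I-\eta_iH).
\end{align}
For $\rho<1$ these satisfy $\sup_{j,n}\|A_j^n\|<\infty$ and $n^{-1}\sum_{j\le n}\|A_j^n-H^{-1}\|\to0$. This gives
\begin{align}
\sqrt{\cdot}\;\;\bar\Delta_n=\frac{1}{n}\,T_n\Delta_0-\frac1n\sum_{j=1}^nH^{-1}\varepsilon_j-\frac1n\sum_{j=1}^n(A_j^n-H^{-1})\varepsilon_j-\frac1n\sum_{j=1}^nA_j^n\bigl(\psi_j+r(\theta_{j-1})\bigr),
\end{align}
with $T_n$ bounded. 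The leading term is handled by the generalized CLT for i.i.d.\ regularly varying vectors (Supplementary Section~\ref{apd: gclt}), using the normalization $b_1$ from Assumption~\ref{asmp: tail_norm}. The mean-zero centering matches the drift $-\int_{\|x\|>1}x\,\nu(\theta^*,dx)$, so
\begin{align}
n^{-1/\alpha}b_1(n)\sum_{j\le n}\varepsilon_j\tod L_1,
\end{align}
and the leading term converges to $-H^{-1}L_1$. It remains to show that the other three terms are $o_p(n^{-1+1/\alpha}/b_1(n))$. The $T_n\Delta_0/n$ term is $O(1/n)$, which is trivially negligible.

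For the $(A_j^n-H^{-1})\varepsilon_j$ term, we truncate. Split $\varepsilon_j=\varepsilon_j\1\{\|\varepsilon_j\|\le a_n\}+\varepsilon_j\1\{\|\varepsilon_j\|>a_n\}$ with $a_n=n^{1/\alpha}/b_1(n)$, recentering each part. By Karamata, the truncated part has second moment $\asymp a_n^{2-\alpha}$ per summand, so its contribution has $L^2$ norm of order $a_n\bigl(n^{-1}\sum_j\|A_j^n-H^{-1}\|^2\bigr)^{1/2}\to$ relative $o(1)$. The large part has $L^1$ norm of order $a_n\, n^{-1}\sum_j\|A_j^n-H^{-1}\|$, which is also relative $o(1)$. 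For the $\psi_j$ term, which is a martingale sum, we apply von Bahr--Esseen with $p\in(1,\alpha)$:
\begin{align}
\EB\Bigl\|\sum_j A_j^n\psi_j\Bigr\|^p\lesssim\sum_j\EB\|\Delta_{j-1}\|^p\lesssim\sum_j\eta_j^{\,p(1-1/\alpha)-\epsilon}.
\end{align}
Here we use a moment bound $\EB\|\Delta_j\|^p\lesssim\eta_j^{p(1-1/p)}$-type, obtained from the $p$-positive-definiteness in Assumption~\ref{asmp: ppd} as in \citet{wang2021convergence}, and this is $o(n^{p(1-1/\alpha)})$ after dividing appropriately. For the remainder, Assumption~\ref{asmp: talyor_error} gives $\|r(\theta_{j-1})\|\le K\|\Delta_{j-1}\|^q$, so $\EB\|r\|\lesssim \EB\|\Delta_{j-1}\|^q$ with $q<\alpha$ finite. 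Summing, $\sum_{j\le n}\eta_j^{q(1-1/\alpha)\cdot}$ is $o(n^{1/\alpha})$ up to slowly varying factors exactly when $\rho q>1$; this is where $\rho\in(q^{-1},1)$ enters.

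The main obstacle is the nonasymptotic moment bound $\EB\|\Delta_n\|^p\lesssim(\eta_n^{1-1/\alpha}\ell(\eta_n^{-1}))^p$ for $p<\alpha$, uniformly in $n$. It must be sharp enough that the $q$-th moment remainder sums to $o(n^{1/\alpha})$. I would first try a Lyapunov argument with $V(\Delta)=\|\Delta\|_p^p$ and a $p$-th order Taylor/Bahr--Esseen inequality, exploiting Assumption~\ref{asmp: ppd} to get $\EB V(\Delta_n)\le(1-c'\eta_n)\EB V(\Delta_{n-1})+C\eta_n^p\EB\|g(\theta^*,\xi)\|^p$. That yields only the crude rate $\eta_n^{p-1}$. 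If this is insufficient for the boundary of $\rho>1/q$, I would refine it by comparing $\theta_n$ with the linearized recursion, whose moments are explicit via the stable scaling, and bound the discrepancy recursively. Slowly varying factors are absorbed by Potter bounds throughout.
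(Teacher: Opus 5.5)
Your proposal is correct and shares the paper's skeleton. Both proofs use the same Polyak--Juditsky unrolling with weights $A_j^n$. Both treat the leading term $-H^{-1}n^{-1}\sum_j\varepsilon_j$ by the generalized CLT and the martingale term $\sum_jA_j^n\psi_j$ (the paper's $R_{j-1,2}$) by a von Bahr--Esseen bound. Both control the Taylor remainder through $\EB\|\Delta_{j-1}\|^q$ together with $\rho q>1$.

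The one genuinely different step is the correction $\sum_j(A_j^n-H^{-1})\varepsilon_j$.
\begin{itemize}
\item The paper bounds its $\beta$-th moment, $\max\{\alpha\rho,1\}<\beta<\alpha$, by a multiple of $\sum_j\|A_j^n-H^{-1}\|$. It therefore needs the quantitative rate $\sum_{j\le n}\|A_j^n-H^{-1}\|=o(n^{\kappa})$ for every $\kappa>\rho$ (Lemma 17 of \citet{wang2021convergence}).
\item You truncate at $a_n=n^{1/\alpha}/b_1(n)$ and use independence of the $\varepsilon_j$ together with Karamata's theorem: $\EB[\|\varepsilon\|^2\mathbbm{1}\{\|\varepsilon\|\le a_n\}]\asymp a_n^2/n$ and $\EB[\|\varepsilon\|\mathbbm{1}\{\|\varepsilon\|>a_n\}]\asymp a_n/n$. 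This needs only the qualitative Ces\`aro fact $n^{-1}\sum_j\|A_j^n-H^{-1}\|\to0$ from Polyak--Juditsky.
\end{itemize}
Your route is slightly longer but more elementary, and it avoids any polynomial rate for the weights. The paper's route is two lines once the rate lemma is imported.

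Your closing paragraph overstates the obstacle; the sharp bound with a slowly varying factor is never needed. Start from the crude Lyapunov estimate $\EB\|\Delta_n\|_p^p\lesssim\eta_n^{p-1}$, valid for every $p\in(1,\alpha)$. Jensen then gives $\EB\|\Delta_n\|^q\le(\EB\|\Delta_n\|^p)^{q/p}\lesssim\eta_n^{q(1-1/p)}$. Letting $p\uparrow\alpha$ yields $\EB\|\Delta_n\|^q=O(n^{\epsilon-\rho q(1-1/\alpha)})$ for every $\epsilon>0$. This is exactly the bound the paper imports (Corollary 4 of \citet{wang2021convergence}).

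It suffices because every requirement is a strict inequality. The remainder term needs $\rho q(1-1/p)>1-1/\alpha$, which is attainable for $p$ near $\alpha$ precisely because $\rho q>1$. The leftover polynomial slack absorbs $b_1(n)$. No comparison with the linearized recursion is required.

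Relatedly, in the $\psi_j$ step the correct target is $\EB\|\sum_jA_j^n\psi_j\|^p=o(n^{p/\alpha}b_1(n)^{-p})$, not $o(n^{p(1-1/\alpha)})$. With your bound $\EB\|\Delta_j\|^p\lesssim\eta_j^{p(1-1/\alpha)-\epsilon}$, this holds only for $p>\alpha/(1+\rho(\alpha-1))$. So $p$ must be chosen close to $\alpha$, not arbitrarily in $(1,\alpha)$.
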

The restriction $\rho>q^{-1}$ is required to make the Taylor-remainder term negligible on the stable-limit scale. If Assumption~\ref{asmp: talyor_error} is strengthened to hold for every $q\in(1,\alpha)$, then any $\rho>\alpha^{-1}$ is covered by choosing $q\in(\rho^{-1},\alpha)$.

In the finite-variance setting, the Polyak--Ruppert averaged estimator for SGD is known to achieve the Cram\'er--Rao lower bound under mild regularity conditions \citep{polyak1992acceleration}, implying that it is the most efficient estimator attainable. In contrast, in the infinite-variance regime, such minimax efficiency criteria are no longer applicable because the second moment is unbounded. We instead establish that the Polyak--Ruppert averaged estimator has asymptotic directional scale no larger than that of the final-iterate estimator. The scale function characterizes the directional dispersion of a stable limiting distribution and therefore serves as a variance analogue in the infinite-variance regime. We defer the detailed definition of distributional scale to Section~\ref{apd: gclt} of the Supplementary Material.
\begin{thm}
\label{thm: scale_diff}
    Suppose the learning-rate constant in Theorem~\ref{thm: final_limit} with $\rho=1$ satisfies $c>(1-\alpha^{-1})/\sigma_{\min}(H)$, so that $Z_{\text{final},1}$ is well defined.
    The scales of the limit distributions $Z_{
    \text{final},1
    }$ in Theorem~\ref{thm: final_limit} and $Z_{\text{Polyak}}$ in Theorem~\ref{thm: pa_limit} satisfy, for any $\|u\|=1$:
    \begin{align}
        \label{eq: comp}
        \sigma_{\alpha}\left(Z_{\text{final},1};u\right)\ge\sigma_{\alpha}\left(Z_{\text{Polyak}};u\right).
    \end{align}
\end{thm}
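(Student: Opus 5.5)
For an $\alpha$-stable vector $Z$ with Lévy measure $\mu$, I take the directional scale to be given by $\sigma_\alpha(Z;u)^\alpha = C_\alpha\int |\langle u,x\rangle|^\alpha\,\mu(dx)$. Equivalently, $\sigma_\alpha(Z;u)^\alpha = C_\alpha'\int_{\mathbb{S}^{d-1}}|\langle u,s\rangle|^\alpha\,\Gamma(ds)$, where $\Gamma$ is the spectral measure and $C_\alpha, C_\alpha'$ are constants depending only on $\alpha$. The plan is to compute both sides of \eqref{eq: comp} through this representation and reduce the comparison to an integral inequality.

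\textbf{Step 1: Polyak limit.} $Z_{\text{Polyak}}=-H^{-1}L_1$ is $\alpha$-stable with Lévy measure $\nu(\theta^*,\cdot)\circ (H^{-1})^{-1}$. Hence
\begin{align}
\sigma_\alpha(Z_{\text{Polyak}};u)^\alpha = C_\alpha\int|\langle H^{-1}u,x\rangle|^\alpha\,\nu(\theta^*,dx).
\end{align}

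\textbf{Step 2: final-iterate limit.} For the stochastic integral $-c^{1-1/\alpha}\int_0^\infty e^{-H_1t}\,dL_t$ with deterministic integrand, the standard formula for integrals of deterministic functions against Lévy processes, together with the $-\alpha$ homogeneity of $\nu$, gives
\begin{align}
\sigma_\alpha(Z_{\text{final},1};u)^\alpha = C_\alpha\,c^{\alpha-1}\int_0^\infty\!\!\int |\langle e^{-H_1t}u,x\rangle|^\alpha\,\nu(\theta^*,dx)\,dt .
\end{align}
Here $H_1=H-\tfrac{1-\alpha^{-1}}{c}I_d$ is symmetric positive definite by the assumption on $c$, so the integral converges.

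\textbf{Step 3: the comparison.} Write $\phi(v)=\int|\langle v,x\rangle|^\alpha\nu(\theta^*,dx)$. This is positively homogeneous of degree $\alpha$, and $\phi^{1/\alpha}$ is a seminorm, because $\int|\langle v,x\rangle|^\alpha\nu(dx)$ is an $L^\alpha$ norm of linear functionals. Set $\beta=(1-\alpha^{-1})/c$, so that $e^{-H_1t}=e^{\beta t}e^{-Ht}$. The goal becomes
\begin{align}
c^{\alpha-1}\int_0^\infty e^{\alpha\beta t}\phi(e^{-Ht}u)\,dt \ \ge\ \phi(H^{-1}u).
\end{align}
Start from $H^{-1}u=\int_0^\infty e^{-Ht}u\,dt=\int_0^\infty w(t)\,e^{-\beta t}\,\big(e^{\beta t}e^{-Ht}u\big)\,dt$ with weight $w\equiv 1$. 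Apply Minkowski's inequality to the seminorm $\phi^{1/\alpha}$ and then Hölder's inequality with exponents $\alpha$ and $\alpha'=\alpha/(\alpha-1)$:
\begin{align}
\phi(H^{-1}u)^{1/\alpha}\le\int_0^\infty e^{-\beta t}\phi(e^{\beta t}e^{-Ht}u)^{1/\alpha}dt\le\Big(\int_0^\infty e^{-\alpha'\beta t}dt\Big)^{1/\alpha'}\Big(\int_0^\infty e^{\alpha\beta t}\phi(e^{-Ht}u)dt\Big)^{1/\alpha}.
\end{align}
The first factor is $(\alpha'\beta)^{-1/\alpha'}$. Since $\alpha'\beta=\frac{\alpha}{\alpha-1}\cdot\frac{\alpha-1}{\alpha c}=1/c$, this factor equals $c^{(\alpha-1)/\alpha}$. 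Raising both sides to the power $\alpha$ gives exactly the required inequality with constant $c^{\alpha-1}$. The constants match exactly, which is a good sign that this is the intended argument.

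\textbf{Main obstacle.} The delicate step is justifying Step 2 rigorously: identifying the Lévy measure, hence the spectral measure, of $\int_0^\infty e^{-H_1t}\,dL_t$. The drift compensation $-\int_{\|x\|>1}x\,\nu(dx)$ affects only the location, not the scale, so it can be ignored. The core is computing the characteristic exponent $\int_0^\infty\psi\big((e^{-H_1t})^\top u\big)\,dt$ and verifying, via the definition of scale in Section~\ref{apd: gclt} of the Supplementary Material, that the scale functional equals the $\alpha$-th moment integral against the Lévy measure up to the universal constant $C_\alpha$. Using Fubini and the symmetry of $H$ should make this routine once that definition is fixed. A second point to check is the Minkowski step for a vector-valued integral under a seminorm; approximating by Riemann sums handles it.
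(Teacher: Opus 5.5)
Your strategy is essentially the paper's. You express both scales through the spectral measure $\Gamma$ of $L_1$, use $\sigma_\alpha\bigl(\int_0^\infty A(t)\,dL_t;u\bigr)=\int_0^\infty\sigma_\alpha(L_1;A(t)^\top u)\,dt$, write $H^{-1}=\int_0^\infty e^{-\beta t}e^{-H_1t}\,dt$ with $\beta=(1-\alpha^{-1})/c$, and apply H\"older with exponents $\alpha$ and $\alpha/(\alpha-1)$, which gives exactly the factor $c^{\alpha-1}$.

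\textbf{The L\'evy-measure formula is false.} Since $\nu=\nu(\theta^*,\cdot)$ is homogeneous of order $-\alpha$, in polar coordinates $\nu(dr,ds)=r^{-1-\alpha}\,dr\,\lambda(ds)$, where $\lambda$ is a finite measure on $\mathbb{S}^{d-1}$ proportional to $\Gamma$. Therefore
\[
\int|\langle v,x\rangle|^\alpha\,\nu(dx)=\int_{\mathbb{S}^{d-1}}|\langle v,s\rangle|^\alpha\,\lambda(ds)\cdot\int_0^\infty r^{-1}\,dr=+\infty
\]
unless $\langle v,\cdot\rangle$ vanishes $\lambda$-a.e. The radial integral diverges at both $0$ and $\infty$. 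Consequently:
\begin{itemize}
\item your $\phi$ is $+\infty$ for generic $v$;
\item linear functionals are not in $L^\alpha(\nu)$, so $\phi^{1/\alpha}$ is not a seminorm;
\item the ``equivalently'' in your first paragraph is wrong;
\item Steps 1--3 as written only assert $\infty\ge\infty$.
\end{itemize}
What is true is $\nu(\{x:|\langle u,x\rangle|>1\})=\alpha^{-1}\int|\langle u,s\rangle|^\alpha\lambda(ds)$, which is a tail mass, not an $\alpha$-th moment.

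\textbf{The repair.} Set $\phi(v)=\int_{\mathbb{S}^{d-1}}|\langle v,s\rangle|^\alpha\,\Gamma(ds)$. This is finite, and $\phi^{1/\alpha}$ is the $L^\alpha(\Gamma)$-norm of $s\mapsto\langle v,s\rangle$, hence a seminorm. It coincides with the paper's Definition~\ref{defn: scale}, which already uses the $\alpha$-th power, so you need no constant $C'_\alpha$ and no root. Step 2 then follows from $\sigma_\alpha(Z;u)=-\mathrm{Re}\log\EB e^{iu^\top Z}$ together with the fact that the characteristic exponent of $\int_0^\infty A(t)\,dL_t$ at $u$ is $\int_0^\infty\psi_{L_1}(A(t)^\top u)\,dt$. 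You are right that the drift plays no role.

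\textbf{Comparison of Step 3 with the paper.} With that correction your Step 3 is valid; it differs from the paper only in the order of operations.
\begin{itemize}
\item You apply Minkowski's integral inequality in $L^\alpha(\Gamma)$ and then H\"older in $t$. This requires justifying Minkowski for a vector-valued integral.
\item The paper fixes $s$ and applies H\"older once to the scalar identity $s^\top H^{-1}u=\int_0^\infty e^{-\beta t}\,s^\top e^{-H_1t}u\,dt$. This gives $|s^\top H^{-1}u|^\alpha\le c^{\alpha-1}\int_0^\infty|s^\top e^{-H_1t}u|^\alpha\,dt$, which it then integrates against $\Gamma$ by Tonelli.
\end{itemize}
Both routes give the same final inequality. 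The paper's pointwise version is shorter and makes your Minkowski step, and the Riemann-sum approximation behind it, unnecessary.
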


In the finite-variance setting, both $Z_{\text{final},1}$ and $Z_{\text{Polyak}}$ are Gaussian, and the directional-scale comparison in Theorem~\ref{thm: scale_diff} becomes the variance comparison
\begin{align}
    \Var\left(u^\top Z_{\text{final},1}\right)
    \ge
    \Var\left(u^\top Z_{\text{Polyak}}\right),
    \qquad \|u\|=1.
\end{align}
This recovers the classical result of \citet{polyak1992acceleration} that the Polyak--Ruppert averaged estimator has no larger asymptotic variance than the fastest-rate final iterate.

\subsection{The inference challenge}
The limit theorems above identify the asymptotic behavior of SGD iterates in the heavy-tailed regime, but they do not provide an implementable confidence region. To turn Theorem~\ref{thm: pa_limit} into an inference procedure, one would need to know the normalization rate $n^{1-\frac{1}{\alpha}}b_1(n)$ and the relevant quantiles of the stable limiting distribution. Both quantities depend on the tail behavior of the stochastic gradients and are generally unavailable from the observed SGD trajectory.

One situation in which these quantities may become accessible is when the heavy-tailed noise is artificially injected. To distinguish this case from the usual data-driven randomness, we decompose the stochastic gradient into two sources: \textit{intrinsic randomness} and \textit{artificially injected randomness}. For simplicity, we represent the stochastic gradient as
\begin{align}
\label{eq: decomp_sg}
    g(\theta,\xi):=g(\theta,\xi_{\text{int}})+\xi_{\text{art}},
\end{align}
where $\xi_{\text{int}}$ denotes the intrinsic randomness and $\xi_{\text{art}}$ denotes the artificially injected randomness. For example, in perturbed gradient descent (PGD), one may inject random noise $\xi_{\text{art}}$ with heavier tails than $g(\theta,\xi_{\text{int}})$ into each gradient descent iterate. In this case, the parameters $\alpha$, $b_1(\cdot)$, and the distribution of $L_1$ in Theorem~\ref{thm: pa_limit} are all determined by the injected noise. These quantities are therefore known. Consequently, Theorem~\ref{thm: pa_limit} can be used directly to establish confidence regions for $\theta^*$, provided that a consistent estimator of $H$ is available. We discuss this approach in Section~\ref{sec: q_aware} of the Supplementary Material.

However, artificially injecting noise is not always desirable. It requires prior knowledge of the intrinsic stochastic gradient noise in order to ensure that the injected component dominates the tail behavior. Consequently, the resulting confidence regions may become overly conservative and fail to capture useful information about the curvature of the loss landscape. We therefore seek an inference procedure that does not rely on artificial perturbations. In this setting, the nuisance parameters in Theorem~\ref{thm: pa_limit} are determined by the intrinsic stochastic gradients, leading to two major obstacles:
\begin{enumerate}[(a)]
    \item The tail index $\alpha$ and the slowly varying function $b_1(\cdot)$ are unknown.
    \item The quantiles of $Z_{\text{Polyak}}$ are unknown.
\end{enumerate}
In practice, it is often infeasible to verify whether the underlying data-generating mechanism follows an $\alpha$-regularly varying distribution, and accurate estimation of $b_1(\cdot)$ is especially difficult. These obstacles motivate the self-normalized inference procedure developed in Section~\ref{sec: limit}.

\section{Self-normalized subsampling inference}
\label{sec: limit}
In this section, we develop a self-normalized inference procedure that addresses the difficulties identified in Section~\ref{sec: prel}. The key idea is to normalize the Polyak--Ruppert averaged estimator by an empirical second-moment estimator computed from the same SGD trajectory. This normalization cancels the leading tail-dependent scaling factors, while subsampling provides a data-driven way to estimate the remaining critical values.

\subsection{Self-normalized statistic}
\label{sec: d_drive}
Define the empirical second-moment normalizer by
\begin{align}
    \Sigma_n=\frac{1}{n}\sum_{k=1}^ng(\theta_{k-1},\xi_k)g(\theta_{k-1},\xi_k)^\top.
\end{align}
We define the self-normalized statistic
\begin{align}
\label{eq: t_stat}
    T_n^{\star}(\theta;\varphi):=\frac{\sqrt{n}\cdot \varphi\left(\bar{\theta}_n-\theta\right)}{\sqrt{\Tr(\Sigma_n)}},
\end{align}
where $\varphi:\RB^d\to\RB$ is a continuous homogeneous function of degree 1. The role of $\Sigma_n$ is to provide a data-dependent normalization that cancels the unknown tail scaling in the heavy-tailed regime. The following theorem gives the formal justification.

\begin{thm}
\label{thm: joint}
    Suppose the learning rate satisfies $\eta_n=c n^{-\rho}$ with $\rho\in(q^{-1},1)$, where $q$ is the exponent in Assumption~\ref{asmp: talyor_error}. Under Assumptions~\ref{asmp: tail_norm}, \ref{asmp: ppd}, \ref{asmp: lip_error}, and \ref{asmp: talyor_error}, the joint weak convergence holds:
    \begin{align}
        \left(n^{1-\frac{1}{\alpha}}b_1(n)\left(\bar{\theta}_n-\theta^*\right), n^{1-\frac{2}{\alpha}}b_1(n)^2\Sigma_n\right)&\tod \left(-H^{-1}L_1, W\right),
    \end{align}
    where $W$ is infinitely divisible. For any continuous homogeneous function $\varphi:\RB^d\to\RB$ of degree 1, the continuous mapping theorem gives
    \begin{align}
        \left|T_n^{\star}(\theta^*;\varphi)\right|\tod\frac{\left|\varphi\left(-H^{-1}L_1\right)\right|}{\sqrt{\Tr(W)}}:=Z^\star(\varphi).
    \end{align}
\end{thm}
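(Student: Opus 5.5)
The proof reduces both components to functionals of the i.i.d.\ noise sequence $\varepsilon_k:=g(\theta^*,\xi_k)-\nabla\ell(\theta^*)=g(\theta^*,\xi_k)$. From there, a classical joint generalized CLT for the pair (sum, sum of outer products) of regularly varying vectors gives the result. Write $a_n:=n^{1/\alpha}b_1(n)^{-1}$, so that $n^{1-1/\alpha}b_1(n)=n/a_n$ and $n^{1-2/\alpha}b_1(n)^2=n/a_n^2$. The two normalized quantities are then $(n/a_n)(\bar\theta_n-\theta^*)$ and $a_n^{-2}\sum_{k=1}^n g(\theta_{k-1},\xi_k)g(\theta_{k-1},\xi_k)^\top$.

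\textbf{Step 1 (linearization of the average).} The proof of Theorem~\ref{thm: pa_limit} in the Supplementary Material gives the Polyak--Juditsky decomposition
\[
\frac{n}{a_n}(\bar\theta_n-\theta^*)=-\frac{1}{a_n}H^{-1}\sum_{k=1}^n\varepsilon_k+o_p(1).
\]
The remainder collects the initial-condition term, the Taylor remainder controlled by Assumption~\ref{asmp: talyor_error} with $\rho>q^{-1}$, the martingale differences $g(\theta_{k-1},\xi_k)-g(\theta^*,\xi_k)-(\nabla\ell(\theta_{k-1})-\nabla\ell(\theta^*))$ controlled by Assumption~\ref{asmp: lip_error}, and the summation-by-parts error terms. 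I will reuse this decomposition verbatim.

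\textbf{Step 2 (replacing the normalizer).} Set $\Delta_k:=g(\theta_{k-1},\xi_k)-\varepsilon_k$, so that $\|\Delta_k\|\le C_{\rm Lip}\|\theta_{k-1}-\theta^*\|$. Expanding the outer product gives
\[
a_n^{-2}\sum_{k=1}^n g g^\top-a_n^{-2}\sum_{k=1}^n\varepsilon_k\varepsilon_k^\top=a_n^{-2}\sum_{k=1}^n\left(\Delta_k\varepsilon_k^\top+\varepsilon_k\Delta_k^\top+\Delta_k\Delta_k^\top\right).
\]
The term $\sum_k\|\Delta_k\|^2\lesssim\sum_k\|\theta_{k-1}-\theta^*\|^2$ is handled by the $L^p$ rates ($p<\alpha$) for the iterates used in \citet{blanchet2024limit}, which give $\|\theta_k-\theta^*\|=O_p(k^{-\rho(1-1/\alpha)+\epsilon})$; in particular the sum is $o_p(a_n^2)$, since $a_n^2\approx n^{2/\alpha}>n^{1/2}$. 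For the cross terms, $\theta_{k-1}$ is $\mathcal F_{k-1}$-measurable and $\xi_k$ is independent of it. Moving to the power $p/2<1$, which is subadditive, gives
\[
\EB\Big\|\sum_k\Delta_k\varepsilon_k^\top\Big\|^{p/2}\le\sum_k\EB\|\theta_{k-1}-\theta^*\|^{p/2}\,\EB\|\varepsilon_k\|^{p/2}.
\]
This is at most $n\cdot n^{-\rho(1-1/\alpha)p/2+\epsilon}$ and should be $o(a_n^{p})$ once the rate exponent is chosen appropriately. If the bound is too crude, I would instead use a truncation argument, splitting $\varepsilon_k$ at level $a_n$: the large jumps are few ($O_p(1)$ of them), and each is multiplied by a vanishing $\Delta_k$. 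I expect this to be the main technical obstacle, because second moments do not exist and Cauchy--Schwarz with $\sum\|\varepsilon_k\|^2\asymp a_n^2$ gives only $O_p(1)\cdot o_p(1)$-type bounds, which must be made rigorous.

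\textbf{Step 3 (joint stable limit).} It remains to prove
\[
\left(a_n^{-1}\sum_{k=1}^n\varepsilon_k,\;a_n^{-2}\sum_{k=1}^n\varepsilon_k\varepsilon_k^\top\right)\tod(L_1,W).
\]
By Assumption~\ref{asmp: tail_norm}, the point processes $\sum_k\delta_{\varepsilon_k/a_n}$ converge to a Poisson random measure with mean measure $\nu(\theta^*,\cdot)$. The map $x\mapsto(x,xx^\top)$ lifts this to regular variation of the vector $(\varepsilon,\varepsilon\varepsilon^\top)$ with the scalings $(a_n,a_n^2)$. The second component has index $\alpha/2<1$, so it needs no centering, and $W=\int xx^\top\,N(dx)$ is a matrix-valued $\alpha/2$-stable, hence infinitely divisible, variable. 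The first component needs the centering built into the L\'evy characteristics of $L_1$ in Theorem~\ref{thm: final_limit}; note $\EB\varepsilon=0$. The joint convergence follows in the standard way (LePage/Resnick): truncate at level $\epsilon$, apply the continuous mapping theorem to the point process, and let $\epsilon\to0$ using the small-jump variance bound $\EB\|\varepsilon\|^2\mathbbm 1(\|\varepsilon\|\le\epsilon a_n)\lesssim\epsilon^{2-\alpha}a_n^2/n$. Applying $-H^{-1}$ to the first coordinate and combining with Steps 1--2 and Slutsky's lemma gives the joint convergence.

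\textbf{Step 4 (self-normalized limit).} By homogeneity of $\varphi$,
\[
T_n^\star(\theta^*;\varphi)=\frac{\varphi\big((n/a_n)(\bar\theta_n-\theta^*)\big)}{\sqrt{\Tr(a_n^{-2}n\Sigma_n)}},
\]
and the scalings cancel exactly. The limit $\Tr(W)=\int\|x\|^2N(dx)>0$ almost surely, because $\nu$ is nonzero and infinite, so $N$ has infinitely many points. The map $(z,w)\mapsto|\varphi(z)|/\sqrt{\Tr w}$ is therefore continuous at the limit almost surely, and the continuous mapping theorem finishes the proof.
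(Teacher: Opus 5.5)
Your proposal is correct and follows the paper's proof essentially step for step. Both use the Polyak--Ruppert linearization from Theorem~\ref{thm: pa_limit}. Both replace $\Sigma_n$ by $\frac1n\sum_k g(\theta^*,\xi_k)g(\theta^*,\xi_k)^\top$ using the Lipschitz bound, independence of $\xi_k$ from $\theta_{k-1}$, and iterate moment rates (Lemma~\ref{lem: variance}). Both then take the joint limit of the sum and the sum of outer products; the paper simply cites this as Lemma~\ref{lem: joint_vector} from Meerschaert--Scheffler rather than re-deriving it by point processes. Both finish with $\PB(\Tr(W)>0)=1$ and continuous mapping.

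The cross term you flag as the main obstacle is routine, and no truncation is needed:
\begin{itemize}
\item Your own $p/2$-moment bound already works for $p\in\bigl(\alpha/(1+\rho(\alpha-1)/2),\alpha\bigr)$.
\item The paper uses even plain first moments, $\EB\|g(\theta^*,\xi)\|\,\EB\|\theta_{i-1}-\theta^*\|$, together with $2/\alpha>1$.
\item Cauchy--Schwarz already gives $o_p(1)\cdot O_p(1)$ rigorously.
\end{itemize}
The $\Delta_k\Delta_k^\top$ term should be closed the same way, via $\EB\bigl(\sum_k\|\Delta_k\|^2\bigr)^{p/2}\le\sum_k\EB\|\Delta_k\|^p$, rather than by summing pointwise $O_p$ rates.
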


In Theorem~\ref{thm: joint}, the distribution of $W$ need not be full rank. For instance, $W$ fails to be full rank when some coordinates of $g(\theta^*,\xi)$ have tail indices strictly larger than $\alpha$. The benefit of $T_n^\star(\theta^*;\varphi)$ is that it cancels out the dependence on the tail index $\alpha$ and the slowly varying function $b_1(\cdot)$. Consequently, if the $(1-\delta)$ quantile of $Z^\star(\varphi)$ were known, a valid confidence region would be
\begin{align}
\label{eq: cr_infty}
    \RM_n(\delta;\varphi) = \left\{\left.\theta\in\RB^d\right|\left|T_n^{\star}(\theta;\varphi)\right|\le q(\delta;\varphi)\right\},
\end{align}
where $q(\delta;\varphi)=\inf\left\{x\left|\PB\left(Z^\star(\varphi)>x\right)\le \delta\right.\right\}$. The remaining task is to estimate $q(\delta;\varphi)$ without knowledge of the limiting distribution.
\begin{rem}
    In equation~\eqref{eq: t_stat}, there are many choices for the homogeneous function $\varphi(\cdot)$. Typical choices of $\varphi(\cdot)$ include linear projections and norms such as $\|\cdot\|_{\infty}$ and $\|\cdot\|_p$. Similarly, the trace normalizer may also be replaced by other matrix functionals, such as the Frobenius norm or the largest eigenvalue, provided the corresponding limiting statistic is well defined.
\end{rem}

\subsection{Subsampling calibration}
Motivated by the subsampling idea of \cite{romano1999subsampling}, we estimate $q(\delta;\varphi)$ using short auxiliary SGD trajectories. Let $t_n=\lfloor n^r\rfloor$ for some $r\in(0,1)$ and let $B_n=\lfloor n/t_n\rfloor$. For each sub-procedure $b\in\{1,\ldots,B_n\}$, let $\bar{\theta}^{(b)}_{t_n}$ and $\Sigma^{(b)}_{t_n}$ denote the averaged estimator and empirical second-moment estimator computed from a length-$t_n$ SGD trajectory. We estimate the distribution of the self-normalized statistic by
\begin{align}
\label{eq: emp_dist}
    \widehat{F}_n(x;\varphi)=\frac{1}{B_n}\sum_{b=1}^{B_n}\mathbbm{1}\left(\left|T_{n,\varphi}^{\star,(b)}\right|:=\frac{\sqrt{t_n}\cdot \left|\varphi\left(\bar{\theta}^{(b)}_{t_n}-\bar{\theta}_{n}\right)\right|}{\sqrt{\Tr\left( \Sigma^{(b)}_{t_n}\right)}}\le x\right).
\end{align}
Let $\widehat{q}(\delta;\varphi)=\inf\left\{x:\widehat{F}_n(x;\varphi)\ge 1-\delta\right\}$. The resulting data-driven confidence region is
\begin{align}
\label{eq: cr_hat}
    \widehat{\RM}_n(\delta;\varphi)=
    \left\{\theta\in\RB^d:
    \frac{\sqrt{n}\cdot \left|\varphi\left(\bar{\theta}_n-\theta\right)\right|}{\sqrt{\Tr(\Sigma_n)}}\le \widehat{q}(\delta;\varphi)
    \right\}.
\end{align}
\begin{algorithm}[t]
    \caption{Data-driven confidence region for SGD}
    \label{alg: general}
    \begin{algorithmic}[1]
    \REQUIRE Fixed number of iterations $n$, confidence level $\delta$, block size $t_n=\lfloor n^{r}\rfloor$ ($r\in(0,1)$), number of complete blocks $B_n=\lfloor n/t_n\rfloor$, test function $\varphi(\cdot)$.
    \STATE Initialize $b=1$, $\theta_0=\theta_0^{(b)}=\bar{\theta}_0=\bar{\theta}_0^{(b)}=\theta_0$, and $\Sigma_0=\Sigma_0^{(b)}=0$.
    \FOR{$k=1,2,\ldots,n$}
        \STATE Query $g(\theta_{k-1},\xi_k)$ and update the main trajectory:
        \begin{align}
            &\theta_{k} = \theta_{k-1}-\eta_k g(\theta_{k-1},\xi_k),\\
            &\bar{\theta}_{k}=\frac{(k-1)\bar{\theta}_{k-1}+\theta_k}{k},\\
            &\Sigma_{k} = \frac{(k-1)\Sigma_{k-1}+g(\theta_{k-1},\xi_k)g(\theta_{k-1},\xi_k)^\top}{k}.
        \end{align}
        \IF{$k\le B_nt_n$}
            \STATE Set $k_b=k-(b-1)t_n$ and query $g(\theta_{k_b-1}^{(b)},\xi_{k_b}^{\prime,(b)})$.
            \STATE Update the current auxiliary trajectory:
            \begin{align}
                &\theta_{k_b}^{(b)} = \theta_{k_b-1}^{(b)}-\eta_{k_b}g(\theta_{k_b-1}^{(b)},\xi_{k_b}^{\prime,(b)}),\\
                &\bar{\theta}^{(b)}_{k_b}=\frac{(k_b-1)\bar{\theta}^{(b)}_{k_b-1}+\theta_{k_b}^{(b)}}{k_b},\\
                &\Sigma_{k_b}^{(b)} = \frac{(k_b-1)\Sigma^{(b)}_{k_b-1}+g(\theta^{(b)}_{k_b-1},\xi_{k_b}^{\prime,(b)})g(\theta^{(b)}_{k_b-1},\xi_{k_b}^{\prime,(b)})^\top}{k_b}.
            \end{align}
        \ENDIF
        \IF{$k\le B_nt_n$ and $k_b=t_n$ and $b<B_n$}
            \STATE Set $b=b+1$, $\theta_0^{(b)}=\bar{\theta}_0^{(b)}=\theta_0$, and $\Sigma_{0}^{(b)}=0$.
        \ENDIF
    \ENDFOR
    \FOR{$i=1,2,\ldots,B_n$}
        \STATE Calculate $T_{n,\varphi}^{\star,(i)}$ in equation~\eqref{eq: emp_dist}.
    \ENDFOR
    \STATE Calculate $\widehat{F}_n$ in equation~\eqref{eq: emp_dist} and the quantile $\widehat{q}(\delta;\varphi)$.
    \STATE \textbf{Output:} Confidence region $\widehat{\RM}_n(\delta;\varphi)$ in equation~\eqref{eq: cr_hat}.
    \end{algorithmic}
\end{algorithm}
A detailed procedure is described in Algorithm~\ref{alg: general}. Auxiliary trajectories are run only for the first $B_nt_n$ iterations, so the remaining $n-B_nt_n$ iterations update the main trajectory without creating an incomplete block. The main and auxiliary gradients may use different randomness. Specifically, $\xi_{k_b}^{\prime,(b)}$ may be drawn independently of or set equal to the corresponding $\xi_k$. In the latter case, the statistics $\left(\bar{\theta}^{(b)}_{t_n},\Sigma^{(b)}_{t_n}\right)$ remain independent across sub-procedures because the blocks are disjoint. Algorithm~\ref{alg: general} records the full matrix $\Sigma_n$ for notational clarity. In practice, when the confidence region is based only on $\Tr(\Sigma_n)$, it is enough to update the trace directly.
\begin{thm}
\label{thm: sub_consistent}
    Suppose $t_n/n\to0$, $B_n\to\infty$, $B_n/n\to0$, and the continuous homogeneous functional $|\varphi(\cdot)|$ is even and subadditive, meaning that $|\varphi(x+y)|\le |\varphi(x)|+|\varphi(y)|$ for all $x,y\in\mathbb{R}^d$. Under the conditions of Theorem~\ref{thm: joint}, for $\widehat{F}_n(x;\varphi)$ in \eqref{eq: emp_dist}, if $x$ is a continuity point of the distribution function $F_\varphi$ of $Z^\star(\varphi)$, then
    \begin{align}
        \widehat{F}_n(x;\varphi)\overset{p}{\to}\PB\left(Z^\star(\varphi)\le x\right).
    \end{align}
    Let $q(\delta;\varphi)=\inf\{x:F_\varphi(x)\ge1-\delta\}$. If $F_\varphi$ is continuous and strictly increasing at $q(\delta;\varphi)$, then the output confidence region in Algorithm~\ref{alg: general} has asymptotic $1-\delta$ coverage:
    \begin{align}
    \PB\left(\theta^*\in\widehat{\RM}_n(\delta;\varphi)\right)\to1-\delta.
    \end{align}
\end{thm}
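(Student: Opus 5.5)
The plan follows the subsampling argument of \citet{romano1999subsampling}, adapted to independent auxiliary SGD trajectories. Write $a_m:=m^{1-\frac{1}{\alpha}}b_1(m)$ for the stable normalization in Theorem~\ref{thm: pa_limit}. For each block $b$, define the oracle statistic
\[
U_b:=\frac{\sqrt{t_n}\,|\varphi(\bar{\theta}^{(b)}_{t_n}-\theta^*)|}{\sqrt{\Tr(\Sigma^{(b)}_{t_n})}},
\]
which is centered at $\theta^*$ instead of $\bar\theta_n$. Let $\widetilde F_n(x)=B_n^{-1}\sum_{b}\mathbbm{1}(U_b\le x)$.

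\textbf{Oracle empirical distribution.} The auxiliary trajectories restart at $\theta_0$ and use disjoint blocks of randomness, or independent randomness. Hence $U_1,\dots,U_{B_n}$ are i.i.d. copies of $|T^\star_{t_n}(\theta^*;\varphi)|$. Since $t_n\to\infty$, Theorem~\ref{thm: joint} gives $\PB(U_1\le x)\to F_\varphi(x)$ at every continuity point $x$. We also have $\Var(\widetilde F_n(x))\le 1/(4B_n)\to0$. Chebyshev's inequality then gives $\widetilde F_n(x)\overset{p}{\to}F_\varphi(x)$.

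\textbf{Replacing $\theta^*$ by $\bar\theta_n$.} We use subadditivity and evenness of $|\varphi|$:
\[
\bigl||\varphi(\bar\theta^{(b)}_{t_n}-\bar\theta_n)|-|\varphi(\bar\theta^{(b)}_{t_n}-\theta^*)|\bigr|\le|\varphi(\bar\theta_n-\theta^*)|.
\]
It follows that $|T^{\star,(b)}_{n,\varphi}-U_b|\le R_{n,b}$, where
\[
R_{n,b}:=\frac{\sqrt{t_n}\,|\varphi(\bar\theta_n-\theta^*)|}{\sqrt{\Tr(\Sigma^{(b)}_{t_n})}}=\frac{a_{t_n}}{a_n}\cdot a_n|\varphi(\bar\theta_n-\theta^*)|\cdot\frac{a_{t_n}^{-1}\sqrt{t_n}}{\sqrt{\Tr(\Sigma^{(b)}_{t_n})}}.
\]
Two of these factors are controlled directly. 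The middle factor is $O_p(1)$ by Theorem~\ref{thm: pa_limit}. The ratio $a_{t_n}/a_n\to0$, because $a_m$ is regularly varying with positive index and $t_n/n\to0$.

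The last factor is a denominator that could be small in some block, so a uniform-in-$b$ bound is unavailable. Instead, fix $\epsilon>0$ and use the sandwich
\[
\widetilde F_n(x-\epsilon)-B_n^{-1}\textstyle\sum_b\mathbbm{1}(R_{n,b}>\epsilon)\le\widehat F_n(x)\le\widetilde F_n(x+\epsilon)+B_n^{-1}\textstyle\sum_b\mathbbm{1}(R_{n,b}>\epsilon).
\]
To make the correction term vanish, fix $M$ with $\PB(a_n|\varphi(\bar\theta_n-\theta^*)|>M)$ small. On the complementary event, $R_{n,b}>\epsilon$ forces $t_n a_{t_n}^{-2}\Tr(\Sigma^{(b)}_{t_n})<(Ma_{t_n}/(\epsilon a_n))^2=:\gamma_n\to0$. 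The events $\{t_n a_{t_n}^{-2}\Tr(\Sigma^{(b)}_{t_n})<\gamma_n\}$ do not depend on the main trajectory. By the law of large numbers over blocks, their proportion is close to their common probability $p_n$. For any fixed $\gamma>0$ we have $p_n\le\PB(\,\cdot<\gamma)$ eventually. By Theorem~\ref{thm: joint}, the scaled normalizer $t_n a_{t_n}^{-2}\Tr(\Sigma^{(b)}_{t_n})=t_n^{1-\frac{2}{\alpha}}b_1(t_n)^2\Tr(\Sigma^{(b)}_{t_n})$ converges to $\Tr(W)$. Hence $\limsup_n p_n\le\PB(\Tr(W)\le\gamma)$.

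\textbf{Main obstacle.} The crux is showing $\PB(\Tr(W)\le\gamma)\to0$ as $\gamma\to0$, that is, $\PB(\Tr(W)=0)=0$. My approach is to use that $\Tr(W)$ is the limit of $\sum_k\|g\|^2$ suitably scaled, a positive $\alpha/2$-stable law, which has no atom at zero. Here $\|g(\theta^*,\xi)\|$ is regularly varying with index $\alpha$, so its square has index $\alpha/2<1$. This is the same fact that makes $Z^\star(\varphi)$ well defined in Theorem~\ref{thm: joint}. Letting $n\to\infty$ and then $\epsilon\to0$ along continuity points of $F_\varphi$ gives $\widehat F_n(x;\varphi)\overset{p}{\to}F_\varphi(x)$.

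\textbf{Coverage.} Suppose $F_\varphi$ is continuous and strictly increasing at $q=q(\delta;\varphi)$. Pointwise convergence at $q\pm\epsilon$ then gives $\widehat q(\delta;\varphi)\overset{p}{\to}q$ by the standard quantile-inversion argument. Theorem~\ref{thm: joint} gives $|T_n^\star(\theta^*;\varphi)|\tod Z^\star(\varphi)$. Slutsky's lemma and continuity of $F_\varphi$ at $q$ then yield $\PB(|T^\star_n(\theta^*;\varphi)|\le\widehat q)\to F_\varphi(q)=1-\delta$. This probability is exactly $\PB(\theta^*\in\widehat\RM_n(\delta;\varphi))$.
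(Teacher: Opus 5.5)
Your proposal is correct and follows essentially the same route as the paper: the oracle statistic centered at $\theta^*$ with independence plus Chebyshev, the subadditivity bound $|T^{\star,(b)}_{n,\varphi}-\widetilde T^{(b)}_{n,\varphi}|\le D^{(b)}_{n,\varphi}$, the $\varepsilon$-sandwich with the fraction of bad blocks killed via $a_{t_n}/a_n\to0$ and $\PB(\Tr(W)>0)=1$, and then quantile inversion plus Slutsky; your positive $\alpha/2$-stable argument for $\PB(\Tr(W)=0)=0$ usefully justifies a fact the paper only asserts. One small slip: the scaled normalizer is $a_{t_n}^2t_n^{-1}\Tr(\Sigma^{(b)}_{t_n})$, not $t_na_{t_n}^{-2}\Tr(\Sigma^{(b)}_{t_n})$, although the explicit form $t_n^{1-2/\alpha}b_1(t_n)^2\Tr(\Sigma^{(b)}_{t_n})$ that you write is the correct one.
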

% If $F_\varphi$ is discontinuous at $q(\delta;\varphi)$, merely switching between the left and right generalized inverses does not determine from which side the estimated critical value approaches the atom. A simple conservative modification is to expand the critical value. Suppose $F_\varphi$ is strictly increasing at the quantile. For any fixed $a>0$ such that $q(\delta;\varphi)+a$ is a continuity point of $F_\varphi$, define
% \begin{align}
%     \widehat q_a(\delta;\varphi)&=\widehat q(\delta;\varphi)+a,\\
%     \widehat{\RM}_{n,a}(\delta;\varphi)&=
%     \left\{\theta:\left|T_n^\star(\theta;\varphi)\right|\le\widehat q_a(\delta;\varphi)\right\}.
% \end{align}
% Then
% \begin{align}
%     \PB\left(\theta^*\in\widehat{\RM}_{n,a}(\delta;\varphi)\right)
%     \longrightarrow F_\varphi(q(\delta;\varphi)+a)\ge1-\delta.
% \end{align}
% Thus the buffered region is asymptotically conservative. One may subsequently let $a\downarrow0$ through continuity points; a single sequence $a=a_n\downarrow0$ additionally requires a rate controlling both the statistic and critical-value approximations.
\begin{rem}
    It is worth noting that the joint convergence result in Theorem~\ref{thm: joint} continues to hold even when the stochastic gradient noise has finite variance; see \citet{polyak1992acceleration}. In that case, the scaling rate changes from $n^{1-\frac{1}{\alpha}}$ to the classical $\sqrt{n}$ rate, and the limiting distribution becomes Gaussian rather than stable. Nevertheless, the self-normalized statistic $T_n^\star(\theta;\varphi)$ remains well defined, and the inference procedure described in Algorithm~\ref{alg: general} can be applied without modification. Importantly, the algorithm does not require prior knowledge of whether the stochastic gradient noise has finite or infinite variance and therefore remains applicable across both regimes.
\end{rem}

\subsection{Discussion of ellipsoidal confidence regions}
Several geometries can be considered when constructing confidence regions for multidimensional parameters. Ellipsoidal confidence regions are a popular choice in classical statistical inference because they naturally account for correlations among coordinates through the covariance matrix. This idea underlies the well-known Hotelling $t^2$ statistic and has also been adopted in recent work on statistical inference for stochastic gradient descent; see \citet{lee2022fast}. In the infinite-variance regime considered in this paper, ellipsoidal confidence regions remain possible in principle. In particular, Corollary~\ref{cor: linear} shows that a self-normalized quadratic statistic can be constructed with a nondegenerate limiting distribution, provided that the limiting scale matrix $W$ is invertible almost surely.
\begin{cor}
\label{cor: linear}
    If $W$ in Theorem~\ref{thm: joint} is invertible almost surely, the following limit theorem holds:
    \begin{align}
        \label{eq: self-norm}
        T_n^\dagger:=n\left(\bar{\theta}_n-\theta^*\right)^\top\Sigma_n^{-1}\left(\bar{\theta}_n-\theta^*\right)\tod L_1^\top H^{-1}W^{-1}H^{-1}L_1.
\end{align}    
\end{cor}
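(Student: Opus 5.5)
The plan is to deduce Corollary~\ref{cor: linear} from the joint convergence in Theorem~\ref{thm: joint} by the continuous mapping theorem, after checking that the rate factors cancel exactly.

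Set $a_n:=n^{1-\frac{1}{\alpha}}b_1(n)$, $X_n:=a_n(\bar{\theta}_n-\theta^*)$ and $V_n:=n^{1-\frac{2}{\alpha}}b_1(n)^2\Sigma_n$. Then
\begin{align}
    X_n^\top V_n^{-1}X_n
    =\frac{a_n^2}{n^{1-\frac{2}{\alpha}}b_1(n)^2}(\bar{\theta}_n-\theta^*)^\top\Sigma_n^{-1}(\bar{\theta}_n-\theta^*).
\end{align}
Since $a_n^2/(n^{1-2/\alpha}b_1(n)^2)=n^{2-2/\alpha-1+2/\alpha}=n$, this equals $T_n^\dagger$ exactly, on the event that $\Sigma_n$ is invertible. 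Thus $T_n^\dagger=\psi(X_n,V_n)$, where $\psi(x,M):=x^\top M^{-1}x$ on $\RB^d\times\mathbb{S}_{++}^d$, and we assign $\psi$ an arbitrary value (say $0$) on singular $M$.

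The map $\psi$ is continuous on $\RB^d\times\{M:\det M\neq0\}$, because matrix inversion is continuous on the open set of invertible matrices. Hence its discontinuity set is contained in $\RB^d\times\{M:\det M=0\}$. By Theorem~\ref{thm: joint}, $(X_n,V_n)\tod(-H^{-1}L_1,W)$, and by hypothesis $\PB(\det W=0)=0$, so the limit puts no mass on the discontinuity set. The continuous mapping theorem then gives
\begin{align}
    \psi(X_n,V_n)\tod (H^{-1}L_1)^\top W^{-1}(H^{-1}L_1)=L_1^\top H^{-1}W^{-1}H^{-1}L_1,
\end{align}
where the last equality uses the symmetry of $H$.

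The only delicate step is relating $T_n^\dagger$ to $\psi(X_n,V_n)$ when $\Sigma_n$ is singular for finite $n$. I would handle it as follows. The set $\{M:\det M\neq0\}$ is open, and $V_n\tod W$ with $\PB(\det W=0)=0$. By the portmanteau theorem, $\liminf_n\PB(\det V_n\neq0)\ge\PB(\det W\neq0)=1$, so $\PB(\det\Sigma_n=0)\to0$. Therefore $T_n^\dagger$ (interpreted arbitrarily on that event) and $\psi(X_n,V_n)$ differ only on an event of vanishing probability, and they share the same weak limit.
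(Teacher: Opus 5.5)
Your proposal is correct and takes the same route as the paper. You rescale to $X_n$ and $n^{1-2/\alpha}b_1(n)^2\Sigma_n$, use the joint limit from Theorem~\ref{thm: joint}, note that matrix inversion is continuous on nonsingular matrices and that $W$ is almost surely invertible, and then apply the continuous mapping theorem. Your explicit check that the rate factors cancel to exactly $n$, and your portmanteau argument that $\PB(\det\Sigma_n=0)\to0$, spell out details the paper states in a single line.
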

For $T_n^\dagger$ in Corollary~\ref{cor: linear}, the nuisance parameters $\alpha$ and $b_1(\cdot)$ are also canceled out on the left via self-normalization. However, the condition that $W$ is invertible almost surely can only be satisfied in limited cases. In particular, one sufficient condition is that each coordinate of $g(\theta,\xi)$ has the same regularly varying tail index and the limit distribution $L_1$ does not lie in any proper subspace of $\RB^d$; see \citet{meerschaert1999sample}. If some coordinates of $g(\theta,\xi)$ have lighter tails, the statistic $T_n^\dagger$ will be invalid. To illustrate this issue, we consider the following simple quadratic optimization problem with dimension $d=2$:
\begin{align}
    \min_{\theta}\frac{1}{2}\theta^\top H\theta,
    \qquad
    H:=\left(\begin{smallmatrix}
        1&1\vspace{4pt}\\
        1&2
    \end{smallmatrix}\right)^{-1}.
\end{align}
The stochastic gradient satisfies $g(\theta,\xi)=H\theta+\xi$, where $\xi=(\xi_{1},\xi_{2})^\top$ and $\xi_{1}\ind\xi_{2}$. Here $\xi_{1}$ is an $\alpha_1$-regularly varying random variable and $\xi_{2}$ is an $\alpha_2$-regularly varying random variable with $1<\alpha_1<\alpha_2<2$. In this case, the joint limit takes the following form:
\begin{align}
    \left(n^{1-\frac{1}{\alpha_1}}\left(\bar{\theta}_n-\theta^*\right),
    n^{1-\frac{2}{\alpha_1}}\Sigma_n \right) \tod
    \left(-\left(\begin{smallmatrix} L_1\vspace{4pt}\\ L_1 \end{smallmatrix}\right),
    \left(\begin{smallmatrix} W_1&0\vspace{4pt}\\ 0&0 \end{smallmatrix}\right)\right),
\end{align}
where the second-moment scaling term is singular and $T_n^\dagger$ cannot be used to construct a confidence region for $\theta^*$. In contrast, the confidence region based on the trace of the second-moment matrix does not require inversion and remains well defined even when different coordinates exhibit heterogeneous tail indices.
% Moreover, the computation of  is significantly simpler, requiring only $\OM(d)$ operations rather than matrix inversion with complexity $\OM(d^3)$. These properties make the hypercubic confidence region particularly suitable for inference in heavy-tailed stochastic optimization problems. We summarize the comparison between the two types of confidence regions in Table~\ref{tab: result}.
% \begin{table}[h!]
%     \centering
%     \scalebox{1.0}{
%     \begin{tabular}{c|c|c}
%     \toprule
%      & Hypercubic & Ellipsoid \\
%     \hline
%     Computation Complexity & $\OM(d)$ & $\OM(d^3)$\\
%     Robust to Heterogeneous Noise & \ding{51}  & \ding{55}\\
%     \bottomrule
%     \end{tabular}
%     }
%     \caption{Benefits of hypercubic confidence regions in heavy-tailed cases.}
%     \label{tab: result}
% \end{table}

\section{Simulation experiments}
\label{sec: exp}
In this section, we evaluate the empirical performance of the proposed inference procedure through simulation studies. Our experiments examine the finite-sample behavior of the confidence regions constructed by Algorithm~\ref{alg: general} under different stochastic gradient noise regimes, with particular emphasis on empirical coverage probabilities and confidence interval lengths. We consider two commonly used statistical models, linear regression and logistic regression, as discussed by \citet{blanchet2024limit}. For each setting, we run stochastic gradient descent with Polyak--Ruppert averaging and construct confidence regions using the proposed method. The nominal coverage level is 95\% throughout. We also evaluate our methodology in a challenging setting where the tail index varies across gradient queries. The results are reported in Section~\ref{apd: robust} of the Supplementary Material.
\subsection{Linear regression}
We consider the linear regression model with heavy-tailed noise studied by \citet{blanchet2024limit}:
\begin{align}
    y_i = x_i^\top\theta^*+\varepsilon_i,
\end{align}
where $\{(x_i,\varepsilon_i)\}_{i=1}^n$ are i.i.d., $x_i\sim N(0,\Sigma)$, and $\varepsilon_i$ follows a symmetric Pareto distribution with mean zero and tail index $\alpha$. We further assume that $x_i\ind\varepsilon_i$. Given a sample pair $(x_i,y_i)$, the stochastic gradient is
\begin{align}
    g(\theta,x_i, y_i) = x_i(x_i^\top\theta-y_i).
\end{align}
For the choice of $\Sigma$, we consider two cases:
\begin{enumerate}[(a)]
    \item Identity: $\Sigma = I$.
    \item Toeplitz: $\Sigma_{ij} = q^{|i-j|}$ with $q=0.3$.
\end{enumerate}
For this model, we set the sample size to $n=10^6$ and consider dimensions $d\in\{5,20\}$. The true parameter $\theta^*$ is generated from the standard Gaussian distribution $N(0,I_d)$. We consider three subsample sizes: $n^{0.6}$, $n^{0.7}$, and $n^{0.8}$. The nominal coverage level is 95\%, and we conduct 500 independent runs to estimate the empirical coverage rate. We evaluate Algorithm~\ref{alg: general} using the average coverage rate and average confidence interval length. Table~\ref{table:linear_CI} reports the linear regression results. Figure~\ref{fig: linear_regression} displays the corresponding results for the identity covariance setting $\Sigma=I$ with $\alpha=1.5$ and $d=5$. We summarize the results below.

\begin{figure}[t]
    \centering
        \caption{Linear regression with identity covariance $\Sigma=I$: average coverage rate and confidence-interval length for $\alpha=1.5$ and $d=5$ under different subsample sizes.}
    \includegraphics[width=1.0\linewidth]{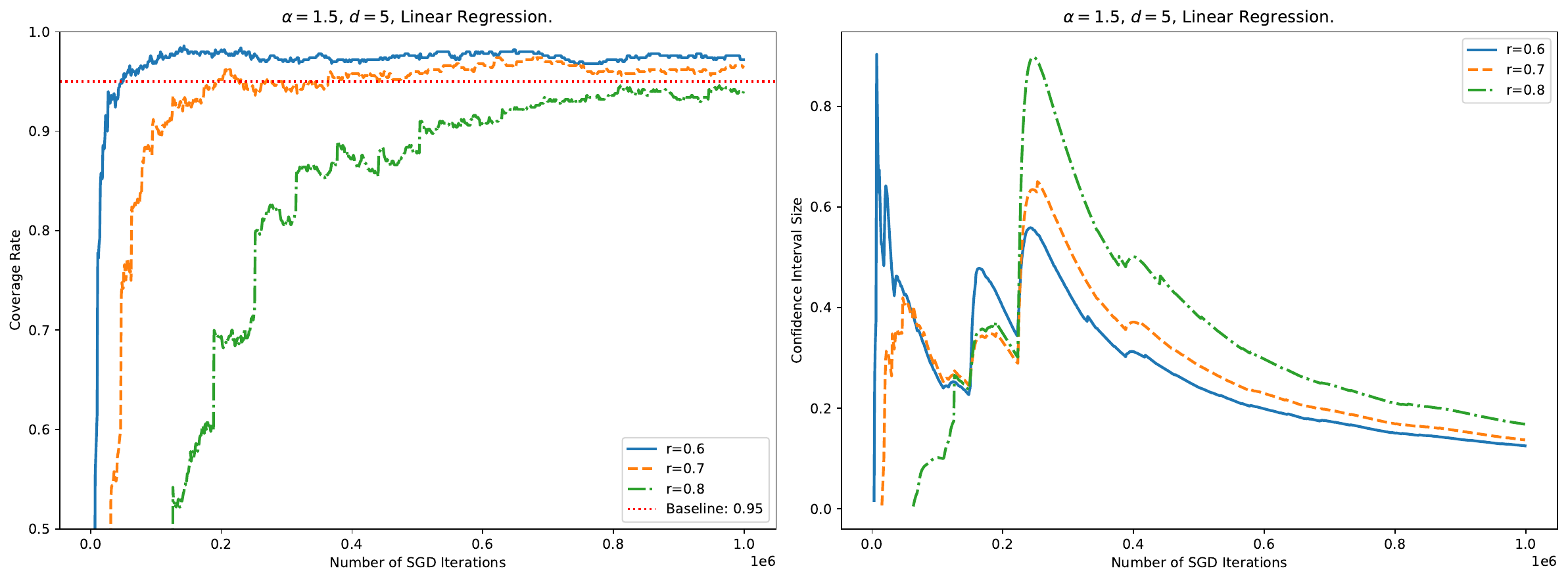}
    \label{fig: linear_regression}
\end{figure}

\begin{table}[t]
\caption{Linear regression: average coverage rates and confidence interval lengths at the nominal $95\%$ coverage level. Standard errors $\left(\frac{\text{std}_{500}}{\sqrt{500}}\right)$ are reported in parentheses.}
\resizebox{\textwidth}{!}{\begin{tabular}{lccccccc}
\toprule
 & $d$ & \multicolumn{3}{c}{Coverage Rate (\%)}  &  \multicolumn{3}{c}{Average Length ($\times 10^{-2}$)} \\
  &   & $n^{0.6}$&$n^{0.7}$&$n^{0.8}$ & $n^{0.6}$&$n^{0.7}$&$n^{0.8}$  \\
\midrule
Identity $\Sigma$\\
\multirow{2}{*}{$\alpha=1.5$}&5&97.0(7.6E-3)&96.8(7.9E-3)&94.4(1.0E-2)&11.33(1.7E-2)&12.57(2.5E-2)&15.47(4.0E-2)\\
 &20&99.8(2.0E-3)&98.2(5.9E-3)&93.8(1.1E-2)&94.26(8.0E-1)&46.84(3.1E-1)&17.00(2.7E-2)\\
\addlinespace
 \multirow{2}{*}{$\alpha=1.8$}&5&95.8(9.0E-3)&95.6(9.2E-3)&91.0(1.3E-2)&1.91(7.1E-4)&1.91(8.8E-4)&1.92(1.3E-3)\\
 &20&99.6(2.8E-3)&97.8(6.6E-3)&91.8(1.2E-2)&3.48(7.7E-3)&2.98(4.5E-3)&2.35(1.0E-3)\\
\addlinespace
\multirow{2}{*}{Gaussian}&5&94.6(1.0E-2)&94.4(1.0E-2)&86.4(1.5E-2)&0.26(4.5E-6)&0.25(8.1E-6)&0.23(1.2E-5)\\
 &20&98.6(5.3E-3)&98.0(6.3E-3)&95.6(9.2E-3)&0.50(1.4E-4)&0.48(1.5E-4)&0.47(2.4E-4)\\
\addlinespace
Toeplitz $\Sigma$\\
\multirow{2}{*}{$\alpha=1.5$}&5&97.2(7.4E-3)&96.6(8.1E-3)&94.0(1.1E-2)&12.53(1.9E-2)&13.70(2.8E-2)&16.82(4.7E-2)\\
 &20&99.8(2.0E-3)&99.0(4.4E-3)&95.8(9.0E-3)&106.4(8.9E-1)&60.94(4.4E-1)&18.65(2.9E-2)\\
\addlinespace
 \multirow{2}{*}{$\alpha=1.8$}&5&96.0(8.8E-3)&95.2(9.6E-3)&91.4(1.3E-2)&2.10(8.3E-4)&2.07(9.9E-4)&2.09(1.5E-3)\\
 &20&99.6(2.8E-3)&98.2(5.9E-3)&93.6(1.1E-2)&3.99(8.0E-3)&3.54(6.0E-3)&2.68(1.0E-3)\\
\addlinespace
\multirow{2}{*}{Gaussian}&5&94.6(1.0E-2)&92.6(1.2E-2)&87.0(1.5E-2)&0.29(4.8E-6)&0.27(8.9E-6)&0.25(1.4E-5)\\
 &20&99.0(4.4E-3)&99.4(3.5E-3)&95.8(9.0E-3)&0.64(2.2E-4)&0.62(2.4E-4)&0.62(4.6E-4)\\
\bottomrule
\end{tabular}}
\label{table:linear_CI}
\end{table}

\begin{enumerate}[(a)]
    \item[(a)] Across both identity and Toeplitz covariance structures, the coverage rates are generally close to the nominal 95\% target.
    \item[(b)] For dimension $d=5$, the coverage rates are relatively insensitive to the choice of subsample size when $\alpha=1.5$. In contrast, under Gaussian noise, coverage deteriorates when the subsample size is large (e.g., $n^{0.8}$).  
    \item[(c)] For dimension $d=20$, the coverage rates become sensitive to the choice of subsample size. Among the three options considered, $n^{0.8}$ yields the best performance, while the other two choices lead to conservative coverage. A plausible explanation is the limitation imposed by the total sample budget.  
    \item[(d)] In general, coverage rates tend to decrease as the subsample size increases. This pattern is consistent with the usual bias--variance trade-off in subsampling: smaller blocks may introduce approximation bias, whereas larger blocks reduce the effective number of blocks available for quantile estimation. Hence, there should exist an intermediate ``critical'' subsample size that balances these errors under a fixed sample budget.
    \item[(e)] For the average confidence region length $\widehat{q}\sqrt{\frac{\Tr(\Sigma_n)}{n}}$, we observe a decreasing trend as the stochastic gradients become lighter-tailed. Theoretically, the confidence region length is expected to scale as $n^{\frac{1}{\min\{\alpha,2\}}-1}$ up to logarithmic factors, which is consistent with our empirical findings.
\end{enumerate}

\FloatBarrier
\subsection{Logistic regression}
We consider the binary logistic regression model with heavy-tailed covariates studied by \citet{blanchet2024limit}:
\begin{align}
    \PB\left(y_i=1\mid x_i\right) = \frac{1}{1+\exp\left(-x_i^\top\theta^*\right)},\\
    \PB\left(y_i=-1\mid x_i\right) = \frac{\exp\left(-x_i^\top\theta^*\right)}{1+\exp\left(-x_i^\top\theta^*\right)},
\end{align}
where $\{(x_i,y_i)\}_{i=1}^n$ are i.i.d., $x_i\in\RB^d$, and $y_i\in\{-1,1\}$. For the distribution of $x_i$, we consider two cases:
\begin{enumerate}[(a)]
    \item Homogeneous tail indices: $x_i^{(1)},\ldots,x_i^{(d)}$ are i.i.d. symmetric Pareto random variables with tail index $\alpha$.
    \item Heterogeneous tail indices: $x_i^{(1)},\ldots,x_i^{(d)}$ are independent symmetric Pareto random variables with respective tail indices $\alpha_1,\ldots,\alpha_d$, where $\alpha_1=\alpha$, $\alpha_d=2.5$, and $\alpha_2,\ldots,\alpha_{d-1}\sim\operatorname{Uniform}(\alpha,2)$.
\end{enumerate}
Given a sample pair $(x_i,y_i)$, the stochastic gradient at $\theta$ is
\begin{align}
     g(\theta,x_i, y_i) = \frac{-y_i\exp(-y_ix_i^\top\theta)}{1+\exp(-y_ix_i^\top\theta)}x_i.
\end{align}
For this model, we set the sample size to $n=10^6$ and the dimension to $d=5$. The true parameter $\theta^*$ is generated from the standard Gaussian distribution $N(0,I_d)$. We consider three subsample sizes: $n^{0.6}$, $n^{0.7}$, and $n^{0.8}$. The nominal coverage level is 95\%, and we conduct 500 independent runs to estimate the empirical coverage rate. We evaluate Algorithm~\ref{alg: general} using the average coverage rate and average confidence interval length. Table~\ref{table:logistic_CI} reports the logistic regression results. Figure~\ref{fig: logistic_regression} displays the corresponding results for homogeneous covariates $x_i$ with $\alpha=1.5$ and $d=5$. We summarize the results below.

\begin{table}[t]
\caption{Logistic regression: average coverage rates and confidence interval lengths at the nominal $95\%$ coverage level. Standard errors are reported in parentheses.}
\resizebox{\textwidth}{!}{\begin{tabular}{lccccccc}
\toprule
 & $d$ & \multicolumn{3}{c}{Coverage Rate (\%)}  &  \multicolumn{3}{c}{Average Length ($\times 10^{-2}$)} \\
  &   & $n^{0.6}$&$n^{0.7}$&$n^{0.8}$ & $n^{0.6}$&$n^{0.7}$&$n^{0.8}$  \\
\midrule
\multicolumn{3}{l}{Homogeneous $x_i$}\\
$\alpha=1.5$&5&91.0(1.3E-2)&93.0(1.1E-2)&93.2(1.1E-2)&28.99(3.0E-2)&55.42(6.0E-2)&103.3(1.3E-1)\\
\addlinespace
$\alpha=1.8$&5&98.2(5.9E-3)&98.0(6.3E-3)&96.4(8.3E-3)&8.73(1.2E-3)&9.17(3.8E-3)&15.05(1.9E-2)\\
\addlinespace
Gaussian&5&99.8(2.0E-3)&97.4(7.1E-3)&90.4(1.3E-2)&0.92(2.0E-5)&0.74(2.9E-5)&0.63(3.7E-5)\\
\addlinespace
\multicolumn{6}{l}{Heterogeneous $x_i$, 4 heaviest coordinates}\\
$\alpha=1.5$&5&92.4(1.2E-2)&94.2(1.0E-2)&93.4(1.1E-2)&25.10(2.9E-2)&46.14(5.9E-2)&87.89(1.2E-1)\\
\addlinespace
$\alpha=1.8$&5&98.4(5.6E-3)&98.0(6.3E-3)&96.6(8.1E-3)&6.91(1.0E-3)&6.98(3.3E-3)&12.62(1.8E-2)\\
\addlinespace
\multicolumn{6}{l}{Heterogeneous $x_i$, 3 heaviest coordinates}\\
$\alpha=1.5$&5&92.8(1.2E-2)&94.0(1.1E-2)&93.6(1.1E-2)&23.12(2.7E-2)&41.77(5.5E-2)&79.29(1.2E-1)\\
\addlinespace
$\alpha=1.8$&5&98.4(5.6E-3)&98.2(5.9E-3)&97.0(7.6E-3)&6.72(9.8E-4)&6.74(3.2E-3)&12.16(1.7E-2)\\
\addlinespace
\multicolumn{6}{l}{Heterogeneous $x_i$, 2 heaviest coordinates}\\
$\alpha=1.5$&5&94.0(1.1E-2)&94.2(1.0E-2)&93.6(1.1E-2)&19.10(2.6E-2)&31.53(5.3E-2)&59.42(1.1E-1)\\
\addlinespace
$\alpha=1.8$&5&98.6(5.3E-3)&98.0(6.3E-3)&97.0(7.6E-3)&5.78(8.3E-4)&5.45(2.6E-3)&10.01(1.6E-2)\\
\addlinespace
\multicolumn{6}{l}{Heterogeneous $x_i$, 1 heaviest coordinate}\\
$\alpha=1.5$&5&95.2(9.6E-3)&95.2(9.6E-3)&94.6(1.0E-2)&24.74(5.3E-2)&15.85(2.6E-2)&48.48(1.1E-1)\\
\addlinespace
$\alpha=1.8$&5&98.6(5.3E-3)&98.0(6.3E-3)&97.0(7.6E-3)&5.37(7.7E-4)&4.88(2.3E-3)&9.13(1.5E-2)\\
\bottomrule
\end{tabular}}
\label{table:logistic_CI}
\end{table}

\begin{figure}[t]
    \centering
    \caption{Logistic regression with homogeneous covariates $x_i$: average coverage rate and confidence-interval length for $\alpha=1.5$ and $d=5$ under different subsample sizes.}
    \includegraphics[width=1.0\linewidth]{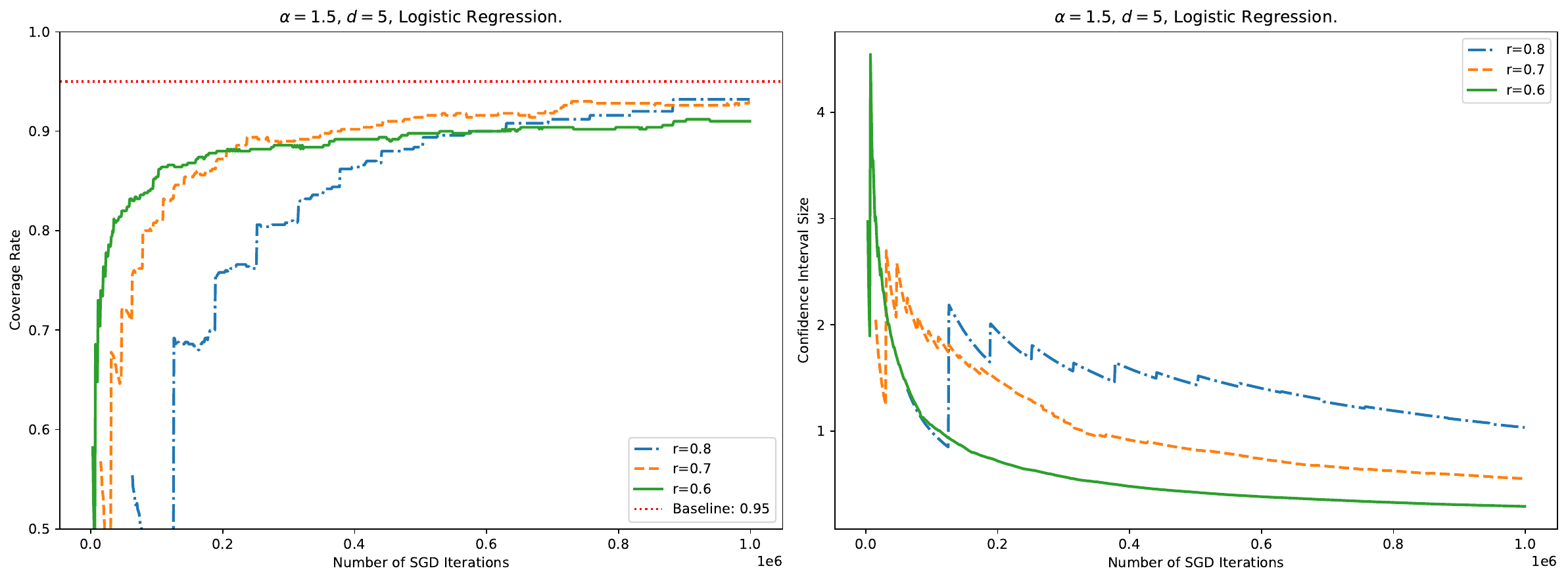}
    \label{fig: logistic_regression}
\end{figure}

\begin{enumerate}[(a)]
    \item[(a)] Across both homogeneous and heterogeneous covariates, the coverage rates are generally close to the nominal 95\% target.
    \item[(b)] Several observations are consistent with the linear regression results: (1) the coverage rates are still relatively insensitive to the choice of subsample size when $\alpha=1.5$; (2) the coverage rates tend to decrease as the subsample size increases; (3) the average confidence region length $\widehat{q}\sqrt{\frac{\Tr(\Sigma_n)}{n}}$  decreases as the stochastic gradients become lighter-tailed; (4) standard errors of the average confidence region length generally increase with the subsample size.  
    \item[(c)] In the heterogeneous case, the coverage rates remain consistent with those observed in the homogeneous setting.  
\end{enumerate}

\FloatBarrier

\section{Application to the CIFAR-10 dataset}
\label{sec: real}
In this section, we illustrate the proposed inference procedure on the CIFAR-10 dataset. This real-data experiment complements the simulation results in the previous section by considering an image-classification problem based on learned feature representations. The CIFAR-10 dataset is a standard benchmark for image classification. It consists of 60,000 color images of size $32\times 32$ from 10 object categories, with 50,000 training images and 10,000 test images. 

\begin{figure}[t]
\caption{Tail diagnostics for the $\ell_2$-norm of stochastic gradients in the AlexNet CIFAR-10 experiment. The stochastic gradients are evaluated near the reference parameter using mini-batches of size $64$. Left: empirical histogram of the gradient norms. Right: Hill estimates of the tail index $\alpha$.}
    \centering
    \begin{minipage}{0.48\textwidth}
        \centering
        \includegraphics[width=\textwidth]{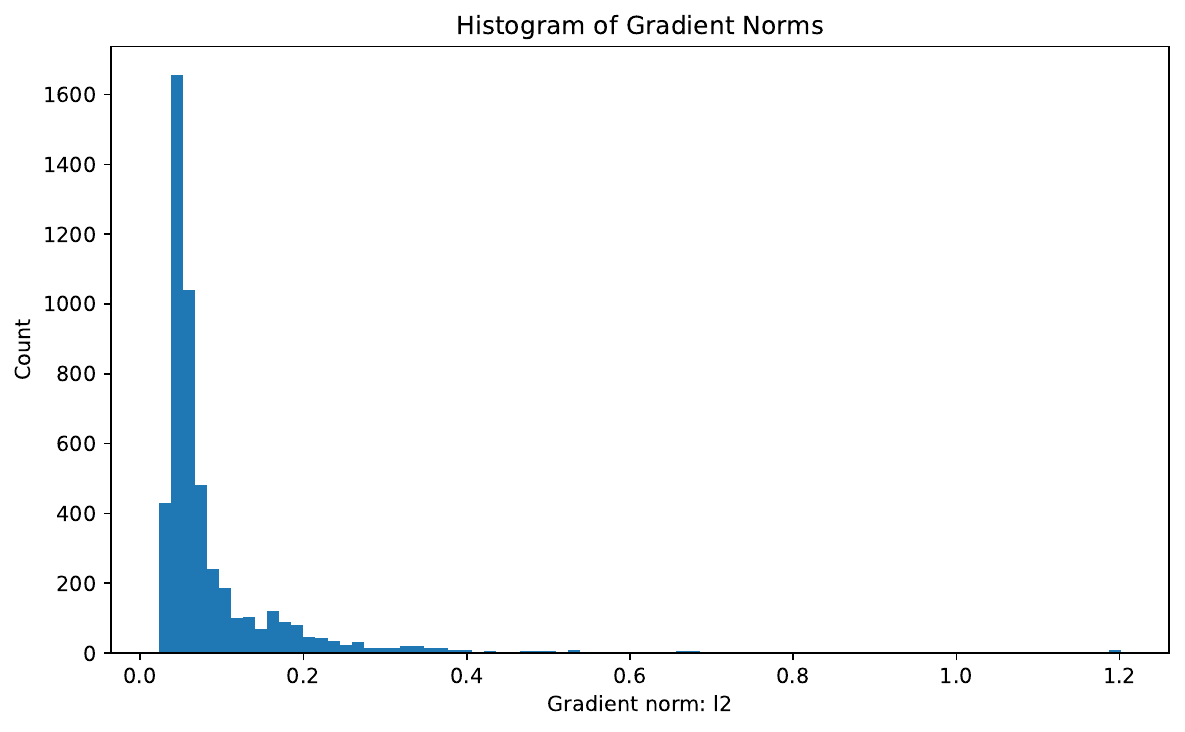}
    \end{minipage}
    \hfill
    \begin{minipage}{0.48\textwidth}
        \centering
        \includegraphics[width=\textwidth]{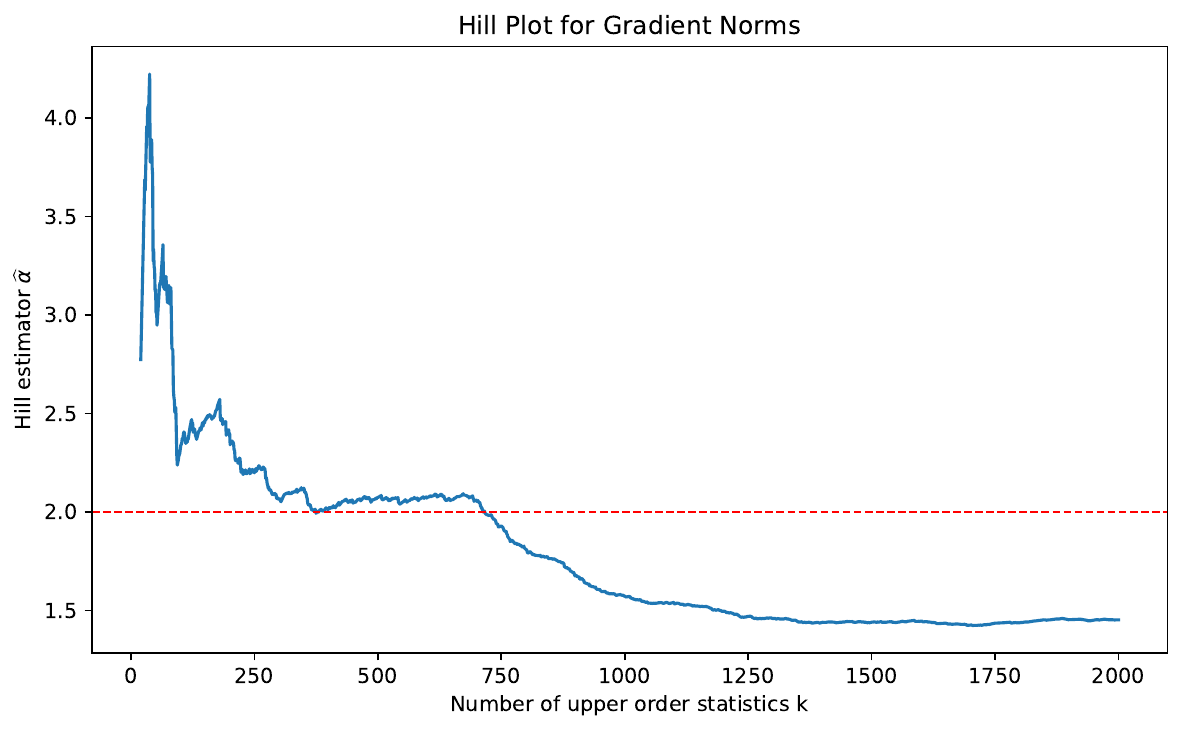}
    \end{minipage}
    \label{fig: small_tail}
\end{figure}

We first train an AlexNet convolutional neural network on the CIFAR-10 training set using stochastic gradient descent. This network is designed for $32\times 32$ inputs and uses a smaller fully connected representation, consistent with \citet{simsekli2019tail}. After training, we freeze the network and use the output of its penultimate layer, i.e., the layer before the final linear classifier, as the covariate vector. In our implementation, this produces a learned feature representation of dimension $448$. We then construct a binary classification task by selecting one CIFAR-10 class as the positive class and grouping the remaining classes as the negative class. We fit a logistic regression model to the extracted features. In Figure~\ref{fig: small_tail}, we show tail diagnostics for the $\ell_2$-norm of the stochastic gradient evaluated near the reference parameter in the AlexNet CIFAR-10 experiment. The mini-batch size is set to $64$, consistent with the experimental setup used in the subsequent inference results. The histogram on the left shows a highly right-skewed distribution with extreme values. The Hill plot is consistent with heavy-tailed behavior in this finite-sample experiment. In Section~\ref{sec: fullalex} of the Supplementary Material, we also consider a case in which the stochastic gradients exhibit lighter-tailed behavior under a different feature mapping. The result is qualitatively similar to that reported in this section.

To evaluate the coverage performance, we conduct $500$ independent runs. Since the true optimizer $\theta^*$ is unknown in this real-data setting, we compute a highly accurate reference solution using the L-BFGS algorithm and use it as a proxy for $\theta^*$. For each independent run, we construct coordinate-wise $95\%$ confidence intervals for all $448$ parameters. For each coordinate, the empirical coverage rate is then computed as the fraction of the $500$ confidence intervals that contain the corresponding coordinate of the L-BFGS reference solution. We provide additional details on the experimental setup and hyperparameter selection in Section~\ref{sec: setup} of the Supplementary Material. Here, we summarize the results by reporting the average coverage rate across all coordinates. In addition, we report the mean squared error (MSE) of the coordinate-wise coverage rates relative to the nominal level $95\%$, defined as
\begin{figure}[t]
    \centering
    \caption{Empirical reference coverage, coverage MSE, and fraction within the 95\% band in the CIFAR-10 experiment.}
    \includegraphics[width=0.9\linewidth]{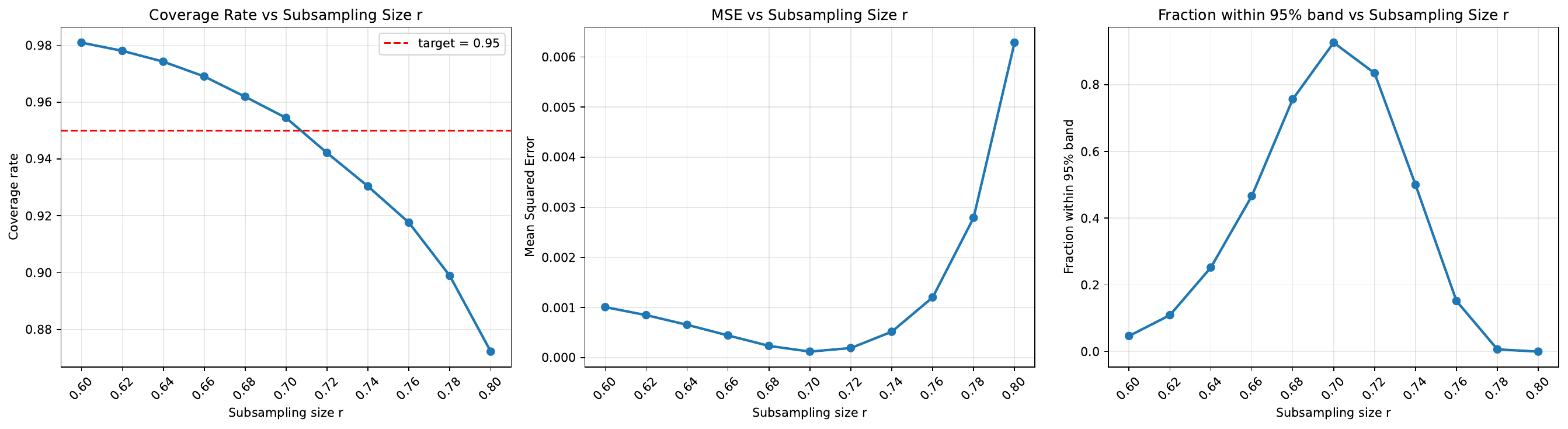}
    \label{fig: cifar10_sub_coverage_small}
\end{figure}

\begin{align}
    \frac{1}{448}\sum_{j=1}^{448}
    \left(\widehat{\mathrm{CR}}_j - 0.95\right)^2,
\end{align}
where $\widehat{\mathrm{CR}}_j$ denotes the empirical coverage rate for the $j$th coordinate. We also report a diagnostic measure called the ``Fraction within the $95\%$ band.'' If the true coverage probability is exactly $0.95$, then the Monte Carlo fluctuation of $\widehat{\mathrm{CR}}_j$ based on $500$ independent trials is approximately that of a binomial proportion with standard error $\sqrt{\frac{0.95(1-0.95)}{500}}$. We therefore compute the proportion of parameters whose empirical coverage rates fall within
\[
    0.95 \pm 1.96\sqrt{\frac{0.95(1-0.95)}{500}}.
\]

We first examine the sensitivity of the proposed method to the subsampling exponent. Specifically, we vary $r\in\{0.60,0.62,\ldots,0.78,0.80\}$. Figure~\ref{fig: cifar10_sub_coverage_small} reports the average coverage rate and coverage MSE under different choices of $r$. The average coverage rate decreases as $r$ increases: smaller values of $r$ tend to produce conservative confidence intervals, whereas larger values of $r$ lead to under-coverage. The coverage MSE exhibits a U-shaped pattern, decreasing as $r$ increases from $0.60$ to approximately $0.70$ and then increasing for larger values of $r$. Similarly, the fraction within the $95\%$ band is maximized at $r=0.7$. This pattern reflects the usual bias--variance trade-off in subsampling: smaller subsamples may yield conservative inference, whereas overly large subsamples leave fewer effective blocks for stable quantile estimation.

Based on Figure~\ref{fig: cifar10_sub_coverage_small}, we select $r=0.7$ for the proposed method in the following comparison. We compare Algorithm~\ref{alg: general} with the random scaling method \citep{lee2022fast}, which is effective for SGD in finite-variance settings. As shown in Table~\ref{table: cifar10_binary}, random scaling leads to noticeable under-coverage for nominal $95\%$ confidence intervals. In contrast, the proposed self-normalized subsampling method with $r=0.7$ achieves an average coverage rate closer to the nominal level and yields a substantially smaller coverage MSE. The proposed method also attains a substantially larger fraction within this band, indicating that its coordinate-wise coverage rates are more consistently aligned with the nominal level. Overall, these results suggest that the self-normalized subsampling procedure provides more reliable uncertainty quantification than the random scaling approach in this CIFAR-10 image-classification experiment.

\begin{table}[t]
\caption{Reference-solution coverage rate, mean squared error, and fraction within the $95\%$ band for $95\%$ confidence intervals in the CIFAR-10 experiment.}
\centering
\resizebox{\textwidth}{!}{\begin{tabular}{lccc}
\toprule
 & Average Coverage (\%) & Mean Squared Error ($\times 10^{-4}$) & Fraction within the $95\%$ band (\%) \\
\midrule
Random Scaling \citep{lee2022fast} & 92.05 & 10.43 & 20.54 \\
Algorithm~\ref{alg: general} ($r=0.7$) & 95.45 & 1.20 & 92.63 \\
\bottomrule
\end{tabular}}
\label{table: cifar10_binary}
\end{table}

\FloatBarrier

\section{Conclusion}
\label{sec: conc}
In this paper, we develop a statistical inference methodology for SGD with infinite-variance stochastic gradients. Our approach is based on a joint central limit theorem involving the Polyak--Ruppert averaged estimator and an empirical second-moment normalizer formed from the stochastic gradients. To address nuisance parameters in the limiting distribution, we construct a self-normalized statistic that cancels their influence. Building on the subsampling approach of \citet{romano1999subsampling}, we further design an online procedure for estimating the quantiles of the limiting distribution and constructing confidence regions for the true parameter. We evaluate the method in synthetic and CIFAR-10 classification experiments. The results suggest that self-normalized subsampling can provide stable uncertainty quantification in settings where standard finite-variance methods may exhibit under-coverage. Beyond these results, several important questions remain open. Progress on these questions would further broaden the scope of statistically principled uncertainty quantification for modern learning algorithms.

\paragraph*{\textbf{Choice of the subsampling level.}}

The subsampling exponent $r$ is a key hyperparameter in our inference procedure because it determines the block size $t_n=\lfloor n^r\rfloor$ used to approximate the distribution of the self-normalized statistic. This choice reflects a trade-off between CLT approximation bias and quantile estimation accuracy. The block size should be large enough for the block-level statistic to be close to its limiting distribution, but not so large that the effective number of blocks becomes too small to estimate quantiles accurately. An optimal choice would therefore require explicit rates for both error terms. In particular, controlling the approximation bias would likely require Berry--Esseen-type bounds, or related non-asymptotic bounds, for the self-normalized statistic constructed from SGD iterates. We leave the development of finite-sample results as an important direction for future research.

\paragraph*{\textbf{Extensions to other optimizers.}}
Although our analysis focuses on stochastic gradient descent, the main ideas underlying our inference framework may apply to a broader class of stochastic optimization algorithms, such as momentum-based methods and Adam \citep{polyak1964some,kingma2014adam}. Extending our methodology to such algorithms would require understanding the appropriate central limit theorems and corresponding normalization statistics. Once such results are established, the subsampling principle developed in this paper could be used in a fully data-driven manner.

\paragraph*{\textbf{Extensions to non-convex optimization.}}

Another important direction is to extend the current theory beyond strongly convex objectives. Many optimization problems arising in modern machine learning are non-convex, and an algorithm may converge to different local minima in such settings. One possible route is to pursue a local analysis around an isolated local minimizer. If the iterate sequence can be shown to enter and remain in a neighborhood where the objective is approximately strongly convex, then the asymptotic framework developed here may still apply after localization \citep{pelletier1998weak}. Under such a regime, one might construct confidence regions for the local minimizer, conditional on convergence to that basin of attraction. Moreover, rare but large gradients in the heavy-tailed regime further complicate the non-convex setting, potentially causing repeated escapes from shallow basins or transitions between metastable regions \citep{wang2021eliminating}. Understanding the interplay among heavy tails, non-convex geometry, and uncertainty quantification remains largely open.

\section*{Funding}
This work was supported by the Air Force Office of Scientific Research under award number FA9550-20-1-0397 and additional support is gratefully acknowledged from NSF 2229012, 2312204, and ONR 13983111.

\bibliographystyle{plainnat}
\bibliography{refer.bib}

@article{krizhevsky2012imagenet,
  title={{ImageNet} classification with deep convolutional neural networks},
  author={Krizhevsky, Alex and Sutskever, Ilya and Hinton, Geoffrey E},
  journal={Advances in Neural Information Processing Systems},
  volume={25},
  pages={1097--1105},
  year={2012}
}

@article{bai2016unified,
  title={A unified approach to self-normalized block sampling},
  author={Bai, Shuyang and Taqqu, Murad S and Zhang, Ting},
  journal={Stochastic Processes and Their Applications},
  volume={126},
  number={8},
  pages={2465--2493},
  year={2016},
  publisher={Elsevier}
}

@article{meerschaert1999sample,
  title={Sample covariance matrix for random vectors with heavy tails},
  author={Meerschaert, Mark M and Scheffler, Hans-Peter},
  journal={Journal of Theoretical Probability},
  volume={12},
  number={3},
  pages={821--838},
  year={1999},
  publisher={Springer}
}

@article{barsbey2021heavy,
  title={Heavy tails in {SGD} and compressibility of overparametrized neural networks},
  author={Barsbey, Melih and Sefidgaran, Milad and Erdogdu, Murat A and Richard, Gael and Simsekli, Umut},
  journal={Advances in Neural Information Processing Systems},
  volume={34},
  pages={29364--29378},
  year={2021}
}

@article{simsekli2020hausdorff,
  title={Hausdorff dimension, heavy tails, and generalization in neural networks},
  author={Simsekli, Umut and Sener, Ozan and Deligiannidis, George and Erdogdu, Murat A},
  journal={Advances in Neural Information Processing Systems},
  volume={33},
  pages={5138--5151},
  year={2020}
}

@inproceedings{mahoney2019traditional,
  title={Traditional and heavy tailed self regularization in neural network models},
  author={Mahoney, Michael and Martin, Charles},
  booktitle={International Conference on Machine Learning},
  pages={4284--4293},
  year={2019},
  organization={PMLR}
}

@article{damek2024analysing,
  title={Analysing heavy-tail properties of stochastic gradient descent by means of stochastic recurrence equations},
  author={Damek, Ewa and Mentemeier, Sebastian},
  journal={Journal of Applied Probability},
  volume={63},
  number={2},
  pages={459--483},
  year={2026},
  doi={10.1017/jpr.2025.10036}
}

@article{jiao2024emergence,
  title={Emergence of heavy tails in homogenized stochastic gradient descent},
  author={Jiao, Zhezhe and Keller-Ressel, Martin},
  journal={Advances in Neural Information Processing Systems},
  volume={37},
  pages={14066--14092},
  year={2024}
}

@article{schertzer2024stochastic,
  title={Stochastic differential equations models for least-squares stochastic gradient descent},
  author={Schertzer, Adrien and Pillaud-Vivien, Loucas},
  journal={Journal of Machine Learning Research},
  volume={27},
  number={75},
  pages={1--42},
  year={2026}
}

@article{logan1973limit,
  title={Limit distributions of self-normalized sums},
  author={Logan, Benjamin F and Mallows, CL and Rice, SO and Shepp, Larry A},
  journal={The Annals of Probability},
  volume={1},
  number={5},
  pages={788--809},
  year={1973},
  publisher={Institute of Mathematical Statistics}
}

@inproceedings{zhu2021constructing,
  title={On constructing confidence region for model parameters in stochastic gradient descent via batch means},
  author={Zhu, Yi and Dong, Jing},
  booktitle={2021 Winter Simulation Conference ({WSC})},
  pages={1--12},
  year={2021},
  organization={IEEE}
}

@article{zhong2023online,
  title={Online bootstrap inference with nonconvex stochastic gradient descent estimator},
  author={Zhong, Yanjie and Kuffner, Todd and Lahiri, Soumendra},
  journal={arXiv preprint arXiv:2306.02205},
  year={2023}
}

@article{blanchet2024limit,
  title={Limit theorems for stochastic gradient descent with infinite variance},
  author={Blanchet, Jose and Mijatovi{\'c}, Aleksandar and Yang, Wenhao},
  journal={The Annals of Applied Probability},
  year={2024},
  note={forthcoming},
  eprint={2410.16340},
  archivePrefix={arXiv}
}

@inproceedings{lee2022fast,
  title={Fast and robust online inference with stochastic gradient descent via random scaling},
  author={Lee, Sokbae and Liao, Yuan and Seo, Myung Hwan and Shin, Youngki},
  booktitle={Proceedings of the {AAAI} Conference on Artificial Intelligence},
  volume={36},
  pages={7381--7389},
  year={2022}
}

@article{zhu2024high,
  title={High confidence level inference is almost free using parallel stochastic optimization},
  author={Zhu, Wanrong and Lou, Zhipeng and Wei, Ziyang and Wu, Wei Biao},
  journal={arXiv preprint arXiv:2401.09346},
  year={2024}
}

@article{chen2020statistical,
  title={Statistical inference for model parameters in stochastic gradient descent},
  author={Chen, Xi and Lee, Jason D and Tong, Xin T and Zhang, Yichen},
  journal={The Annals of Statistics},
  volume={48},
  number={1},
  pages={251--273},
  year={2020},
  doi={10.1214/18-AOS1801}
}

@article{romano1999subsampling,
  title={Subsampling inference for the mean in the heavy-tailed case},
  author={Romano, Joseph P and Wolf, Michael},
  journal={Metrika},
  volume={50},
  number={1},
  pages={55--69},
  year={1999},
  publisher={Springer}
}

@book{harold1997stochastic,
  title={Stochastic approximation and recursive algorithms and applications},
  author={Kushner, Harold J. and Yin, G. George},
  series={Stochastic Modelling and Applied Probability},
  volume={35},
  publisher={Springer},
  address={New York},
  year={1997},
  doi={10.1007/978-1-4899-2696-8}
}

@article{csimcsekli2019heavy,
  title={On the heavy-tailed theory of stochastic gradient descent for deep neural networks},
  author={Simsekli, Umut and G{\"u}rb{\"u}zbalaban, Mert and Nguyen, Thanh Huy and Richard, Ga{\"e}l and Sagun, Levent},
  journal={arXiv preprint arXiv:1912.00018},
  year={2019}
}

@article{wang2021eliminating,
  title={Eliminating sharp minima from {SGD} with truncated heavy-tailed noise},
  author={Wang, Xingyu and Oh, Sewoong and Rhee, Chang-Han},
  journal={arXiv preprint arXiv:2102.04297},
  year={2021}
}

@article{sacks1958asymptotic,
  title={Asymptotic distribution of stochastic approximation procedures},
  author={Sacks, Jerome},
  journal={The Annals of Mathematical Statistics},
  volume={29},
  number={2},
  pages={373--405},
  year={1958},
  publisher={JSTOR}
}

@article{kingma2014adam,
  title={{Adam}: A method for stochastic optimization},
  author={Kingma, Diederik P and Ba, Jimmy},
  journal={arXiv preprint arXiv:1412.6980},
  year={2014}
}

@article{cutkosky2021high,
  title={High-probability bounds for non-convex stochastic optimization with heavy tails},
  author={Cutkosky, Ashok and Mehta, Harsh},
  journal={Advances in Neural Information Processing Systems},
  volume={34},
  pages={4883--4895},
  year={2021}
}

@article{liu2023stochastic,
  title={Stochastic Nonsmooth Convex Optimization with Heavy-Tailed Noises: High-Probability Bound, In-Expectation Rate and Initial Distance Adaptation},
  author={Liu, Zijian and Zhou, Zhengyuan},
  journal={arXiv preprint arXiv:2303.12277},
  year={2023}
}

@article{mou2022optimal,
  title={Optimal variance-reduced stochastic approximation in {Banach} spaces},
  author={Mou, Wenlong and Khamaru, Koulik and Wainwright, Martin J and Bartlett, Peter L and Jordan, Michael I},
  journal={arXiv preprint arXiv:2201.08518},
  year={2022}
}

@article{moulines2011non,
  title={Non-asymptotic analysis of stochastic approximation algorithms for machine learning},
  author={Moulines, Eric and Bach, Francis},
  journal={Advances in Neural Information Processing Systems},
  volume={24},
  pages={451--459},
  year={2011}
}

@article{pelletier1998almost,
  title={On the almost sure asymptotic behaviour of stochastic algorithms},
  author={Pelletier, Mariane},
  journal={Stochastic Processes and Their Applications},
  volume={78},
  number={2},
  pages={217--244},
  year={1998},
  publisher={Elsevier}
}

@article{bertsekas2000gradient,
  title={Gradient convergence in gradient methods with errors},
  author={Bertsekas, Dimitri P and Tsitsiklis, John N},
  journal={SIAM Journal on Optimization},
  volume={10},
  number={3},
  pages={627--642},
  year={2000},
  publisher={SIAM}
}

@incollection{robbins1971convergence,
  title={A convergence theorem for non negative almost supermartingales and some applications},
  author={Robbins, Herbert and Siegmund, David},
  booktitle={Optimizing Methods in Statistics},
  pages={233--257},
  year={1971},
  publisher={Elsevier}
}

@article{robbins1951stochastic,
  title={A stochastic approximation method},
  author={Robbins, Herbert and Monro, Sutton},
  journal={The Annals of Mathematical Statistics},
  volume={22},
  number={3},
  pages={400--407},
  year={1951},
  doi={10.1214/aoms/1177729586}
}

@article{polyak1964some,
  title={Some methods of speeding up the convergence of iteration methods},
  author={Polyak, Boris T},
  journal={{USSR} Computational Mathematics and Mathematical Physics},
  volume={4},
  number={5},
  pages={1--17},
  year={1964},
  doi={10.1016/0041-5553(64)90137-5}
}

@article{polyak1992acceleration,
  title={Acceleration of stochastic approximation by averaging},
  author={Polyak, Boris T and Juditsky, Anatoli B},
  journal={{SIAM} Journal on Control and Optimization},
  volume={30},
  number={4},
  pages={838--855},
  year={1992},
  publisher={SIAM}
}

@article{li1994almost,
  title={Almost sure convergence of stochastic approximation procedures},
  author={Li, Gang},
  journal={Statistica Sinica},
  volume={4},
  number={1},
  pages={361--372},
  year={1994}
}

@article{goodsell1976almost,
  title={Almost sure convergence for the {Robbins--Monro} process},
  author={Goodsell, C. A. and Hanson, D. L.},
  journal={The Annals of Probability},
  volume={4},
  number={6},
  pages={890--901},
  year={1976}
}

@article{krasulina1969stochastic,
  title={On stochastic approximation processes with infinite variance},
  author={Krasulina, Tatiana Pavlovna},
  journal={Theory of Probability \& Its Applications},
  volume={14},
  number={3},
  pages={522--526},
  year={1969},
  publisher={SIAM}
}

@inproceedings{simsekli2019tail,
  title={A tail-index analysis of stochastic gradient noise in deep neural networks},
  author={Simsekli, Umut and Sagun, Levent and Gurbuzbalaban, Mert},
  booktitle={International Conference on Machine Learning},
  pages={5827--5837},
  year={2019},
  organization={PMLR}
}

@inproceedings{hodgkinson2021multiplicative,
  title={Multiplicative noise and heavy tails in stochastic optimization},
  author={Hodgkinson, Liam and Mahoney, Michael},
  booktitle={International Conference on Machine Learning},
  pages={4262--4274},
  year={2021},
  organization={PMLR}
}

@inproceedings{gurbuzbalaban2021heavy,
  title={The heavy-tail phenomenon in {SGD}},
  author={Gurbuzbalaban, Mert and Simsekli, Umut and Zhu, Lingjiong},
  booktitle={International Conference on Machine Learning},
  pages={3964--3975},
  year={2021},
  organization={PMLR}
}

@article{pelletier1998weak,
  title={Weak convergence rates for stochastic approximation with application to multiple targets and simulated annealing},
  author={Pelletier, Mariane},
  journal={The Annals of Applied Probability},
  volume={8},
  number={1},
  pages={10--44},
  year={1998}
}

@book{resnick2007heavy,
  title={Heavy-tail phenomena: probabilistic and statistical modeling},
  author={Resnick, Sidney I},
  year={2007},
  publisher={Springer Science \& Business Media}
}

@article{wang2021convergence,
  title={Convergence rates of stochastic gradient descent under infinite noise variance},
  author={Wang, Hongjian and Gurbuzbalaban, Mert and Zhu, Lingjiong and Simsekli, Umut and Erdogdu, Murat A},
  journal={Advances in Neural Information Processing Systems},
  volume={34},
  pages={18866--18877},
  year={2021}
}

\clearpage
\appendix
\section*{Supplementary Material}
\addcontentsline{toc}{section}{Supplementary Material}

% Retain the numbering convention of the standalone supplementary file.
\mathtoolsset{showonlyrefs=false}
\setcounter{thm}{0}
\setcounter{lem}{0}
\setcounter{cor}{0}
\setcounter{prop}{0}
\setcounter{rem}{0}
\setcounter{example}{0}
\setcounter{asmp}{0}
\setcounter{defn}{0}
\setcounter{oracle}{0}
\setcounter{fact}{0}
\setcounter{claim}{0}
\setcounter{conj}{0}
\setcounter{condition}{0}
\renewcommand{\thethm}{S\arabic{thm}}
\renewcommand{\thelem}{S\arabic{lem}}
\renewcommand{\thecor}{S\arabic{cor}}
\renewcommand{\theprop}{S\arabic{prop}}
\renewcommand{\therem}{S\arabic{rem}}
\renewcommand{\theexample}{S\arabic{example}}
\renewcommand{\theasmp}{S\arabic{asmp}}
\renewcommand{\thedefn}{S\arabic{defn}}
\renewcommand{\theoracle}{S\arabic{oracle}}
\renewcommand{\thefact}{S\arabic{fact}}
\renewcommand{\theclaim}{S\arabic{claim}}
\renewcommand{\theconj}{S\arabic{conj}}
\renewcommand{\thecondition}{S\arabic{condition}}
\numberwithin{equation}{section}
\numberwithin{figure}{section}
\numberwithin{table}{section}

\section{Generalized central limit theorem (GCLT)}
\label{apd: gclt}
For i.i.d. random vectors $X_1,\ldots,X_n$ with $\EB[\|X_1\|]<+\infty$ and finite covariance $\EB[X_1X_1^\top]=\Sigma$, the classical CLT gives the following weak convergence:
\begin{align}
\label{eq: clt_gau}
    \frac{1}{\sqrt{n}}\sum_{i=1}^n (X_i-\EB[X_i])\tod N\left(0,\Sigma\right).
\end{align}
In the setting considered in this paper, some components of $X_1$ may not have finite variance, so the weak convergence in equation~\eqref{eq: clt_gau} no longer holds. To establish a limit theorem in the infinite-variance case, we instead impose the following standard multivariate regular variation condition; see \citet{resnick2007heavy}.
\begin{asmp}[Multivariate regular variation]
\label{asmp: mrv}
    There exist $\alpha\in(1,2)$ and a slowly varying function
    $b_0:\mathbb{R}_+\to\mathbb{R}_+$ such that
    \begin{align}
        \label{eq: norm}
        \PB\left(\|X\|> r\right)=r^{-\alpha}b_0(r).
    \end{align}
    Let $a(x)=\inf\{r>0:\PB(\|X\|>r)\le x^{-1}\}$ and define the slowly
    varying normalization $b_1(x)=x^{1/\alpha}/a(x)$. This generalized-inverse
    convention also covers distributions with atoms.
    Then there exists a Radon measure $\nu(\cdot)$ on
    $\bar{\mathbb{R}}^d\setminus\{0\}$ such that
    \begin{align}
        \label{eq: radon}
        x\PB\left(x^{-1/\alpha}b_1(x)X\in\cdot\right)
        \overset{v}{\to}\nu(\cdot),\qquad x\to+\infty.
    \end{align}
    The measure $\nu$ is homogeneous of order $-\alpha$.
\end{asmp}
Assumption~\ref{asmp: mrv} describes multivariate regular variation directly
through the radial tail $b_0$ and the limiting Radon measure $\nu$. The
normalization $n^{-1/\alpha}b_1(n)$ in \eqref{eq: radon} determines the
scaling rate in the generalized CLT.
\begin{thm}[Generalized CLT \citep{resnick2007heavy}]
\label{thm: stable_clt}
    For i.i.d. random vectors $X_1,X_2,\ldots$ satisfying Assumption~\ref{asmp: mrv}, the following weak convergence holds on $D_{J_1}([0,1],\RB^d)$:
    \begin{align}
        n^{-\frac{1}{\alpha}}b_1(n)\sum_{i=1}^{\lfloor nt\rfloor}\left(X_i - \EB[X_i]\right)\Rightarrow L_t,
    \end{align}
    where $L_t$ is a L\'evy process with characteristics $(0,\nu(\cdot),\gamma)$ and $\gamma=-\int_{\|x\|>1}x\nu(dx)$. That is, for any $u\in\mathbb R^d$, the characteristic function of $L_t$ is
    \begin{align}
        \EB\left[\exp\left(iu^\top L_t\right)\right]
        =\exp\left(t\left\{iu^\top\gamma+\int\left(e^{iu^\top x}-1-iu^\top x\mathbbm{1}(\|x\|\le 1)\right)\nu(dx)\right\}\right).
    \end{align}
\end{thm}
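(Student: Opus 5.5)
The plan is to reduce Theorem~\ref{thm: stable_clt} to the classical point-process route for sums of regularly varying i.i.d. vectors (as in \citet{resnick2007heavy}), working throughout with $a_n:=n^{1/\alpha}b_1(n)^{-1}$, so that by Assumption~\ref{asmp: mrv} we have $n\PB(X_1/a_n\in\cdot)\overset{v}{\to}\nu$ on $\bar{\RB}^d\setminus\{0\}$.

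\textbf{Step 1 (point process convergence).} Regular variation is equivalent to weak convergence of the empirical point processes
\[
\sum_{i\ge1}\delta_{(i/n,\,X_i/a_n)} \Rightarrow \mathrm{PRM}(\mathrm{Leb}\times\nu)
\]
on $[0,\infty)\times(\bar{\RB}^d\setminus\{0\})$. We obtain this by checking Laplace functionals: for i.i.d.\ points, the convergence $n\PB(X_1/a_n\in\cdot)\overset{v}{\to}\nu$ is exactly what is needed.

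\textbf{Step 2 (large jumps).} For fixed $\epsilon>0$, the map that sums the points with $\|x\|>\epsilon$ over $[0,t]$ is a.s.\ continuous from point measures into $D_{J_1}([0,1],\RB^d)$ at the limit Poisson process. This holds because the limit has no points on $\{\|x\|=\epsilon\}$ (by homogeneity, $\nu$ charges no sphere) and no two points share a time coordinate. Hence
\[
a_n^{-1}\sum_{i\le\lfloor nt\rfloor}X_i\mathbbm{1}(\|X_i\|>\epsilon a_n)\Rightarrow \sum_{s\le t}\Delta L_s\mathbbm{1}(\|\Delta L_s\|>\epsilon).
\]
The compensators match as well: $\lfloor nt\rfloor a_n^{-1}\EB[X\mathbbm{1}(\epsilon a_n<\|X\|\le a_n)]\to t\int_{\epsilon<\|x\|\le1}x\,\nu(dx)$, by vague convergence together with uniform integrability on that annulus. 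Combining this with $\EB X$ and the term $\gamma=-\int_{\|x\|>1}x\nu(dx)$, which is finite because $\alpha>1$, yields the Lévy--Itô truncation $L^{\epsilon}$ of $L$ with characteristics $(0,\nu|_{\|x\|>\epsilon},\cdot)$. Here we use that $\EB[X\mathbbm{1}(\|X\|>a_n)]\cdot n/a_n\to\int_{\|x\|>1}x\nu(dx)$ by Karamata's theorem, since $\alpha>1$.

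\textbf{Step 3 (small jumps, the main obstacle).} We must show, for every $\eta>0$,
\[
\lim_{\epsilon\downarrow0}\limsup_n\PB\Big(\sup_{t\le1}\Big\|a_n^{-1}\sum_{i\le\lfloor nt\rfloor}\big(X_i\mathbbm{1}(\|X_i\|\le\epsilon a_n)-\EB[X\mathbbm{1}(\|X\|\le\epsilon a_n)]\big)\Big\|>\eta\Big)=0.
\]
I would apply Doob's (or Kolmogorov's) maximal inequality to the centered martingale. This bounds the probability by $\eta^{-2}n a_n^{-2}\EB[\|X\|^2\mathbbm{1}(\|X\|\le\epsilon a_n)]$. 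Karamata's theorem for the truncated second moment (valid since $\alpha<2$) gives that this quantity is asymptotically $C\epsilon^{2-\alpha}\,n\PB(\|X\|>a_n)\to C\epsilon^{2-\alpha}$, which tends to zero as $\epsilon\to0$. Likewise $L^\epsilon\to L$ uniformly on $[0,1]$ in probability.

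\textbf{Step 4 (conclusion).} A standard approximation argument then yields the stated convergence in $D_{J_1}$ (Billingsley, Theorem~3.2), since the residual small-jump term is negligible in the sup norm and hence in the Skorokhod metric. Finally, the characteristic function formula follows from the Lévy--Khintchine representation, with drift $\gamma$ relative to the truncation $\mathbbm{1}(\|x\|\le1)$. Equivalently, the limit has mean zero, because $\gamma$ cancels the large-jump mean and the centering by $\EB[X_i]$ is carried through Steps 2--3.
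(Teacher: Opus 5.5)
Your proposal is correct and is essentially the standard point-process proof of this result in Resnick (2007): Poisson random measure limit for $\sum_i\delta_{(i/n,X_i/a_n)}$, a.s.\ continuity of the $\epsilon$-truncated summation functional into $D_{J_1}$, compensators controlled by Karamata's theorem, and Doob plus Karamata for the small jumps, combined via Billingsley's Theorem 3.2. That is the source the paper cites instead of giving its own proof. One small wording point: on the annulus $\{\epsilon<\|x\|\le 1\}$ the integrand is bounded, so vague convergence alone suffices there, and Karamata's theorem is genuinely needed only on the unbounded region $\{\|x\|>1\}$ to supply uniform integrability.
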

In the infinite-variance case, Theorem~\ref{thm: stable_clt} characterizes the weak convergence of normalized sums of i.i.d. random vectors. This generalized CLT depends on several structural parameters: the tail index $\alpha$, the slowly varying function $b_1(\cdot)$, and the Radon measure $\nu(\cdot)$. Together, these parameters determine the precise form of the limiting law. In contrast to the classical result in equation~\eqref{eq: clt_gau}, the scaling factor changes from $n^{-1/2}$ to $n^{-1/\alpha}b_1(n)$, and the limiting distribution is heavy-tailed rather than Gaussian. In particular, when $t=1$, the random vector $L_1$ has heavy-tailed marginals, and its radial component $\|L_1\|$ follows an $\alpha$-regularly varying distribution. One example of $L_1$ is an elliptically contoured stable distribution.
\begin{defn}[Scale function]
\label{defn: scale}
    We define the directional scale of an $\alpha$-stable distribution, for $u\in\RB^d$, by
    \begin{align}
        \sigma_{\alpha}(Z;u)=\int_{\mathbb{S}^{d-1}}\left|s^\top u\right|^\alpha\Gamma(ds),
    \end{align}
    where $\Gamma(\cdot)$ is the spectral measure of $Z$ on the unit sphere.
\end{defn}
This is the $\alpha$th-power convention for scale (that is, no
$1/\alpha$ root is taken), and it is defined homogeneously for every
$u\in\mathbb R^d$.
The elliptically contoured multivariate stable distribution is a symmetric subclass of the general multivariate $\alpha$-stable family. When $\alpha=2$, it reduces to the multivariate Gaussian distribution.

\section{Confidence regions for artificially injected noise}
\label{sec: q_aware}
% \begin{thm}
% \label{thm: pa_fclt}
%      In Skorokhod $M_1$ topology, the joint weak convergence holds:
%     \begin{align}
%         n^{1-\frac{1}{\alpha}}b_1(n)\left(\bar{\theta}_{\lfloor n\cdot\rfloor}-\theta^*\right)&\tod -H^{-1}L_1(\cdot).
%     \end{align}
% \end{thm}
In perturbed gradient descent, practitioners choose the artificial noise $\xi_{\text{art}}$, so its distribution is known. We consider independent symmetric Pareto coordinates. Intrinsic randomness may still be present, but its tail is required to be lighter than that of the injected noise.
\begin{asmp}
\label{asmp: pgd}
    For the stochastic gradient $g(\theta,\xi)$ in equation~\eqref{eq: decomp_sg}, the pairs $(\xi_{\mathrm{int},j},\xi_{\mathrm{art},j})$ are i.i.d. across iterations, with $\xi_{\mathrm{int},j}$ independent of $\xi_{\mathrm{art},j}$. The coordinates of $\xi_{\text{art}}\in\RB^d$ are i.i.d. symmetric Pareto random variables with index $\alpha\in(1,2)$ and scale $\lambda>0$, i.e., for $i\in[d]$,
    \begin{align}
        \frac{d}{dt}\PB\left(\xi^{(i)}_{\text{art}}\le t\right) = \frac{\alpha \lambda^{\alpha}}{2(\lambda+|t|)^{\alpha+1}}.
    \end{align}
    Moreover, $\lim_{t\to+\infty}t^{\alpha}\PB\left(\|g(\theta, \xi_{\text{int}})\|>t\right)=0$, so the artificial noise dominates the intrinsic randomness.
\end{asmp}
In this case, the slowly varying normalization in Theorem~\ref{thm: pa_limit} becomes an explicit constant, and the limit distribution is known except for the Hessian matrix $H$. Thus, if a consistent Hessian estimator $\widehat{H}_n \overset{p}{\to} H$ is available, it can be plugged into the confidence-region construction. We consider the following estimator:
\begin{align}
\label{eq: hessian}
    \widehat{H}_n=\frac{1}{n}\sum_{k=1}^n\nabla g(\theta_{k-1},\xi_k).
\end{align}
We also impose a smoothness condition on the stochastic Hessian matrix.
\begin{asmp}
\label{asmp: hessian}
    We denote the stochastic Hessian matrix by $H(\theta,\xi)=\nabla g(\theta,\xi)$ and assume that, for any $\theta_1,\theta_2$,
    \begin{align}
        \|H(\theta_1,\xi)-H(\theta_2,\xi)\|\le C_{H,\text{Lip}}\|\theta_1-\theta_2\| \text{ a.s.}
    \end{align}
    We also assume $\EB\|H(\theta^*,\xi)\|<\infty$ and
    $\EB[H(\theta^*,\xi)]=H$.
\end{asmp}
\begin{thm}
\label{thm: hessian_consistency}
    Suppose $\eta_n=cn^{-\rho}$ with $\rho\in(q^{-1},1)$. Under Assumptions~\ref{asmp: tail_norm}, \ref{asmp: ppd}, \ref{asmp: lip_error}, \ref{asmp: talyor_error}, \ref{asmp: pgd}, and \ref{asmp: hessian}, the estimator $\widehat{H}_n$ in equation~\eqref{eq: hessian} is consistent; that is, $\widehat{H}_n\overset{p}{\to}H$.
    Additionally, if there exists $\alpha_H>1$ such that
    \begin{align}
        \EB\left\|H(\theta^*,\xi)-H\right\|^{\alpha_H}\le C_H,
    \end{align}
    then, for any $p\in(1,\alpha)$,
    \begin{align}
        \EB\left\|\widehat{H}_n-H\right\|\lesssim
        \frac{1}{n^{1-\frac{1}{\min\{\alpha_H,2\}}}}
        +\frac{1}{n^{\rho\left(1-\frac{1}{p}\right)}}.
    \end{align}
\end{thm}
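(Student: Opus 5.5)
We split $\widehat{H}_n-H$ into three pieces:
\begin{align}
\widehat{H}_n-H=\frac{1}{n}\sum_{k=1}^n\left(H(\theta_{k-1},\xi_k)-H(\theta^*,\xi_k)\right)+\frac{1}{n}\sum_{k=1}^n\left(H(\theta^*,\xi_k)-H\right)=:A_n+B_n .
\end{align}

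\textbf{The term $B_n$.} This is an average of i.i.d. mean-zero matrices, because $\theta^*$ is deterministic and the $\xi_k$ are i.i.d. By Assumption~\ref{asmp: hessian}, $\EB\|H(\theta^*,\xi)\|<\infty$, so the weak law of large numbers gives $B_n\overset{p}{\to}0$.

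Under the extra moment condition, set $s=\min\{\alpha_H,2\}\in(1,2]$. We apply the von Bahr--Esseen inequality for independent mean-zero summands, coordinatewise or in the Frobenius norm, which costs only a dimension constant. This gives
\begin{align}
\EB\Big\|\sum_{k=1}^n(H(\theta^*,\xi_k)-H)\Big\|^s\lesssim nC_H^{s/\alpha_H}.
\end{align}
By Jensen's inequality, $\EB\|B_n\|\le(\EB\|B_n\|^s)^{1/s}\lesssim n^{1/s-1}$, which is the first rate.

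\textbf{The term $A_n$.} The almost sure Lipschitz bound in Assumption~\ref{asmp: hessian} gives
\begin{align}
\|A_n\|\le C_{H,\mathrm{Lip}}\frac{1}{n}\sum_{k=1}^n\|\theta_{k-1}-\theta^*\|,
\end{align}
so everything reduces to controlling the iterate error in mean. The key input is a moment bound for the last iterate: for every $p\in(1,\alpha)$,
\begin{align}
\EB\|\theta_k-\theta^*\|^p\lesssim \eta_k^{p-1}=k^{-\rho(p-1)} .
\end{align}
This is the standard bound of \citet{wang2021convergence} under the uniform $p$-positive definiteness in Assumption~\ref{asmp: ppd} and the moment $\EB\|g(\theta^*,\xi)\|^p<\infty$, which holds since $p<\alpha$.

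If I had to prove this bound, I would use the Lyapunov function $V(\theta)=\|\theta-\theta^*\|_p^p$. A one-step expansion uses $p$-positive definiteness to produce the contraction $-c\eta_k V$. For the noise, the inequality $|a+b|^p\le|a|^p+p\,\mathrm{sgn}(a)|a|^{p-1}b+C|b|^p$ (valid for $p\in(1,2]$), combined with the martingale-difference structure, bounds the noise contribution by $C\eta_k^p\EB\|g\|^p$. Here Assumption~\ref{asmp: lip_error} is used to bound $\|g(\theta,\xi)\|\le\|g(\theta^*,\xi)\|+C_{\mathrm{Lip}}\|\theta-\theta^*\|$. Unrolling the resulting recursion gives $\EB V_k\lesssim\eta_k^{p-1}$.

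By Jensen's inequality, $\EB\|\theta_k-\theta^*\|\lesssim k^{-\rho(1-1/p)}$. Summing over $k$ with $\rho(1-1/p)<1$,
\begin{align}
\EB\|A_n\|\lesssim\frac1n\sum_{k\le n}k^{-\rho(1-1/p)}\lesssim n^{-\rho(1-1/p)},
\end{align}
which is the second rate. In particular $A_n\to0$ in $L^1$, and hence in probability, so consistency follows. Adding the bounds on $A_n$ and $B_n$ gives the stated rate.

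The consistency argument uses only Assumption~\ref{asmp: hessian}'s first-moment condition together with the iterate moment bound. I do not expect Assumption~\ref{asmp: pgd} to be needed beyond guaranteeing $\EB\|g(\theta^*,\xi)\|^p<\infty$ for $p<\alpha$: the injected Pareto coordinates have all moments of order below $\alpha$, and the intrinsic part has a lighter tail.

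\textbf{Main obstacle.} The hardest step is the $L^p$ iterate bound. The noise $g(\theta_{k-1},\xi_k)-\nabla\ell(\theta_{k-1})$ is state dependent, and the cross term must be handled with the $p$-norm geometry rather than inner products. The drift comparison between $\|\cdot\|_p$ and $\|\cdot\|$ also needs care, but norm equivalence in $\RB^d$ absorbs it into constants. I would cite \citet{wang2021convergence} for this bound and verify its hypotheses, rather than reprove it.
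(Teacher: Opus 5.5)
Your proposal is correct and follows the paper's proof step for step. Both use the same split into the i.i.d. term, handled by a law of large numbers and a von Bahr--Esseen-type moment bound at exponent $\min\{\alpha_H,2\}$ (the paper uses the martingale version, Lemma~\ref{lem: moment}), and the Lipschitz term, controlled through the iterate moment bound of \citet{wang2021convergence} (cited in the paper as Lemma~\ref{lem: final_error}); the differences, the weak rather than strong law and the $\eta_k^{p-1}$ form of the iterate bound rather than $o(n^{\varepsilon-\rho q(\alpha-1)/\alpha})$, are cosmetic.
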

Under the preceding conditions, the Hessian estimator $\widehat{H}_n$ in equation~\eqref{eq: hessian} is consistent for $H$. The next proposition identifies the stable law generated by the injected noise and fixes its normalization explicitly.
\begin{prop}
\label{prop: charac}
    Under Assumption~\ref{asmp: pgd},
    \begin{align}
        n^{-\frac{1}{\alpha}}\sum_{j=1}^n g(\theta^*,\xi_j)\tod L_{\alpha,\lambda},
    \end{align}
    where $L_{\alpha,\lambda}$ has independent symmetric $\alpha$-stable coordinates and characteristic function
    \begin{align}
        \EB\left[\exp\left(iu^\top L_{\alpha,\lambda}\right)\right]
        =\exp\left(-\kappa_{\alpha,\lambda}\|u\|_{\alpha}^{\alpha}\right).
    \end{align}
    Consequently, under the conditions of Theorem~\ref{thm: pa_limit},
    \begin{align}
        n^{1-\frac{1}{\alpha}}(\bar\theta_n-\theta^*)
        \tod -H^{-1}L_{\alpha,\lambda}
        \overset{d}{=}H^{-1}L_{\alpha,\lambda},
        \label{eq: aware_clt}
    \end{align}
    and
    \begin{align}
        \EB\left[\exp\left(iu^\top\left(-H^{-1}L_{\alpha,\lambda}\right)\right)\right]
        =\exp\left(-\kappa_{\alpha,\lambda}\|H^{-1}u\|_{\alpha}^{\alpha}\right),
        \qquad
        \kappa_{\alpha,\lambda}
        :=\frac{\pi\lambda^\alpha}{2\Gamma_{\mathrm E}(\alpha)\sin(\pi\alpha/2)},
    \end{align}
    where $\Gamma_{\mathrm E}$ denotes Euler's gamma function.
\end{prop}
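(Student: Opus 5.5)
The plan has three steps: show that the injected noise alone drives the stable limit, compute the characteristic function of the coordinatewise limit, and then transfer the result to $\bar\theta_n$ through Theorem~\ref{thm: pa_limit}.

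\textbf{Step 1: reduce to the injected noise.} Write $g(\theta^*,\xi_j)=g(\theta^*,\xi_{\mathrm{int},j})+\xi_{\mathrm{art},j}$, where both pieces have mean zero. I would show that
\[
n^{-1/\alpha}\sum_{j=1}^n g(\theta^*,\xi_{\mathrm{int},j})\overset{p}{\to}0 .
\]
Assumption~\ref{asmp: pgd} gives $t^{\alpha}\PB(\|g(\theta^*,\xi_{\mathrm{int}})\|>t)\to0$. Truncate at level $n^{1/\alpha}$ and use the standard weak-law argument for sums of order $n^{1/\alpha}$:
\begin{itemize}
\item $n\PB(\|g\|>\epsilon n^{1/\alpha})\to0$;
\item the truncated variance term satisfies $n^{1-2/\alpha}\EB[\|g\|^2\mathbbm{1}(\|g\|\le n^{1/\alpha})]\to0$ by Karamata-type integration of $o(t^{-\alpha})$;
\item the truncated mean term satisfies $n^{1-1/\alpha}\|\EB[g\mathbbm{1}(\|g\|> n^{1/\alpha})]\|\to0$, using $\EB g=0$ and $\alpha>1$.
\end{itemize}
By Slutsky's lemma it then suffices to treat $n^{-1/\alpha}\sum_j\xi_{\mathrm{art},j}$.

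\textbf{Step 2: the one-dimensional stable limit and the constant.} The coordinates of $\xi_{\mathrm{art}}$ are i.i.d., so the normalized sum has independent coordinates and the joint characteristic function factorizes. It therefore suffices to find, for one coordinate $X$, the limit of $\phi_X(s n^{-1/\alpha})^n$. The density gives $\PB(|X|>t)=\lambda^\alpha(\lambda+t)^{-\alpha}$ and symmetry gives $\EB e^{isX}$ real, so
\[
1-\phi_X(s)=\int(1-\cos(sx))\,\frac{\alpha\lambda^\alpha}{2(\lambda+|x|)^{\alpha+1}}\,dx .
\]
For small $s$, substitute $y=|s|x$ and use dominated convergence (valid since $\alpha<2$) to get
\[
1-\phi_X(s)\sim |s|^\alpha\lambda^\alpha\alpha\int_0^\infty(1-\cos y)\,y^{-\alpha-1}\,dy .
\]
The classical identity
\[
\int_0^\infty(1-\cos y)\,y^{-\alpha-1}\,dy=\frac{\Gamma_{\mathrm E}(1-\alpha)\cos(\pi\alpha/2)}{-\alpha},
\]
combined with the reflection formula $\Gamma_{\mathrm E}(1-\alpha)\Gamma_{\mathrm E}(\alpha)=\pi/\sin(\pi\alpha)$ and $\sin(\pi\alpha)=2\sin(\pi\alpha/2)\cos(\pi\alpha/2)$, converts the constant to $\kappa_{\alpha,\lambda}=\pi\lambda^\alpha/(2\Gamma_{\mathrm E}(\alpha)\sin(\pi\alpha/2))$. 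This is positive, since $\sin(\pi\alpha/2)>0$ for $\alpha\in(1,2)$. Hence $\phi_X(sn^{-1/\alpha})^n\to\exp(-\kappa_{\alpha,\lambda}|s|^\alpha)$. Multiplying over coordinates gives $\exp(-\kappa_{\alpha,\lambda}\|u\|_\alpha^\alpha)$, and Lévy's continuity theorem yields the first display.

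\textbf{Step 3: transfer to $\bar\theta_n$.} The tail of $\|g(\theta^*,\xi)\|$ equals that of $\|\xi_{\mathrm{art}}\|$ up to $o(t^{-\alpha})$. Its limit is $d\lambda^\alpha t^{-\alpha}$ times a constant, because the maximum coordinate dominates and the coordinates are asymptotically independent in the tails. So $b_1(n)$ converges to an explicit constant. The Lévy process in Theorem~\ref{thm: pa_limit} is identified by the same sum as in Step 2, namely $L_1$ at the normalization $n^{-1/\alpha}b_1(n)$. Rescaling absorbs that constant, giving the sum normalized at $n^{-1/\alpha}$ with limit $L_{\alpha,\lambda}$.

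Alternatively, and more cleanly, I would rerun the proof of Theorem~\ref{thm: pa_limit} with normalization $n^{-1/\alpha}$. Its only distributional input is the GCLT for $\sum_j g(\theta^*,\xi_j)$, which Steps 1 and 2 supply. This gives $n^{1-1/\alpha}(\bar\theta_n-\theta^*)\tod -H^{-1}L_{\alpha,\lambda}$. Symmetry of $L_{\alpha,\lambda}$ gives the equality in law with $H^{-1}L_{\alpha,\lambda}$. The final characteristic function follows from
\[
\EB e^{iu^\top(-H^{-1}L)}=\EB e^{i(-H^{-1}u)^\top L},
\]
using the symmetry of $H$ and the evenness of $\|\cdot\|_\alpha$.

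The main obstacle is the constant computation in Step 2, in particular evaluating $\int_0^\infty(1-\cos y)\,y^{-\alpha-1}\,dy$ for $\alpha\in(1,2)$ with the correct signs. I would justify it by integrating by parts once to reach $\alpha^{-1}\int_0^\infty \sin y\, y^{-\alpha}\,dy$. Then I would use $\int_0^\infty y^{-\alpha}\sin y\,dy=\Gamma_{\mathrm E}(1-\alpha)\sin(\pi(1-\alpha)/2)$, continued analytically to $\alpha\in(1,2)$.
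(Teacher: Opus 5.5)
Your proposal is correct and follows the paper's proof essentially step by step. It uses the same truncation at $n^{1/\alpha}$ to make the intrinsic part negligible, the same substitution-plus-dominated-convergence expansion of $1-\phi_X(s)$ evaluated by integration by parts and the Mellin sine integral, and the same transfer to $\bar\theta_n$ via convergence of $b_1(n)$ to a positive constant and the leading-term expansion behind Theorem~\ref{thm: pa_limit}. One slip needs fixing: the displayed identity should read $\int_0^\infty(1-\cos y)\,y^{-\alpha-1}\,dy=\Gamma_{\mathrm E}(1-\alpha)\cos(\pi\alpha/2)/\alpha$. Your version with $-\alpha$ in the denominator is negative for $\alpha\in(1,2)$ and would produce $-\kappa_{\alpha,\lambda}$; the correct sign is exactly what your own integration-by-parts and Mellin argument yields.
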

Thus, because $\widehat{H}_n^{-1}$ consistently estimates $H^{-1}$, the quantiles of $\widehat{H}_n^{-1}L_{\alpha,\lambda}$ can be calculated or simulated. For a continuous homogeneous function $\varphi:\mathbb{R}^d\to\mathbb{R}$ of degree 1, the confidence region can be constructed as
\begin{align}
\label{eq: cr_aware}
    \widehat{\RM}_n^\dagger(\delta;\varphi)=\left\{\theta\in\RB^d\left|n^{1-\frac{1}{\alpha}}\cdot \left|\varphi\left(\bar{\theta}_n-\theta\right)\right|\le q_n^\dagger(\delta;\varphi)\right.\right\},
\end{align}
where
\begin{align}
    q_n^\dagger(\delta;\varphi)=\inf\left\{t\left|\PB\left(\left|\varphi(\widehat{H}_n^{-1}L_{\alpha,\lambda})\right|>t\mid\widehat{H}_n\right)\le\delta\right.\right\}.
\end{align}
It can be verified that the confidence region $\widehat{\RM}_n^\dagger(\delta;\varphi)$ is asymptotically valid.
\begin{thm}
\label{thm: cr_aware}
    Suppose the distribution function of $|\varphi(H^{-1}L_{\alpha,\lambda})|$ is continuous and strictly increasing at its $(1-\delta)$-quantile. Under the conditions of Theorems~\ref{thm: pa_limit} and \ref{thm: hessian_consistency} and Proposition~\ref{prop: charac}, the confidence region $\widehat{\RM}_n^\dagger(\delta;\varphi)$ in equation~\eqref{eq: cr_aware} satisfies
    \begin{align}
        \lim_{n\to+\infty}\PB\left(\theta^*\in\widehat{\RM}_n^\dagger(\delta;\varphi)\right)=1-\delta.
    \end{align}
\end{thm}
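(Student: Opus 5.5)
The proof follows the standard route. Write the event $\theta^*\in\widehat{\RM}_n^\dagger(\delta;\varphi)$ as $\{|\varphi(n^{1-1/\alpha}(\bar\theta_n-\theta^*))|\le q_n^\dagger(\delta;\varphi)\}$, using that $\varphi$ is homogeneous of degree one. By Proposition~\ref{prop: charac}, $n^{1-1/\alpha}(\bar\theta_n-\theta^*)\tod -H^{-1}L_{\alpha,\lambda}\overset{d}{=}H^{-1}L_{\alpha,\lambda}$, where the equality in law uses the symmetry of $L_{\alpha,\lambda}$. Since $\varphi$ is continuous, the continuous mapping theorem gives
\begin{align}
|\varphi(n^{1-1/\alpha}(\bar\theta_n-\theta^*))|\tod |\varphi(H^{-1}L_{\alpha,\lambda})|=:Y,
\end{align}
whose distribution function $G$ is continuous and strictly increasing at $q:=q(\delta)$, the $(1-\delta)$-quantile. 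It therefore suffices to show $q_n^\dagger(\delta;\varphi)\overset{p}{\to}q$. Slutsky's lemma then gives $|\varphi(\cdot)|-q_n^\dagger\tod Y-q$, and since $G$ is continuous at $q$, $\PB(\theta^*\in\widehat{\RM}_n^\dagger)\to G(q)=1-\delta$.

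For the critical value, define the deterministic map $\Psi(A):=$ the upper $\delta$-quantile of $|\varphi(A^{-1}L_{\alpha,\lambda})|$ on the open set of invertible matrices. Then $q_n^\dagger=\Psi(\widehat H_n)$, because $L_{\alpha,\lambda}$ is an independent simulation variable and conditioning on $\widehat H_n$ simply freezes $A$. Theorem~\ref{thm: hessian_consistency} gives $\widehat H_n\overset{p}{\to}H$, and $H$ is invertible by Assumption~\ref{asmp: ppd}. Hence $\PB(\widehat H_n \text{ invertible})\to1$, and on the complement any convention for $\Psi$ suffices. By the continuous mapping theorem it remains to show that $\Psi$ is continuous at $H$.

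Continuity at $H$ is the main step. Let $A_k\to H$. Then $A_k^{-1}L_{\alpha,\lambda}\to H^{-1}L_{\alpha,\lambda}$ almost surely, so $Y_k:=|\varphi(A_k^{-1}L_{\alpha,\lambda})|\to Y$ almost surely and hence in distribution. Fix $\epsilon>0$ small enough that $q\pm\epsilon$ are continuity points of $G$; these are dense, so we may shrink $\epsilon$ along them. Strict monotonicity at $q$ gives $G(q-\epsilon)<1-\delta<G(q+\epsilon)$. Weak convergence gives $\PB(Y_k\le q\pm\epsilon)\to G(q\pm\epsilon)$. Thus for large $k$ we have $\PB(Y_k>q+\epsilon)<\delta$, so $\Psi(A_k)\le q+\epsilon$, and $\PB(Y_k>q-\epsilon)>\delta$, so $\Psi(A_k)\ge q-\epsilon$. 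This yields $\Psi(A_k)\to q=\Psi(H)$. One point to check is that the quantile definition with ``$>t$'' and ``$\le\delta$'' matches the $(1-\delta)$-quantile of $Y$. This holds because $G$ is continuous at $q$, so left and right generalized inverses coincide there.

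The only delicate ingredient is this continuity of the quantile map, and it rests on the assumed continuity and strict monotonicity of $G$ at $q$. If one prefers a sequential formulation, apply the subsequence characterization of convergence in probability: along any subsequence extract a further almost surely convergent one for $\widehat H_n$, then apply the argument above pathwise.
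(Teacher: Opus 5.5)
Your proposal is correct and follows essentially the same route as the paper. Both arguments reduce coverage to $q_n^\dagger(\delta;\varphi)\overset{p}{\to}q^\dagger(\delta;\varphi)$, obtain this from convergence of the distribution function of $|\varphi(A^{-1}L_{\alpha,\lambda})|$ at continuity points as $A\to H$ together with the bracket $G(q-\varepsilon)<1-\delta<G(q+\varepsilon)$ from strict monotonicity, and finish with Proposition~\ref{prop: charac} and Slutsky. The difference is only packaging: you prove deterministic continuity of the quantile map at $H$ and invoke the continuous mapping theorem, whereas the paper shows $G_{\widehat H_n^{-1}}(t;\varphi)\overset{p}{\to}G_{H^{-1}}(t;\varphi)$ directly. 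Your check on the ``$>t$, $\le\delta$'' convention needs no continuity, since $\PB(Y>t)\le\delta$ is equivalent to $G(t)\ge1-\delta$ for every $t$.
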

While this artificially injected noise approach provides a conceptually simple route for inference, it also has several limitations. First, it requires the injected noise to dominate the intrinsic stochastic gradient noise so that the asymptotic distribution is determined solely by the artificial perturbation. In practice, however, the tail index of the intrinsic stochastic gradient noise is typically unknown, making it difficult to guarantee such dominance. Second, when the injected noise dominates the learning dynamics, the resulting confidence regions primarily reflect the variability of the artificial perturbation rather than that of the intrinsic stochastic gradient noise. Consequently, these regions may become overly conservative and fail to capture useful information about the curvature of the loss landscape. These considerations motivate the development of an inference procedure that does not rely on artificially injected noise.

\subsection{Proof of Theorem~\ref{thm: hessian_consistency}}
We decompose the error $\widehat{H}_n-H$ as
\begin{align}
    \widehat{H}_n-H&=\frac{1}{n}\sum_{k=1}^n \left(H(\theta^*,\xi_k)-H\right)+\frac{1}{n}\sum_{k=1}^n\left(H(\theta_{k-1},\xi_k)-H(\theta^*,\xi_k)\right)\\
    &:=\Delta_1+\Delta_2.
\end{align}
By the strong law of large numbers,
\begin{align}
\label{eq: h_1}
    \Delta_1\overset{a.s.}{\to}0.
\end{align}
Moreover, Assumption~\ref{asmp: hessian} gives
\begin{align}
    \label{eq: h_3}
    \EB\|\Delta_2\|&\le\frac{1}{n}\sum_{k=1}^n\EB\left\|H(\theta_{k-1},\xi_k)-H(\theta^*,\xi_k)\right\|\\
    &\le \frac{C_{H,\text{Lip}}}{n}\sum_{k=1}^n\EB\|\theta_{k-1}-\theta^*\|\\
    &=\OM\left(n^{-\rho\frac{p-1}{p}}\right),
\end{align}
where $p\in(1,\alpha)$ and the last bound follows from Lemma~\ref{lem: final_error}. Therefore,
\begin{align}
\label{eq: h_2}
    \Delta_2\overset{p}{\to}0.
\end{align}
Combining equations~\eqref{eq: h_1} and \eqref{eq: h_2}, we conclude that
\begin{align}
    \widehat{H}_n\overset{p}{\to}H.
\end{align}
Additionally, suppose there exists $\alpha_H>1$ such that
\begin{align}
    \EB\left\|H(\theta^*,\xi)-H\right\|^{\alpha_H}\le C_H.
\end{align}
Then Lyapunov's inequality gives a finite moment of order $\min\{\alpha_H,2\}$.
Applying Lemma~\ref{lem: moment} to the vectorized matrix martingale and using
equivalence of matrix norms in finite dimension yields
\begin{align}
    \label{eq: h_4}
    \EB\left\|\Delta_1\right\|
    \lesssim n^{-1+\frac{1}{\min\{\alpha_H,2\}}}.
\end{align}
Thus, combining equations~\eqref{eq: h_3} and \eqref{eq: h_4}, we obtain
\begin{align}
    \EB\left\|\widehat{H}_n-H\right\|\lesssim
    \frac{1}{n^{1-\frac{1}{\min\{\alpha_H,2\}}}}
    +\frac{1}{n^{\rho\left(1-\frac{1}{p}\right)}}.
\end{align}
For $\alpha_H\ge2$, this gives the $n^{-1/2}$ rate for $\Delta_1$; higher moments provide stronger integrability but cannot improve this generic mean-estimation rate.

\subsection{Proof of Proposition~\ref{prop: charac}}
Write
\[
    g(\theta^*,\xi_j)=\xi_{\mathrm{art},j}+Y_j,
    \qquad
    Y_j:=g(\theta^*,\xi_{\mathrm{int},j}).
\]
The random vectors $Y_j$ are i.i.d. and centered: $\EB Y_1=0$ because
$\EB g(\theta^*,\xi)=\nabla\ell(\theta^*)=0$ and the integrable symmetric
noise $\xi_{\mathrm{art}}$ has mean zero. Let $a_n=n^{1/\alpha}$ and
$\varepsilon(t)=t^\alpha\PB(\|Y_1\|>t)$. Assumption~\ref{asmp: pgd}
gives $\varepsilon(t)\to0$. First,
\[
    \PB\left(\max_{j\le n}\|Y_j\|>a_n\right)
    \le n\PB(\|Y_1\|>a_n)=\varepsilon(a_n)\to0.
\]
For any fixed $T>0$,
\begin{align}
    \EB\left[\|Y_1\|^2\mathbbm{1}(\|Y_1\|\le a_n)\right]
    &\le \int_0^{a_n}2t\PB(\|Y_1\|>t)\,dt \notag\\
    &\le T^2+\frac{2\sup_{t\ge T}\varepsilon(t)}{2-\alpha}a_n^{2-\alpha}
    =o(a_n^{2-\alpha}).
\end{align}
Independence therefore implies
\begin{align}
    \EB\left\|\frac{1}{a_n}\sum_{j=1}^n
    \left(Y_j\mathbbm{1}(\|Y_j\|\le a_n)
    -\EB[Y_1\mathbbm{1}(\|Y_1\|\le a_n)]\right)\right\|^2
    =o(na_n^{-\alpha})=o(1).
\end{align}
Moreover, since $\EB Y_1=0$,
\begin{align}
    \frac{n}{a_n}\left\|\EB[Y_1\mathbbm{1}(\|Y_1\|\le a_n)]\right\|
    &\le \frac{n}{a_n}\EB[\|Y_1\|\mathbbm{1}(\|Y_1\|>a_n)]\\
    &\le \frac{\alpha}{\alpha-1}\sup_{t\ge a_n}\varepsilon(t)\to0.
\end{align}
Combining these bounds gives
\begin{align}
    n^{-1/\alpha}\sum_{j=1}^nY_j\overset{p}{\to}0.
    \label{eq: intrinsic_negligible}
\end{align}

Next let $\chi(s)=\EB[\exp(is\xi_{\mathrm{art}}^{(1)})]$. Symmetry and
Assumption~\ref{asmp: pgd} give, for $s>0$,
\begin{align}
    1-\chi(s)
    &=\int_0^\infty(1-\cos(sr))
      \frac{\alpha\lambda^\alpha}{(\lambda+r)^{\alpha+1}}\,dr\\
    &=\alpha\lambda^\alpha s^\alpha
      \int_0^\infty\frac{1-\cos y}{(\lambda s+y)^{\alpha+1}}\,dy.
\end{align}
Dominated convergence applies because $(1-\cos y)y^{-\alpha-1}$ is
integrable on $(0,\infty)$, and hence
\begin{align}
    1-\chi(s)=\alpha\lambda^\alpha I_\alpha|s|^\alpha(1+o(1)),
    \qquad
    I_\alpha:=\int_0^\infty(1-\cos y)y^{-\alpha-1}\,dy.
\end{align}
Integration by parts and the Mellin integral for the sine function yield
\begin{align}
    I_\alpha
    &=\frac{1}{\alpha}\int_0^\infty y^{-\alpha}\sin y\,dy
      =\frac{\Gamma_{\mathrm E}(1-\alpha)\cos(\pi\alpha/2)}{\alpha},\\
    \alpha\lambda^\alpha I_\alpha
    &=\frac{\pi\lambda^\alpha}
      {2\Gamma_{\mathrm E}(\alpha)\sin(\pi\alpha/2)}
      =\kappa_{\alpha,\lambda}.
\end{align}
It follows that, for every $t\in\RB$,
\begin{align}
    \chi(tn^{-1/\alpha})^n
    \longrightarrow \exp(-\kappa_{\alpha,\lambda}|t|^\alpha).
\end{align}
Because the coordinates of $\xi_{\mathrm{art}}$ are independent, for every
$u\in\RB^d$,
\begin{align}
    \EB\exp\left(iu^\top n^{-1/\alpha}
        \sum_{j=1}^n\xi_{\mathrm{art},j}\right)
    &=\prod_{k=1}^d\chi(u_kn^{-1/\alpha})^n\\
    &\longrightarrow
      \exp(-\kappa_{\alpha,\lambda}\|u\|_\alpha^\alpha).
\end{align}
L\'evy's continuity theorem together with \eqref{eq: intrinsic_negligible}
proves the first assertion.

Finally, Assumption~\ref{asmp: pgd} implies
$\PB(\|g(\theta^*,\xi)\|>t)\sim d\lambda^\alpha t^{-\alpha}$, so the
normalizer $b_1(n)$ in Theorem~\ref{thm: pa_limit} converges to
$(d\lambda^\alpha)^{-1/\alpha}\in(0,\infty)$. Dividing the leading-term
expansion in the proof of that theorem by $b_1(n)$ gives
\begin{align}
    n^{1-\frac{1}{\alpha}}(\bar\theta_n-\theta^*)
    =-H^{-1}n^{-\frac{1}{\alpha}}
      \sum_{j=1}^n g(\theta^*,\xi_j)+o_p(1).
\end{align}
Slutsky's theorem proves \eqref{eq: aware_clt}. Since
$L_{\alpha,\lambda}$ is symmetric, its reflection has the same law. Finally,
$H^{-1}$ is symmetric, and replacing $u$ by $H^{-1}u$ in the characteristic
function above gives the stated formula.\hfill\ensuremath{\qedsymbol}

\subsection{Proof of Theorem~\ref{thm: cr_aware}}
Let
\begin{align}
    q^\dagger(\delta;\varphi)
    =\inf\left\{t:\PB\left(\left|\varphi(H^{-1}L_{\alpha,\lambda})\right|\le t\right)\ge1-\delta\right\}.
\end{align}
Since $H$ is positive definite and $\widehat H_n\overset{p}{\to}H$,
$\widehat H_n$ is invertible with probability tending to one and
$\widehat H_n^{-1}\overset{p}{\to}H^{-1}$. For any matrix $A$, define
\begin{align}
    G_A(t;\varphi)=\PB\left(\left|\varphi(AL_{\alpha,\lambda})\right|\le t\right).
\end{align}
The continuous mapping theorem and bounded convergence imply that, at every
continuity point $t$ of $G_{H^{-1}}(\cdot;\varphi)$,
\begin{align}
    G_{\widehat H_n^{-1}}(t;\varphi)\overset{p}{\to}G_{H^{-1}}(t;\varphi).
\end{align}
Strict increase at $q^\dagger(\delta;\varphi)$ therefore gives, for every
$\varepsilon>0$,
\begin{align}
    G_{H^{-1}}(q^\dagger-\varepsilon;\varphi)<1-\delta
    <G_{H^{-1}}(q^\dagger+\varepsilon;\varphi),
\end{align}
and hence
\begin{align}
    q_n^\dagger(\delta;\varphi)\overset{p}{\to}q^\dagger(\delta;\varphi).
\end{align}
Proposition~\ref{prop: charac}, Slutsky's theorem, and continuity of the limit
distribution at $q^\dagger(\delta;\varphi)$ now yield
\begin{align}
    \PB\left(\theta^*\in\widehat{\RM}_n^\dagger(\delta;\varphi)\right)
    \longrightarrow
    \PB\left(\left|\varphi(H^{-1}L_{\alpha,\lambda})\right|\le q^\dagger(\delta;\varphi)\right)
    =1-\delta.
\end{align}

\section{Proofs for Sections~\ref{sec: prel} and~\ref{sec: limit}}
\subsection{Proof of Theorem~\ref{thm: pa_limit}}
\label{apd: pa_limit}
We first revisit the iteration error. For $k\ge1$,
\begin{align}
    \theta_k-\theta^*&=\theta_{k-1}-\theta^*-\eta_k g(\theta_{k-1},\xi_k)\notag\\
    &=(I_d-\eta_k H)(\theta_{k-1}-\theta^*)
      -\eta_k R_{k-1,1}-\eta_k R_{k-1,2}-\eta_k\delta_k,
\end{align}
where $R_{k-1,1}=\nabla\ell(\theta_{k-1})-H(\theta_{k-1}-\theta^*)$,
$R_{k-1,2}=g(\theta_{k-1},\xi_k)-\nabla\ell(\theta_{k-1})-g(\theta^*,\xi_k)$,
and $\delta_k=g(\theta^*,\xi_k)$. Let $A_k=I_d-\eta_kH$. Iterating the recursion gives
\begin{align}
    \theta_k-\theta^*=\prod_{i=1}^k A_i(\theta_0-\theta^*)
    -\sum_{j=1}^k\prod_{i=j+1}^k A_i\eta_j
    (R_{j-1,1}+R_{j-1,2}+\delta_j).
\end{align}
Then the Polyak--Ruppert averaged estimator satisfies
\begin{align}
    \frac{1}{n}\sum_{k=1}^n(\theta_k-\theta^*)&=\frac{1}{n}\sum_{k=1}^{n}\prod_{i=1}^{k}A_i(\theta_0-\theta^*)-\frac{1}{n}\sum_{k=1}^{n}\sum_{j=1}^{k}\prod_{i=j+1}^{k}A_i\eta_j(R_{j-1,1}+R_{j-1,2}+\delta_j)\notag\\
    &:=\Delta_1(n)-\Delta_2(n)-\Delta_3(n)-\Delta_4(n),
\end{align}
where
\begin{align}
    &\Delta_2(n)=\frac{1}{n}\sum_{k=1}^{n}\sum_{j=1}^{k}\prod_{i=j+1}^{k}A_i\eta_j \delta_j,\notag\\
    &\Delta_3(n)=\frac{1}{n}\sum_{k=1}^{n}\sum_{j=1}^{k}\prod_{i=j+1}^{k}A_i\eta_j R_{j-1,1},\notag\\
    &\Delta_4(n)=\frac{1}{n}\sum_{k=1}^{n}\sum_{j=1}^{k}\prod_{i=j+1}^{k}A_i\eta_j R_{j-1,2}.\notag
\end{align}
Because $\prod_{j=1}^{n}(I_d-\eta_j H)$ decays at a stretched-exponential rate, $\Delta_1(n)$ satisfies
\begin{align}
    \lim_{n\to+\infty}\frac{b_1(n)}{n^{\frac{1}{\alpha}-1}}\|\Delta_1(n)\|=0.
\end{align}
Rearranging the terms in $\Delta_2(n)$ gives
\begin{align}
    \Delta_2(n)&=\frac{1}{n}\sum_{j=1}^{n}\sum_{k=j}^n\prod_{i=j+1}^{k}A_i\eta_j\delta_j\notag\\
    &=\frac{1}{n}\sum_{j=1}^{n}A_j^n\delta_j\notag\\
    &=\frac{1}{n}\sum_{j=1}^{n}H^{-1}\delta_j+\frac{1}{n}\sum_{j=1}^{n}(A_j^n-H^{-1})\delta_j,
\end{align}
where $A_j^n=\eta_j\sum_{k=j}^n\prod_{i=j+1}^{k}A_i$. The process
$S_t^n:=\sum_{j=1}^{t}(A_j^n-H^{-1})\delta_j$ is a martingale for
$t=1,\ldots,n$. For $\beta\in[1,\alpha)$, Lemma~\ref{lem: moment} gives
\begin{align}
    \EB\left\|\frac{b_1(n)}{n^{\frac{1}{\alpha}}}\sum_{j=1}^{n}(A_j^n-H^{-1})\delta_j\right\|^{\beta}&\lesssim\frac{b_1(n)^{\beta}}{n^{\frac{\beta}{\alpha}}}\sum_{j=1}^{n}\|A_j^n-H^{-1}\|^{\beta}\notag\\
    &\lesssim\frac{b_1(n)^{\beta}}{n^{\frac{\beta}{\alpha}}}\sum_{j=1}^{n}\|A_j^n-H^{-1}\|.
\end{align}
If $\max\{\alpha\rho,1\}<\beta<\alpha$, Lemma~\ref{lem: avg_coeff} gives
\begin{align}
    \EB\left\|\frac{b_1(n)}{n^{\frac{1}{\alpha}}}\sum_{j=1}^{n}(A_j^n-H^{-1})\delta_j\right\|^{\beta}\lesssim\frac{b_1(n)^{\beta}}{n^{\frac{\frac{\beta}{\alpha}-\rho}{2}}}\frac{\sum_{j=1}^{n}\|A_j^n-H^{-1}\|}{n^{\frac{\frac{\beta}{\alpha}+\rho}{2}}}\overset{n\to+\infty}{\to}0.
\end{align}
Consequently,
\begin{align}
\label{eq: pa}
    \frac{b_1(n)}{n^{\frac{1}{\alpha}-1}}\Delta_2(n)\tod H^{-1}L_1,
\end{align}
where $L_t$ is a L\'evy process indexed by $t\in[0,1]$ with characteristics $\left(0, \nu(\theta^*,\cdot),-\int_{\|x\|>1}x\nu(\theta^*,dx)\right)$. For the term $\Delta_3(n)$, we have
\begin{align}
    \EB\|\Delta_3(n)\|&=\EB\left\|\frac{1}{n}\sum_{j=1}^{n}A_j^nR_{j-1,1}\right\|\notag\\
    &\lesssim\frac{1}{n}\sum_{j=1}^{n}\EB\|R_{j-1,1}\|\notag\\
    &\lesssim\frac{1}{n}\sum_{j=1}^{n}\EB\|\theta_{j-1}-\theta^*\|^q\notag\\
    &\lesssim o\left(n^{\varepsilon-\rho q\frac{\alpha-1}{\alpha}}\right),
\end{align}
where the last bound follows from Lemma~\ref{lem: final_error}. Thus,
\begin{align}
    \frac{b_1(n)}{n^{\frac{1}{\alpha}-1}}\EB\|\Delta_3(n)\|=o\left(b_1(n) n^{\varepsilon-(\rho q-1)\frac{\alpha-1}{\alpha}}\right).
\end{align}
As $\rho>q^{-1}$, the above term tends to zero as $n\to+\infty$. Thus, $\frac{b_1(n)}{n^{\frac{1}{\alpha}-1}}\Delta_3(n)\overset{p}{\to}0$. We write the term $\Delta_4(n)$ as
\begin{align}
    \Delta_4(n)=\frac{1}{n}\sum_{j=1}^{n}A_j^n R_{j-1,2}.
\end{align}
Let $S_t^n=\sum_{j=1}^{t}A_j^nR_{j-1,2}$. With
$\mathcal F_t=\sigma(\xi_1,\ldots,\xi_t)$, this process satisfies
$\EB[S_t^n\mid\mathcal F_{t-1}]=S_{t-1}^n$. For $\beta\in(1,\alpha)$, Lemma~\ref{lem: moment} gives
\begin{align}
    \EB\|\Delta_4(n)\|^\beta&\lesssim\frac{1}{n^{\beta}}\sum_{j=1}^{n}\EB\|R_{j-1,2}\|^\beta\notag\\
    &\lesssim\frac{1}{n^{\beta}}\sum_{j=1}^{n}\EB\|\theta_{j-1}-\theta^*\|^\beta\notag\\
    &\lesssim\frac{1}{n^{\beta}}\sum_{j=0}^{n-1}o\left(j^{\varepsilon-\rho\beta\frac{\alpha-1}{\alpha}}\right)\notag\\
    &\lesssim o\left(n^{\varepsilon-\rho\beta\frac{\alpha-1}{\alpha}-\beta+1}\right).
\end{align}
Thus,
\begin{align}
    \frac{b_1(n)^{\beta}}{n^{\frac{\beta}{\alpha}-\beta}}\EB\|\Delta_4(n)\|^\beta\lesssim o\left(b_1(n)^{\beta}n^{\varepsilon-\rho\beta\frac{\alpha-1}{\alpha}-\frac{\beta}{\alpha}+1}\right).
\end{align}
As long as $\frac{\alpha}{1+\rho(\alpha-1)}<\beta<\alpha$, we have $\frac{b_1(n)}{n^{\frac{1}{\alpha}-1}}\Delta_4(n)\overset{p}{\to}0$.
Combining the decomposition in equation~\eqref{eq: pa} with the negligible terms gives
\begin{align}
    n^{1-\frac{1}{\alpha}}b_1(n)(\bar\theta_n-\theta^*)
    \tod -H^{-1}L_1.
\end{align}

\subsection{Proof of Theorem~\ref{thm: scale_diff}}
Let $\Gamma$ denote the spectral measure of the $\alpha$-stable random
vector $L_1$. By Definition~\ref{defn: scale}, the scale function of $Z_{\text{Polyak}}$ satisfies ($\|u\|=1$)
\begin{align}
    \sigma_{\alpha}(Z_{\text{Polyak}};u)=\int_{\mathbb{S}^{d-1}} |s^\top H^{-1}u|^\alpha\,\Gamma(ds).
\end{align}
Here and below, the overall minus signs in $Z_{\text{Polyak}}$ and
$Z_{\text{final},1}$ do not affect directional scale because
$\sigma_\alpha(-Z;u)=\sigma_\alpha(Z;u)$.
For any deterministic matrix-valued function $A(t)$, the independent-increment property of $L_t$ gives
\begin{align}
    \sigma_{\alpha}\left(\int_0^\infty A(t)\,dL_t; u\right)
    =
    \int_0^\infty
    \sigma_{\alpha}(L_1; A(t)^\top u)\,dt .
\end{align}
This identity follows first for step functions $A(t)$ and then by the usual approximation argument for deterministic L\'evy integrals. Applying $A(t)=c^{1-1/\alpha}e^{-H_1t}$ yields
\begin{align}
    \sigma_{\alpha}(Z_{\text{final},1};u)=c^{\alpha-1}\int_{\mathbb{S}^{d-1}} \int_0^{+\infty}|s^\top \exp\left(-H_1t\right)u|^\alpha dt\,\Gamma(ds).
\end{align}
The identity $H^{-1}=\int_0^{+\infty}\exp(-Ht)dt$ gives
\begin{align}
    |s^\top H^{-1}u|^\alpha&=\left|\int_0^{+\infty}\exp\left(-\frac{1-\alpha^{-1}}{c}t\right)s^\top\exp\left(-H_1t\right)udt\right|^{\alpha}\\
    &\overset{(a)}{\le}\left|\int_0^{+\infty}\exp\left(-\frac{1}{c}t\right)dt\right|^{\alpha-1}\int_0^{+\infty}\left|s^\top\exp\left(-H_1t\right)u\right|^{\alpha}dt\\
    &=c^{\alpha-1}\int_0^{+\infty}\left|s^\top\exp\left(-H_1t\right)u\right|^{\alpha}dt,
\end{align}
where $(a)$ follows from H\"older's inequality. Integrating both sides with respect to $\Gamma$ gives
\begin{align}
    \sigma_{\alpha}(Z_{\text{Polyak}};u)\le \sigma_{\alpha}(Z_{\text{final},1};u).
\end{align}
This proves the desired directional scale dominance.

\subsection{Proof of Theorem~\ref{thm: joint}}
\begin{lem}
\label{lem: variance}
    The empirical second-moment normalizer $\Sigma_n$ satisfies the following decomposition:
    \begin{align}
        n^{1-\frac{2}{\alpha}}b_1(n)^2\Sigma_n = \frac{b_1(n)^2}{n^{\frac{2}{\alpha}}}\sum_{i=1}^ng(\theta^*,\xi_i)g(\theta^*,\xi_i)^\top +o_p(1).
    \end{align}
\end{lem}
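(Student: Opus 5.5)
Write $g_k:=g(\theta_{k-1},\xi_k)$, $\delta_k:=g(\theta^*,\xi_k)$ and $D_k:=g_k-\delta_k$. Assumption~\ref{asmp: lip_error} gives $\|D_k\|\le C_{\rm Lip}\|\theta_{k-1}-\theta^*\|$. Expanding the outer product yields
\begin{align}
    g_kg_k^\top-\delta_k\delta_k^\top=\delta_kD_k^\top+D_k\delta_k^\top+D_kD_k^\top .
\end{align}
Let $a_n:=n^{-2/\alpha}b_1(n)^2$. It then suffices to show that $a_n\sum_{k=1}^n\|D_k\|^2\overset{p}{\to}0$ and $a_n\sum_{k=1}^n\|\delta_k\|\|D_k\|\overset{p}{\to}0$. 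The leading term $a_n\sum_k\delta_k\delta_k^\top$ is left untouched; it is exactly the right-hand side in the statement.

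\textbf{The quadratic term.} Since $\alpha\in(1,2)$, take $\beta\in(1,\alpha)$ close to $\alpha$. The bound $\EB\|\theta_{k-1}-\theta^*\|^\beta=o(k^{\varepsilon-\rho\beta(\alpha-1)/\alpha})$ from Lemma~\ref{lem: final_error}, already used in the proof of Theorem~\ref{thm: pa_limit}, controls a moment of order below $2$. So I would use the elementary inequality $x^2\le x^\beta\sup_k\|\theta_{k-1}-\theta^*\|^{2-\beta}$, or simply subadditivity $(\sum x_k^2)^{\beta/2}\le\sum x_k^\beta$. This gives
\begin{align}
    \EB\Big(a_n\sum_k\|D_k\|^2\Big)^{\beta/2}\lesssim a_n^{\beta/2}\sum_k\EB\|\theta_{k-1}-\theta^*\|^\beta
    = o\big(b_1(n)^\beta n^{-\beta/\alpha+1+\varepsilon-\rho\beta(\alpha-1)/\alpha}\big).
\end{align}
As $\beta\uparrow\alpha$ the exponent tends to $-\rho(\alpha-1)<0$, so this term is negligible. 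Slowly varying factors are absorbed by $\varepsilon$ via Potter bounds.

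\textbf{The cross term.} This is the main obstacle, because $\delta_k$ has infinite variance and we cannot take second moments of it. The key observation is that $\theta_{k-1}$ is $\mathcal F_{k-1}$-measurable and $\xi_k$ is independent of $\mathcal F_{k-1}$. For $\gamma<\alpha$ this gives $\EB[\|\delta_k\|^{\gamma}\|\theta_{k-1}-\theta^*\|^{\gamma}]=\EB\|\delta_1\|^\gamma\,\EB\|\theta_{k-1}-\theta^*\|^\gamma$. Using subadditivity with exponent $\gamma\in(1,\alpha)$, or even $\gamma\le1$ and the triangle inequality, we obtain
\begin{align}
    \EB\Big(a_n\sum_k\|\delta_k\|\|D_k\|\Big)^{\gamma}\lesssim a_n^{\gamma}\sum_k \EB\|\theta_{k-1}-\theta^*\|^\gamma
    = o\big(b_1(n)^{2\gamma}n^{-2\gamma/\alpha+1+\varepsilon-\rho\gamma(\alpha-1)/\alpha}\big).
\end{align}
For this exponent to be negative we need $\gamma(2+\rho(\alpha-1))/\alpha>1$, i.e.\ $\gamma>\alpha/(2+\rho(\alpha-1))$. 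That threshold is below $1$, so any $\gamma\in(1,\alpha)$, or $\gamma=1$, works.

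Markov's inequality then turns both moment bounds into convergence in probability. Summing the three pieces gives the decomposition. The only point needing care is that Lemma~\ref{lem: final_error} must supply the stated moment rate uniformly in $k$ with an arbitrary small $\varepsilon$. The initial terms with small $k$ contribute $O(a_n)\to0$.
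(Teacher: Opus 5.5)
Your proposal is correct and follows essentially the paper's proof. It uses the same split $g(\theta_{k-1},\xi_k)=g(\theta^*,\xi_k)+D_k$, a sub-quadratic moment bound for the $D_kD_k^\top$ term (you use $(\sum x_k^2)^{\beta/2}\le\sum x_k^\beta$ where the paper uses $\ell_2\le\ell_p$ plus Jensen), and independence of $\xi_k$ from $\theta_{k-1}$ with first moments for the cross term; the only cosmetic difference is that you bound operator norms directly rather than quadratic forms plus polarization.

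One small slip: $(\sum x_k)^\gamma\le\sum x_k^\gamma$ fails for $\gamma>1$, and the nonnegative summands $\|\delta_k\|\|D_k\|$ are not martingale differences, so only your $\gamma=1$ case is justified. That case (plain linearity of expectation, with $\EB\|\theta_{k-1}-\theta^*\|$ obtained from Lemma~\ref{lem: final_error} via Jensen) is all you need.
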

\begin{proof}
    We decompose $g(\theta_{i-1},\xi_i)$ as
    \begin{align}
        g(\theta_{i-1},\xi_i)=g(\theta^*,\xi_i)+
        \bigl(g(\theta_{i-1},\xi_i)-g(\theta^*,\xi_i)\bigr):=X_i+Y_i.
    \end{align}
    Then
    \begin{align}
        \Sigma_n &= \frac{1}{n}\sum_{i=1}^nX_i X_i^\top+\frac{1}{n}\sum_{i=1}^nX_i Y_i^\top + \frac{1}{n}\sum_{i=1}^nY_i X_i^\top+\frac{1}{n}\sum_{i=1}^nY_i Y_i^\top\\
        &:= \frac{1}{n}\sum_{i=1}^nX_i X_i^\top + \Delta_n.
    \end{align}
    For any $u\in\mathbb{S}^{d-1}$, multiplying both sides by $n^{1-\frac{2}{\alpha}}b_1(n)^2$ gives
    \begin{align}
        \frac{b_1(n)^2}{n^{\frac{2}{\alpha}-1}}u^\top\Delta_n u=\frac{2b_1(n)^2}{n^{\frac{2}{\alpha}}}\sum_{i=1}^n(u^\top X_i)(u^\top Y_i)+\frac{b_1(n)^2}{n^{\frac{2}{\alpha}}}\sum_{i=1}^n(u^\top Y_i)^2.
    \end{align}
    Assumption~\ref{asmp: lip_error} gives $\|Y_i\|\le C_{\rm Lip}\|\theta_{i-1}-\theta^*\|$. Thus, for $1<p<\alpha$ we have
    \begin{align}
        \frac{b_1(n)}{n^{\frac{1}{\alpha}}}\EB\sqrt{\sum_{i=1}^n(u^\top Y_i)^2}&\le\frac{b_1(n)}{n^{\frac{1}{\alpha}}}\EB\left(\sum_{i=1}^n \|Y_i\|^p\right)^{\frac{1}{p}}\\
        &\le\frac{b_1(n)}{n^{\frac{1}{\alpha}}}\left(\sum_{i=1}^n \EB\|Y_i\|^p\right)^{\frac{1}{p}}\\
        &\lesssim\frac{b_1(n)}{n^{\frac{1}{\alpha}}}\left(1+\sum_{i=2}^n \eta_{i-1}^{p-1}\right)^{\frac{1}{p}}\\
        &\le b_1(n)n^{\frac{1-(p-1)\rho}{p}-\frac{1}{\alpha}}.
    \end{align}
    Thus, choosing $p\in\left(\frac{1+\rho}{\alpha^{-1}+\rho},\alpha\right)$ gives
    \begin{align}
        \frac{b_1(n)^2}{n^{\frac{2}{\alpha}}}\sum_{i=1}^n(u^\top Y_i)^2\overset{p}{\to}0.
    \end{align}
    Similarly,
    \begin{align}
        \frac{2b_1(n)^2}{n^{\frac{2}{\alpha}}}\sum_{i=1}^n\EB|u^\top X_i||u^\top Y_i|&\lesssim\frac{b_1(n)^2}{n^{\frac{2}{\alpha}}}\sum_{i=1}^n\EB\|X_i\|\|\theta_{i-1}-\theta^*\|\\
        &=\frac{b_1(n)^2}{n^{\frac{2}{\alpha}}}\sum_{i=1}^n\EB\|X_i\|\EB\|\theta_{i-1}-\theta^*\|\\
        &\lesssim\frac{b_1(n)^2}{n^{\frac{2}{\alpha}}}\left(1+\sum_{i=2}^n\eta_{i-1}^{1-\frac{1}{p}}\right)\\
        &\lesssim b_1(n)^2 n^{1-(1-\frac{1}{p})\rho-\frac{2}{\alpha}}\to0.
    \end{align}
    Consequently,
    \begin{align}
        \frac{2b_1(n)^2}{n^{\frac{2}{\alpha}}}\sum_{i=1}^n(u^\top X_i)(u^\top Y_i)\overset{p}{\to}0.
    \end{align}
    Thus, for every $u\in\mathbb{S}^{d-1}$, $\frac{b_1(n)^2}{n^{\frac{2}{\alpha}-1}}u^\top\Delta_n u=o_p(1)$. Applying this to a fixed basis and using polarization shows that every entry is $o_p(1)$ after scaling; equivalence of matrix norms in fixed dimension then gives $\frac{b_1(n)^2}{n^{\frac{2}{\alpha}-1}}\|\Delta_n\|=o_p(1)$.
\end{proof}

By Theorem~\ref{thm: pa_limit}, the dominant random term for the Polyak--Ruppert averaged estimator is
\begin{align}
    n^{1-\frac{1}{\alpha}}b_1(n)\left(\frac{\sum_{k=1}^n\theta_k}{n}-\theta^*\right)=-\frac{b_1(n)}{n^{\frac{1}{\alpha}}}\sum_{k=1}^{n}H^{-1}g(\theta^*,\xi_k)+o_p(1).
\end{align}
By Lemma~\ref{lem: variance}, the dominant random term for $\Sigma_n$ is
\begin{align}
    n^{1-\frac{2}{\alpha}}b_1(n)^2\Sigma_n = \frac{b_1(n)^2}{n^{\frac{2}{\alpha}}}\sum_{k=1}^{n}g(\theta^*,\xi_k)g(\theta^*,\xi_k)^\top+o_p(1).
\end{align}
Since $\EB g(\theta^*,\xi)=\nabla\ell(\theta^*)=0$, the centering condition in Lemma~\ref{lem: joint_vector} is satisfied. The joint convergence then follows from that lemma.
Moreover, $W$ is a nonzero positive-semidefinite stable random matrix and
$\PB(\Tr(W)>0)=1$; hence the self-normalizing map is almost surely continuous
at the joint limit.

\subsection{Proof of Corollary~\ref{cor: linear}}

\begin{proof}
    Set $X_n=n^{1-\frac{1}{\alpha}}b_1(n)(\bar\theta_n-\theta^*)$ and $A_n=n^{1-\frac{2}{\alpha}}b_1(n)^2\Sigma_n$. Theorem~\ref{thm: joint} gives $(X_n,A_n)\tod(-H^{-1}L_1,W)$. Since matrix inversion is continuous on the set of nonsingular matrices and $W$ is invertible almost surely, $A_n$ is invertible with probability tending to one and $(X_n,A_n^{-1})\tod(-H^{-1}L_1,W^{-1})$. The identity $T_n^\dagger=X_n^\top A_n^{-1}X_n$ and the continuous mapping theorem yield the result.
\end{proof}

\subsection{Proof of Theorem~\ref{thm: sub_consistent}}

\begin{proof}
We first introduce the infeasible version centered at the true parameter $\theta^*$:
\[
    \widetilde T_{n,\varphi}^{(b)}
    :=
    \frac{
        \sqrt{t_n}\,
        \left|\varphi\left(\bar\theta_{t_n}^{(b)}-\theta^*\right)\right|
    }{
        \sqrt{\Tr\left(\Sigma_{t_n}^{(b)}\right)}
    },
\]
and define
\[
    \widetilde F_n(x;\varphi)
    :=
    \frac{1}{B_n}
    \sum_{b=1}^{B_n}
    \mathbbm{1}\left\{
        \widetilde T_{n,\varphi}^{(b)}
        \le x
    \right\}.
\]
We first show that
\[
    \widetilde F_n(x;\varphi)
    \overset{p}{\to}
    \PB\left(Z^\star(\varphi)\le x\right)
\]
for every continuity point \(x\) of the distribution function of \(Z^\star(\varphi)\). Indeed, by Theorem~\ref{thm: joint} applied to
each length-\(t_n\) sub-procedure,
\[
    \widetilde T_{n,\varphi}^{(b)}
    \tod
    Z^\star(\varphi),
\]
as \(t_n\to\infty\). Therefore, for every continuity point \(x\),
\[
    \PB\left(
            \widetilde T_{n,\varphi}^{(b)}\le x
    \right)
    \to
    \PB\left(Z^\star(\varphi)\le x\right).
\]
Moreover, since the sub-procedures are independent,
\[
    \Var\left(
        \widetilde F_n(x;\varphi)
    \right)
    =
    \frac{1}{B_n}
    \Var\left(
        \mathbbm{1}\left\{
            \widetilde T_{n,\varphi}^{(1)}\le x
        \right\}
    \right)
    \le
    \frac{1}{B_n}
    \to 0.
\]
Hence
\[
    \widetilde F_n(x;\varphi)
    -
    \EB \widetilde F_n(x;\varphi)
    \overset{p}{\to} 0,
\]
which proves
\[
    \widetilde F_n(x;\varphi)
    \overset{p}{\to}
    \PB\left(Z^\star(\varphi)\le x\right).
\]
It remains to show that replacing the infeasible centering \(\theta^*\) by
\(\bar\theta_n\) is asymptotically negligible. Define
\[
    D_{n,\varphi}^{(b)}
    :=
    \frac{
        \sqrt{t_n}\,
        \left|\varphi\left(\bar\theta_n-\theta^*\right)\right|
    }{
        \sqrt{\Tr\left(\Sigma_{t_n}^{(b)}\right)}
    } .
\]
By Theorem~\ref{thm: joint},
\[
    n^{1-\frac{1}{\alpha}}b_1(n)(\bar\theta_n-\theta^*)
    =
    O_p(1),
\]
and
\[
    t_n^{1-\frac{2}{\alpha}}b_1(t_n)^2
    \Sigma_{t_n}^{(b)}
    \tod W .
\]
Since \(\PB(\Tr(W)>0)=1\) and \(\varphi\) is homogeneous and
continuous, we obtain
\[
    D_{n,\varphi}^{(b)}
    =
    O_p\left(
        \frac{
            t_n^{1-\frac{1}{\alpha}}b_1(t_n)
        }{
            n^{1-\frac{1}{\alpha}}b_1(n)
        }
    \right)
    =
    o_p(1),
\]
where the last step follows from \(t_n/n\to 0\) and the regular variation of
\(n^{1-\frac{1}{\alpha}}b_1(n)\). Consequently, for every \(\varepsilon>0\),
\[
    \PB\left(D_{n,\varphi}^{(b)}>\varepsilon\right)\to 0.
\]
By the evenness and subadditivity of $|\varphi(\cdot)|$,
\[
    \left|
        T_{n,\varphi}^{\star,(b)}
        -
        \widetilde T_{n,\varphi}^{(b)}
    \right|
    \le
    D_{n,\varphi}^{(b)}.
\]
Thus, for every \(\varepsilon>0\),
\[
    \mathbbm{1}\left\{
        \widetilde T_{n,\varphi}^{(b)}\le x-\varepsilon
    \right\}
    -
    \mathbbm{1}\left\{
        D_{n,\varphi}^{(b)}>\varepsilon
    \right\}
    \le
    \mathbbm{1}\left\{
        T_{n,\varphi}^{\star,(b)}\le x
    \right\}
\]
and
\[
    \mathbbm{1}\left\{
        T_{n,\varphi}^{\star,(b)}\le x
    \right\}
    \le
    \mathbbm{1}\left\{
        \widetilde T_{n,\varphi}^{(b)}\le x+\varepsilon
    \right\}
    +
    \mathbbm{1}\left\{
        D_{n,\varphi}^{(b)}>\varepsilon
    \right\}.
\]
Averaging over \(b=1,\ldots,B_n\), we obtain
\[
    \widetilde F_n(x-\varepsilon;\varphi)-A_n(\varepsilon)
    \le
    \widehat F_n(x;\varphi)
    \le
    \widetilde F_n(x+\varepsilon;\varphi)+A_n(\varepsilon),
\]
where
\[
    A_n(\varepsilon)
    :=
    \frac{1}{B_n}
    \sum_{b=1}^{B_n}
    \mathbbm{1}\left\{
        D_{n,\varphi}^{(b)}>\varepsilon
    \right\}.
\]
Since
\[
    \EB A_n(\varepsilon)
    \le
    \sup_{1\le b\le B_n}\PB\left(D_{n,\varphi}^{(b)}>\varepsilon\right)
    \to 0,
\]
Markov's inequality implies
\[
    A_n(\varepsilon)\overset{p}{\to} 0.
\]
Therefore,
\[
    \widetilde F_n(x-\varepsilon;\varphi)+o_p(1)
    \le
    \widehat F_n(x;\varphi)
    \le
    \widetilde F_n(x+\varepsilon;\varphi)+o_p(1).
\]
Choose a sequence $\varepsilon_m\downarrow0$ such that both
$x-\varepsilon_m$ and $x+\varepsilon_m$ are continuity points of the limiting
distribution. Letting $n\to\infty$ and then $m\to\infty$, and using continuity
at $x$, we conclude that
\[
    \widehat F_n(x;\varphi)
    \overset{p}{\to}
    \PB\left(Z^\star(\varphi)\le x\right).
\]
Finally, let
\[
    F_\varphi(x):=\PB\left(Z^\star(\varphi)\le x\right),
    \qquad
    q(\delta;\varphi)
    :=
    \inf\left\{
        x:F_\varphi(x)\ge 1-\delta
    \right\},
\]
and
\[
    \widehat q(\delta;\varphi)
    :=
    \inf\left\{
        x:\widehat F_n(x;\varphi)\ge 1-\delta
    \right\}.
\]
Suppose that $F_\varphi$ is continuous and strictly increasing at
$q=q(\delta;\varphi)$. Then, for every $\varepsilon>0$,
\begin{align}
    F_\varphi(q-\varepsilon)<1-\delta<F_\varphi(q+\varepsilon).
    \label{eq: quantile_identification}
\end{align}
For each $\varepsilon$, choose continuity points
$x_-\in(q-\varepsilon,q)$ and $x_+\in(q,q+\varepsilon)$. The pointwise
convergence of $\widehat F_n$ at $x_-$ and $x_+$ implies
\[
    \widehat q(\delta;\varphi)\overset{p}{\to} q(\delta;\varphi).
\]
Moreover, by Theorem~\ref{thm: joint},
\[
    |T_n^\star(\theta^*;\varphi)|
    \tod
    Z^\star(\varphi).
\]
Hence, by Slutsky's theorem,
\[
\begin{aligned}
    \PB\left(\theta^*\in \widehat{\RM}_n(\delta;\varphi)\right)
    &=
    \PB\left(
        |T_n^\star(\theta^*;\varphi)|
        \le
        \widehat q(\delta;\varphi)
    \right)  \\
    &\to
    \PB\left(
        Z^\star(\varphi)\le q(\delta;\varphi)
    \right)
    =
    1-\delta .
\end{aligned}
\]
This proves the desired asymptotic coverage.

% If $F_\varphi$ is discontinuous at $q$ but is strictly increasing there,
% the same bracketing argument
% continues to give $\widehat q(\delta;\varphi)\overset{p}{\to}q$. However,
% weak convergence alone does not control how the estimated critical value
% approaches the atom. For fixed $a>0$ such that $q+a$ is a continuity point,
% Slutsky's theorem instead gives
% \begin{align}
%     \PB\left(
%         |T_n^\star(\theta^*;\varphi)|
%         \le \widehat q(\delta;\varphi)+a
%     \right)
%     \longrightarrow F_\varphi(q+a)
%     \ge F_\varphi(q)
%     \ge1-\delta.
% \end{align}
% This proves the conservative buffered-critical-value assertion following
% Theorem~\ref{thm: sub_consistent}.
\end{proof}

\section{Auxiliary lemmas}
\begin{lem}[Lemma 7 in \cite{wang2021convergence}]
    \label{lem: moment}
    Suppose $p\in[1,2]$ and $\{S_t\}$ is a $d$-dimensional martingale with finite $p$th moments for every $t$, and $S_0=0$. Then
    \begin{align*}
        \EB\|S_t\|^{p}\le2^{2-p}d^{1-\frac{p}{2}}\sum_{i=1}^{t}\EB\|S_i-S_{i-1}\|^{p}.
    \end{align*}
\end{lem}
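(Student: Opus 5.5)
The statement is the martingale moment inequality of Lemma~\ref{lem: moment}. The bound $\EB\|S_t\|^p\le C\sum_i\EB\|S_i-S_{i-1}\|^p$ with $C=2^{2-p}d^{1-p/2}$ should follow from two reductions. First, reduce to coordinates. Second, for each coordinate, use the scalar von Bahr--Esseen-type inequality $\EB|M_t|^p\le 2^{2-p}\sum_i\EB|M_i-M_{i-1}|^p$, valid for real martingales with $p\in[1,2]$.

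\textbf{Step 1 (scalar inequality).} For a real martingale $M_t=\sum_{i\le t}D_i$, I will prove the one-step bound
\[
\EB|x+D|^p\le |x|^p+p\,|x|^{p-1}\mathrm{sgn}(x)\,\EB D+2^{2-p}\EB|D|^p
\]
for deterministic $x$ and integrable $D$. This comes from the elementary pointwise inequality
\[
|x+y|^p\le |x|^p+p|x|^{p-1}\mathrm{sgn}(x)y+2^{2-p}|y|^p,\qquad p\in[1,2].
\]
That inequality follows by homogeneity: set $x=1$ and check $\phi(y)=|1+y|^p-1-py\le 2^{2-p}|y|^p$ by comparing derivatives on $y\ge0$, $-2\le y<0$ and $y<-2$.

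\textbf{Step 2 (apply the one-step bound).} Condition on $\mathcal F_{i-1}$ with $x=M_{i-1}$, $D=D_i$. Since $\EB[D_i\mid\mathcal F_{i-1}]=0$, the linear term vanishes, so
\[
\EB|M_i|^p\le\EB|M_{i-1}|^p+2^{2-p}\EB|D_i|^p.
\]
Telescoping gives the scalar inequality. Finiteness of the $p$th moments guarantees integrability of every term.

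\textbf{Step 3 (vector case).} Write $\|S_t\|^p=(\sum_k S_t^{(k)2})^{p/2}$. Since $p/2\le1$, concavity gives $(\sum_k a_k)^{p/2}\le\sum_k a_k^{p/2}$, so $\EB\|S_t\|^p\le\sum_k\EB|S_t^{(k)}|^p$. Apply Step 2 to each coordinate martingale, which yields $2^{2-p}\sum_i\sum_k\EB|D_i^{(k)}|^p$. Then bound $\sum_k|y_k|^p\le d^{1-p/2}\|y\|^p$ by Hölder (power-mean) with exponent $2/p$. Combining the two estimates gives exactly the constant $2^{2-p}d^{1-p/2}$.

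\textbf{Main obstacle.} The expected difficulty is the elementary inequality in Step 1 with the sharp constant $2^{2-p}$. If the direct calculus becomes delicate near $y\approx-2$, I would fall back on symmetrization: replace $D$ by $D-D'$ with an independent copy $D'$, and use Clarkson's inequality for $|a+b|^p+|a-b|^p$ for $p\le2$. This route gives the constant $2^{2-p}$ naturally.
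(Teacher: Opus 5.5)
Your proof is correct. The paper gives no proof of this lemma; it imports it as Lemma~7 of \citet{wang2021convergence}. Your argument is the standard self-contained route and produces exactly the constant $2^{2-p}d^{1-p/2}$. It has three steps: a scalar pointwise inequality; conditioning on $\mathcal F_{i-1}$ to remove the linear term; and the coordinatewise sandwich $\|y\|^p\le\sum_k|y_k|^p\le d^{1-p/2}\|y\|^p$. The conditioning step is legitimate because H\"older gives $\EB[|M_{i-1}|^{p-1}|D_i|]\le(\EB|M_{i-1}|^p)^{(p-1)/p}(\EB|D_i|^p)^{1/p}<\infty$.

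The calculus you deferred does close, and your breakpoint at $y=-2$ is the natural one. Write $y=-s$ and consider the slack $F(s)=2^{2-p}s^p-|1-s|^p+1-ps$, which satisfies $F(0)=0$.

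For $0<s<1$, $F'(s)=p\,[2^{2-p}s^{p-1}+(1-s)^{p-1}-1]\ge0$, since $s^{p-1}\ge s$ and $(1-s)^{p-1}\ge1-s$.

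For $s>1$, $F'(s)=p\,k(s)$ with $k(s)=2^{2-p}s^{p-1}-(s-1)^{p-1}-1$. Its derivative $k'(s)=(p-1)[(s/2)^{p-2}-(s-1)^{p-2}]$ is $\le0$ on $(1,2)$ and $\ge0$ on $(2,\infty)$. Hence $k\ge k(2)=0$, so the derivative of the slack vanishes exactly at $y=-2$.

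The case $y\ge0$ follows from $2^{2-p}\ge1$ and subadditivity of $t\mapsto t^{p-1}$.

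Drop the fallback paragraph, because symmetrization does not deliver the constant $2^{2-p}$. Jensen plus Clarkson only give $\EB|x+D|^p\le\EB|x+D-D'|^p\le|x|^p+\EB|D-D'|^p\le|x|^p+2^p\EB|D|^p$, and $2^p>2^{2-p}$ for $p>1$. Moreover, for martingale increments you would first need a conditionally independent copy of $D_i$ given $\mathcal F_{i-1}$.
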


\begin{lem}
\label{lem: avg_coeff}
    Suppose $H$ is a symmetric positive definite matrix and $\eta_n=c\cdot n^{-\rho}$, where $\rho\in(0,1)$. Define $A_k=I_d-\eta_k H$ and $A_j^n=\eta_j\sum_{k=j}^n\prod_{i=j+1}^{k} A_i$. Then
    \begin{enumerate}[(a)]
        \item \text{\rm(Lemma 1 in \cite{polyak1992acceleration})} $\sup_{1\le j\le n}\|A_j^n\|\le C_A$.
        \item \text{\rm(Lemma 17 in \cite{wang2021convergence})} for any $\rho<\kappa\le1$, 
        \begin{align}
        \lim_{n\to+\infty}\frac{\sum_{j=1}^{n}\|A_j^n-H^{-1}\|}{n^\kappa}=0.
        \end{align}
    \end{enumerate}
\end{lem}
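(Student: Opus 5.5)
Both parts reduce to scalar estimates by diagonalization. Since $H$ is symmetric positive definite, write $H=Q\Lambda Q^\top$ with $Q$ orthogonal and $\Lambda=\mathrm{diag}(\lambda_1,\ldots,\lambda_d)$, $\lambda_\ell>0$. Every $A_i$ is diagonalized by the same $Q$, so $A_j^n=Q\,\mathrm{diag}(a_j^n(\lambda_1),\ldots,a_j^n(\lambda_d))\,Q^\top$, where
\begin{align}
a_j^n(\lambda)=\eta_j\sum_{k=j}^n\prod_{i=j+1}^k(1-\eta_i\lambda).
\end{align}
Consequently $\|A_j^n\|=\max_\ell|a_j^n(\lambda_\ell)|$ and $\|A_j^n-H^{-1}\|=\max_\ell|a_j^n(\lambda_\ell)-\lambda_\ell^{-1}|$. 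It therefore suffices to prove (a) and (b) for a single fixed $\lambda>0$ with scalar coefficients.

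\emph{Step 1 (product bounds).} Let $i_0$ be such that $\eta_i\lambda<1$ for $i\ge i_0$. For indices below $i_0$, the factors $|1-\eta_i\lambda|$ are bounded by a constant depending only on $i_0,c,\lambda$, and this constant can be absorbed. For $j\ge i_0$ I use $1-x\le e^{-x}$ to get
\begin{align}
0\le\prod_{i=j+1}^k(1-\eta_i\lambda)\le\exp\bigl(-\lambda(s_k-s_j)\bigr),\qquad s_k=\sum_{i\le k}\eta_i.
\end{align}
Since $\eta$ is decreasing, $s_k-s_j\ge c(k-j)k^{-\rho}$.

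\emph{Step 2 (part (a)).} I will use the exact telescoping identity
\begin{align}
\sum_{k=j}^n\eta_{k+1}\lambda\prod_{i=j+1}^k(1-\eta_i\lambda)=1-\prod_{i=j+1}^{n+1}(1-\eta_i\lambda).
\end{align}
Rewriting the left side with $\eta_j$ in place of $\eta_{k+1}$ gives
\begin{align}
\lambda a_j^n-1=\lambda\sum_{k=j}^n(\eta_j-\eta_{k+1})\prod_{i=j+1}^k(1-\eta_i\lambda)-\prod_{i=j+1}^{n+1}(1-\eta_i\lambda).
\end{align}
The last product is bounded by $1$ for $j\ge i_0$, and by a constant otherwise. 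For the middle sum, the mean value theorem gives $\eta_j-\eta_{k+1}\lesssim (k+1-j)\,j^{-\rho-1}$. Combined with the stretched-exponential decay from Step 1, summing over $k$ gives
\begin{align}
\lambda\sum_{k=j}^n(\eta_j-\eta_{k+1})\prod_{i=j+1}^k(1-\eta_i\lambda)\lesssim j^{-\rho-1}\sum_{m\ge0}m\,e^{-c'm j^{-\rho}}\lesssim j^{-\rho-1}j^{2\rho}=j^{\rho-1}.
\end{align}
Here I compare $k^{-\rho}$ with $j^{-\rho}$ on the range $k\le 2j$, and for $k>2j$ I use that the exponential decay overwhelms polynomial growth. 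Since $\rho<1$, $j^{\rho-1}\le 1$, so $|a_j^n|$ is bounded uniformly in $j\le n$. This proves (a).

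\emph{Step 3 (part (b)).} From the same decomposition,
\begin{align}
|a_j^n-\lambda^{-1}|\lesssim j^{\rho-1}+\exp\bigl(-\lambda(s_{n+1}-s_j)\bigr).
\end{align}
Summing the first term over $j\le n$ gives $O(n^\rho)$. For the second term, $s_{n+1}-s_j\ge c(n+1-j)(n+1)^{-\rho}$, so
\begin{align}
\sum_{j\le n}\exp\bigl(-\lambda(s_{n+1}-s_j)\bigr)\le\sum_{m\ge1}e^{-c''m n^{-\rho}}=O(n^\rho).
\end{align}
Hence $\sum_{j=1}^n\|A_j^n-H^{-1}\|=O(n^\rho)=o(n^\kappa)$ for every $\kappa>\rho$, which is (b).

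The main obstacle is the uniform bound in Step 2. The ratio $\eta_j/\eta_k$ grows polynomially in $k/j$ and has to be beaten by the stretched-exponential product decay. The cleanest route is to pass through the telescoping identity rather than bounding $a_j^n$ directly, and to split the sum over $k$ into $k\le 2j$ and $k>2j$.
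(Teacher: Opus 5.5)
Your proof is correct. The paper gives no argument of its own for this lemma: it imports (a) from Lemma~1 of \citet{polyak1992acceleration} and (b) from Lemma~17 of \citet{wang2021convergence}.

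Your route is a self-contained version of the standard summation-by-parts argument. With $P_k=\prod_{i=j+1}^kA_i$, the identity $\sum_{k=j}^n\eta_{k+1}HP_k=I-P_{n+1}$ gives the decomposition $A_j^n=H^{-1}-H^{-1}P_{n+1}+\sum_{k=j}^n(\eta_j-\eta_{k+1})P_k$. You then use the explicit schedule $\eta_n=cn^{-\rho}$ to bound the last sum by $O(j^{\rho-1})$ and the summed boundary terms by $O(n^\rho)$.

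The two approaches buy different things:
\begin{itemize}
\item Citing buys generality: the Polyak--Juditsky lemma allows general step-size sequences and $H$ that need not be symmetric.
\item Your argument buys transparency and a quantitative rate, $\sum_{j\le n}\|A_j^n-H^{-1}\|=O(n^\rho)$. This gives (b) for every $\kappa\in(\rho,1]$ at once. It also shows why the proof of Theorem~\ref{thm: pa_limit} needs $\kappa=(\beta/\alpha+\rho)/2>\rho$, i.e.\ $\beta>\alpha\rho$.
\end{itemize}

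The two steps you state tersely both go through:
\begin{itemize}
\item For the finitely many $j<i_0$, one has $|P_k|\le C e^{-\lambda(s_k-s_{i_0})}$. This is summable in $k$ because $s_k$ grows like $k^{1-\rho}$. So these $a_j^n$ are bounded uniformly in $n$ and contribute only $O(1)$ to the sum in (b).
\item For $k>2j$, one has $k-j>k/2$, so $P_k\le e^{-c'k^{1-\rho}}$. That tail therefore contributes at most $O(j^{-\rho-1})$ to the middle term, which is absorbed into $O(j^{\rho-1})$.
\end{itemize}
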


\begin{lem}[Corollary 4 in \cite{wang2021convergence}]
\label{lem: final_error}
    For any $\varepsilon>0$ and $q\in(1,\alpha)$,
    \begin{align}
        \EB\|\theta_n-\theta^*\|^q=o\left(n^{\varepsilon-\rho q\frac{\alpha-1}{\alpha}}\right).
    \end{align}
\end{lem}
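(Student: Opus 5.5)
The bound follows from a $p$-th moment recursion for the error $\Delta_n:=\theta_n-\theta^*$, with $p\in(1,\alpha)$ taken close to $\alpha$, followed by Lyapunov's inequality to pass from order $p$ to order $q$. By equivalence of norms in fixed dimension, it suffices to bound $u_n:=\EB\|\Delta_n\|_p^p$. The target intermediate estimate is
\begin{align}
    u_n\lesssim \eta_n^{p-1}= O\left(n^{-\rho(p-1)}\right)\qquad\text{for every } p\in(1,\alpha).
\end{align}
Given this, for $q\le p<\alpha$ we get $\EB\|\Delta_n\|^q\le (\EB\|\Delta_n\|^p)^{q/p}\lesssim n^{-\rho q(p-1)/p}$. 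Since $(p-1)/p\uparrow(\alpha-1)/\alpha$ as $p\uparrow\alpha$, for any $\varepsilon>0$ we can pick $p$ with $\rho q\bigl(\tfrac{\alpha-1}{\alpha}-\tfrac{p-1}{p}\bigr)<\varepsilon$, which gives the $o(n^{\varepsilon-\rho q\frac{\alpha-1}{\alpha}})$ statement.

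To obtain the recursion I will use the scalar inequality, valid for $p\in(1,2]$,
\begin{align}
    |a+b|^p\le |a|^p+p\,\mathrm{sgn}(a)|a|^{p-1}b+2^{2-p}|b|^p .
\end{align}
I will apply it coordinatewise with $a=\Delta_{n-1}^{(i)}$ and $b=-\eta_n g^{(i)}(\theta_{n-1},\xi_n)$, sum over $i$, and take conditional expectation given $\mathcal F_{n-1}$.

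For the linear term, the conditional mean of $g$ is $\nabla\ell(\theta_{n-1})$. I will write this as $\nabla\ell(\theta_{n-1})=M_{n-1}\Delta_{n-1}$ with $M_{n-1}=\int_0^1\nabla^2\ell(\theta^*+t\Delta_{n-1})\,dt$. Assumption~\ref{asmp: ppd} then gives
\begin{align}
    \Delta_{n-1}^\top M_{n-1}\bigl(\mathrm{sgn}(\Delta^{(i)}_{n-1})|\Delta^{(i)}_{n-1}|^{p-1}\bigr)_i\ge\lambda_p\|\Delta_{n-1}\|_p^p .
\end{align}
This needs a uniform constant $\lambda_p>0$. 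I expect to get it from the boundedness of the Hessian set together with a compactness argument over the unit $\ell_p$-sphere, reading ``uniformly $p$-positive definite'' so that it holds on the closure of that set.

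For the remainder term, Assumption~\ref{asmp: lip_error} gives
\begin{align}
    \EB[\|g(\theta_{n-1},\xi_n)\|_p^p\mid\mathcal F_{n-1}]\lesssim \EB\|g(\theta^*,\xi)\|^p+\|\Delta_{n-1}\|^p .
\end{align}
Here $\EB\|g(\theta^*,\xi)\|^p<\infty$ because $p<\alpha$ and Assumption~\ref{asmp: tail_norm} holds.

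Taking expectations yields
\begin{align}
    u_n\le\bigl(1-p\lambda_p\eta_n+C\eta_n^p\bigr)u_{n-1}+C\eta_n^p .
\end{align}
Since $p>1$, the term $C\eta_n^p$ is eventually dominated by $\tfrac{p\lambda_p}{2}\eta_n$. A Chung-type lemma for $\eta_n=cn^{-\rho}$ with $\rho<1$ then gives $u_n=O(\eta_n^{p}/\eta_n)=O(\eta_n^{p-1})$. I would verify the Chung step by induction on $u_n\le K\eta_n^{p-1}$, using $\eta_{n-1}^{p-1}/\eta_n^{p-1}=1+O(1/n)=1+o(\eta_n)$, which is where $\rho<1$ is used. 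Early iterations only change constants.

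The main obstacle is the drift step: extracting the uniform constant $\lambda_p$ from the $p$-positive-definiteness along the whole segment $\theta^*+t\Delta$, and correctly handling the sign-power vector in the mean-value representation. The rest (the elementary inequality, the moment bound, and the Chung recursion) is routine.
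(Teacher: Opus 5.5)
Your proposal is correct and follows essentially the same route as the source: the paper does not prove this lemma itself but imports Corollary~4 of \citet{wang2021convergence}, which rests on the same three steps you use. Those steps are the coordinatewise inequality $|a+b|^p\le|a|^p+p\,\mathrm{sgn}(a)|a|^{p-1}b+2^{2-p}|b|^p$, uniform $p$-positive definiteness plus a Chung-type recursion giving $\EB\|\theta_n-\theta^*\|_p^p=O(\eta_n^{p-1})$, and Lyapunov's inequality with $p\uparrow\alpha$. The drift step you flag as the main obstacle is harmless: under the uniform reading of Assumption~\ref{asmp: ppd}, the lower bound by $\lambda_p\|\Delta_{n-1}\|_p^p$ is linear in the Hessian, so it carries over directly to the segment average $M_{n-1}$.
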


\begin{lem}[Lemma 5 in \cite{meerschaert1999sample}]
    \label{lem: joint_vector}
    For an i.i.d. sequence of multivariate $\alpha$-regularly varying random vectors $\{X_i\}_{i\ge1}$ with $\EB X_1=0$, the joint weak convergence holds:
    \begin{align}
        \left(\frac{b_1(n)}{n^{\frac{1}{\alpha}}}\sum_{i=1}^n X_i,\frac{b_1(n)^2}{n^{\frac{2}{\alpha}}}\sum_{i=1}^n X_iX_i^\top \right)\tod (Z, W).
    \end{align}
\end{lem}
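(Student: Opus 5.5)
We prove the joint convergence with the triangular-array convergence criterion for infinitely divisible laws, applied to the lifted vectors
\[
Y_{n,i}:=\Bigl(a_n^{-1}X_i,\;a_n^{-2}X_iX_i^\top\Bigr)\in\RB^d\times\RB^{d\times d},\qquad a_n:=n^{1/\alpha}b_1(n)^{-1}.
\]
The two normalized sums in the statement are exactly $\sum_{i\le n}Y_{n,i}$, a row sum of an i.i.d. triangular array. The limit is obtained by identifying its Lévy triple and checking the three classical conditions: vague convergence of $n\PB(Y_{n,1}\in\cdot)$, negligibility of the truncated second moments, and convergence of the truncated means. Equivalently, one can use point-process convergence $\sum_i\delta_{X_i/a_n}\Rightarrow\mathrm{PRM}(\nu)$ with a truncation at level $\varepsilon$; the estimates needed are the same.

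\emph{Step 1: Lévy measure.} Let $T(x)=(x,xx^\top)$. The map $T$ is continuous on $\RB^d\setminus\{0\}$, and $\|T(x)\|\to0$ iff $x\to0$. Assumption~\ref{asmp: mrv} gives $n\PB(X_1/a_n\in\cdot)\overset{v}{\to}\nu$. Hence $n\PB(Y_{n,1}\in\cdot)\overset{v}{\to}\nu\circ T^{-1}$ on the punctured space $(\RB^d\times\RB^{d\times d})\setminus\{0\}$. To see this, test against continuous functions supported away from the origin: their pullbacks under $T$ are supported away from $0$ in $\RB^d$. The candidate limit $(Z,W)$ is therefore infinitely divisible with Lévy measure $\nu\circ T^{-1}$ and no Gaussian part. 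Its first marginal is $L_1$ with the centering of Theorem~\ref{thm: stable_clt}. Its second marginal is a positive semidefinite $\alpha/2$-stable matrix with no centering, since $\alpha/2<1$ makes $\int_{\|y\|\le1}\|y\|\,\nu\circ T^{-1}(dy)$ finite.

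\emph{Step 2: small jumps.} Fix $\varepsilon>0$ and truncate at $\|X_i\|\le\varepsilon a_n$.
\begin{itemize}
\item Quadratic part. By Karamata's theorem,
\[
n a_n^{-2}\EB\bigl[\|X_1\|^2\mathbbm{1}(\|X_1\|\le\varepsilon a_n)\bigr]\lesssim\varepsilon^{2-\alpha},
\]
so the truncated matrix sum is $O_p(\varepsilon^{2-\alpha})$ uniformly in $n$ and vanishes as $\varepsilon\to0$.
\item Linear part. Center at $\EB[X_1\mathbbm{1}(\|X_1\|\le\varepsilon a_n)]$. The variance of the centered truncated sum is bounded by the same $C\varepsilon^{2-\alpha}$.
\item The centering itself. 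Because $\EB X_1=0$,
\[
-\frac{n}{a_n}\EB\bigl[X_1\mathbbm{1}(\|X_1\|\le\varepsilon a_n)\bigr]=\frac{n}{a_n}\EB\bigl[X_1\mathbbm{1}(\|X_1\|>\varepsilon a_n)\bigr]\to\int_{\|x\|>\varepsilon}x\,\nu(dx).
\]
This limit uses regular variation together with uniform integrability from $\alpha>1$. It supplies exactly the drift that converts compensated big jumps into $L_1$ with drift $\gamma=-\int_{\|x\|>1}x\,\nu(dx)$.
\end{itemize}

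\emph{Step 3: big jumps and assembly.} The big-jump part $\sum_i T(X_i/a_n)\mathbbm{1}(\|X_i\|>\varepsilon a_n)$ converges jointly to the corresponding functional of the Poisson random measure. This follows from Step~1 and continuity of summation over finitely many points away from the origin, assuming $\varepsilon$ is not an atom of $\nu(\|x\|\in\cdot)$. A Billingsley-type approximation argument then sends $\varepsilon\downarrow0$ and gives $(Z,W)$ as the joint limit of the lifted sums.

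The main obstacle is Step~2, and specifically its linear component. The quadratic part is easy because no centering is needed when $\alpha/2<1$. The linear part needs the mean-zero assumption to move the truncated mean onto the tail, and then a uniform-in-$n$ Karamata bound so that the $\varepsilon\to0$ limit can be interchanged with $n\to\infty$. The joint and cross terms create no extra difficulty, because both components are driven by the same point process.
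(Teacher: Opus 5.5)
Your proposal is correct and follows essentially the same route as the source: the paper gives no proof of its own and only cites Meerschaert and Scheffler, whose argument (in the more general operator-normalized setting) also pushes the L\'evy measure forward under $x\mapsto(x,xx^\top)$ and verifies the triangular-array convergence conditions; your point-process truncation is an equivalent packaging of the same estimates. One small slip: the finiteness that removes the centering of $W$ is $\int_{\|x\|\le1}\|x\|^2\,\nu(dx)<\infty$, which concerns the pushforward under $x\mapsto xx^\top$ alone, not under the joint map $T$, for which the first-order small-jump integral diverges because $\alpha>1$.
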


\section{Additional experimental results}
\subsection{Simulation: robustness to varying tail indices}
\label{apd: robust}
The theoretical analysis in this paper assumes that the stochastic gradient noise follows a heavy-tailed distribution with a fixed tail index $\alpha\in(1,2)$. This assumption enables us to characterize the asymptotic behavior of the Polyak--Ruppert averaged SGD estimator via stable limit theorems. However, in practice, the heaviness of the stochastic gradient noise may vary over time. For instance, its distribution may change across iterations because of different sampled mini-batches or evolving model parameters. Consequently, the tail index need not remain constant throughout the optimization process.

To evaluate the robustness of our inference procedure under such heterogeneous noise regimes, we consider a setting in which the tail index varies randomly across iterations. Specifically, at iteration $k$, the stochastic gradient noise follows a Pareto distribution with tail index $\alpha_k$, where $\alpha_k$ is independently drawn from a uniform distribution. We consider three ranges for $\alpha_k$: $[1.5,1.8)$, $[1.8,2.2)$, and $[2.2,4)$, corresponding to a heavy-tailed regime, a mixed heavy- and light-tailed regime, and a light-tailed regime, respectively. This setting departs from the theoretical framework developed in the earlier sections, where the tail index is fixed. We apply the varying-index setting to both linear and logistic regression. For logistic regression, we consider homogeneous and heterogeneous settings. In the homogeneous setting, a single $\alpha_k$ is sampled at each iteration and applied to all coordinates, whereas in the heterogeneous setting, different $\alpha_k$ values are sampled across coordinates.

\begin{table}[t]
\caption{Varying tail indices: Average coverage rates and confidence interval lengths at the nominal 95\% coverage level. Standard errors are reported in parentheses.}
\resizebox{\textwidth}{!}{\begin{tabular}{lcccccc}
\toprule
 &  \multicolumn{3}{c}{Coverage Rate (\%)}  &  \multicolumn{3}{c}{Average Length ($\times 10^{-2}$)} \\
  & $n^{0.6}$&$n^{0.7}$&$n^{0.8}$ & $n^{0.6}$&$n^{0.7}$&$n^{0.8}$  \\
\midrule
\multicolumn{7}{l}{Linear Regression (Identity $\Sigma$)}\\
$\alpha\in[1.5,1.8)$&96.2(8.6E-3)&94.6(1.0E-2)&93.0(1.1E-2)&4.828(3.6E-3)&5.161(4.9E-3)&5.657(7.3E-3)\\
$\alpha\in[1.8,2.2)$&93.8(1.1E-2)&92.4(1.2E-2)&87.6(1.5E-2)&1.136(2.3E-4)&1.119(2.6E-4)&1.065(2.9E-4)\\
$\alpha\in[2.2,4)$&94.6(1.0E-2)&94.4(1.0E-2)&86.4(1.5E-2)&2.644(4.5E-6)&2.529(8.1E-6)&2.327(1.3E-5)\\

 \addlinespace
\multicolumn{7}{l}{Linear Regression (Toeplitz $\Sigma$)}\\
$\alpha\in[1.5,1.8)$&96.6(8.1E-3)&94.6(1.0E-2)&91.8(1.2E-2)&3.940(2.4E-3)&3.892(1.8E-3)&4.083(3.2E-3)\\
$\alpha\in[1.8,2.2)$&94.2(1.0E-2)&90.6(1.3E-2)&87.8(1.5E-2)&1.248(2.5E-4)&1.217(2.9E-4)&1.156(3.4E-4)\\
$\alpha\in[2.2,4)$&94.6(1.0E-2)&92.6(1.2E-2)&87.0(1.5E-2)&2.877(4.8E-6)&2.722(8.9E-6)&2.494(1.4E-5)\\

 \addlinespace
 \multicolumn{7}{l}{Logistic Regression (Homogeneous $x_i$)}\\
$\alpha\in[1.5,1.8)$&94.0(1.1E-2)&94.6(1.0E-2)&94.2(1.0E-2)&19.03(2.0E-2)&31.25(4.0E-2)&65.88(8.6E-2)\\
$\alpha\in[1.8,2.2)$&99.6(2.8E-3)&99.2(4.0E-3)&97.8(6.6E-3)&5.012(1.1E-3)&4.335(2.5E-3)&8.688(1.2E-2)\\
$\alpha\in[2.2,4)$&99.8(2.0E-3)&99.2(4.0E-3)&94.2(1.0E-2)&2.016(5.8E-5)&1.446(9.9E-5)&1.211(1.0E-3)\\

\addlinespace
 \multicolumn{7}{l}{Logistic Regression (Heterogeneous $x_i$)}\\
$\alpha\in[1.5,1.8)$&93.6(1.1E-2)&95.2(9.6E-3)&94.6(1.0E-2)&18.25(1.3E-2)&31.65(2.7E-2)&55.61(6.1E-2)\\
$\alpha\in[1.8,2.2)$&99.2(4.0E-3)&98.6(5.3E-3)&97.0(7.6E-3)&5.000(8.1E-4)&4.312(1.7E-3)&8.271(1.5E-2)\\
$\alpha\in[2.2,4)$&100.0(0E-0)&99.8(2.0E-3)&94.4(1.0E-2)&2.025(6.5E-5)&1.465(1.0E-4)&1.498(2.8E-3)\\
\bottomrule
\end{tabular}}
\label{table: varying}
\end{table}

Nevertheless, it is plausible that the asymptotic behavior of the averaged SGD estimator remains governed by the heaviest tail in the system. A rigorous theoretical analysis of such varying-tail regimes is beyond the scope of the present work and is left for future research. Empirically, however, the proposed inference procedure continues to achieve reliable coverage probabilities in both linear and logistic regression models, as shown in Table~\ref{table: varying}. We summarize the results below.
\begin{enumerate}[(a)]
    \item[(a)] Across all models considered, including linear regression with identity and Toeplitz covariance structures and logistic regression with homogeneous and heterogeneous covariates, the empirical coverage rates are generally reasonably close to the nominal 95\% level. This suggests that the proposed inference procedure remains robust even when the tail index varies randomly across iterations.
    
    \item[(b)] In the linear regression experiments, the coverage rates are fairly stable when the tail indices are relatively heavy, namely $\alpha\in[1.5,1.8)$. As the noise becomes lighter-tailed, however, the coverage rates tend to deteriorate for larger subsample sizes, especially when the subsample size is chosen as $n^{0.8}$.

    \item[(c)] In the logistic regression experiments, the procedure often yields conservative coverage when the tail indices are in the lighter-tailed regimes $\alpha\in[1.8,2.2)$ and $\alpha\in[2.2,4)$, with empirical coverage rates frequently exceeding 99\%. This phenomenon is more pronounced for smaller subsample sizes, indicating that the method may overestimate uncertainty in these settings.
    
    \item[(d)] The effect of the subsample size is model dependent. For linear regression, the coverage rates generally decrease as the subsample size increases in the range $r\in(0.6,0.8)$, especially under lighter-tailed noise. In contrast, for logistic regression with the heaviest-tailed regime $\alpha\in[1.5,1.8)$, the coverage rates remain relatively stable across different subsample sizes. Overall, these results again reflect a bias--variance trade-off in the choice of subsample size.
    
    \item[(e)] The average confidence interval length decreases substantially as the stochastic gradient noise becomes lighter-tailed. This trend is consistent across both linear and logistic regression models, and agrees with the theoretical intuition that heavier tails lead to slower concentration and hence wider confidence intervals.
\end{enumerate}

\FloatBarrier
\subsection{CIFAR-10: experimental setup}
\label{sec: setup}
\paragraph*{\textbf{Pretraining.}} We use a CIFAR-10-adapted AlexNet architecture following \citet{simsekli2019tail}.
The network is trained on the CIFAR-10 training set using stochastic gradient descent
with a mini-batch size of $128$ and a learning rate of $0.1$ for $100$ epochs. After pretraining,
we freeze the network and use the output of the penultimate layer as the feature vector.
In our implementation, this gives a $448$-dimensional representation for each image.
We then normalize the extracted features coordinate-wise using the empirical mean and
standard deviation computed from the training data. In Section~\ref{sec: fullalex}, we further report an additional experiment based on the
full AlexNet architecture of \citet{krizhevsky2012imagenet}, which produces a
$4096$-dimensional penultimate-layer feature representation.

\paragraph*{\textbf{Subsampling.}}
Using the normalized AlexNet features, we construct a binary classification task by
selecting one CIFAR-10 class as the positive class and grouping the remaining classes as
the negative class. We then fit a regularized logistic regression model on top of the
extracted features. No intercept term is included, so the parameter dimension is exactly
$448$. Specifically, we consider
\[
    \min_{\theta\in\mathbb{R}^{448}}
    \frac{1}{N}\sum_{i=1}^N
    \left\{
        \log\left(1+\exp(z_i^\top\theta)\right)
        - y_i z_i^\top\theta
    \right\}
    + \frac{\lambda}{2}\|\theta\|_2^2 ,
\]
where $z_i$ denotes the normalized feature vector and $y_i\in\{0,1\}$ denotes the binary
label. We set the $\ell_2$ regularization coefficient to $\lambda=0.1$.

Since the true optimizer $\theta^*$ is unknown in this real-data experiment, we compute
a high-accuracy reference solution using L-BFGS and use it as a proxy for $\theta^*$.
The L-BFGS optimization tolerance is set to $10^{-10}$. For the SGD trajectories used in
the inference procedure, we use mini-batches of size $64$ and the learning-rate schedule $\eta_t = 0.5\, t^{-0.6}$.
Each trajectory is run for $10^6$ iterations, and Polyak--Ruppert averaging is applied to
the iterates. 
\begin{figure}[t]
\caption{Tail diagnostics for the $\ell_2$-norm of stochastic gradients in the full AlexNet CIFAR-10 experiment. The stochastic gradients are evaluated near the reference parameter using mini-batches of size $64$. Left: empirical histogram of the gradient norms. Right: Hill estimates of the tail index $\alpha$.}
    \centering
    \begin{minipage}{0.48\textwidth}
        \centering
        \includegraphics[width=\textwidth]{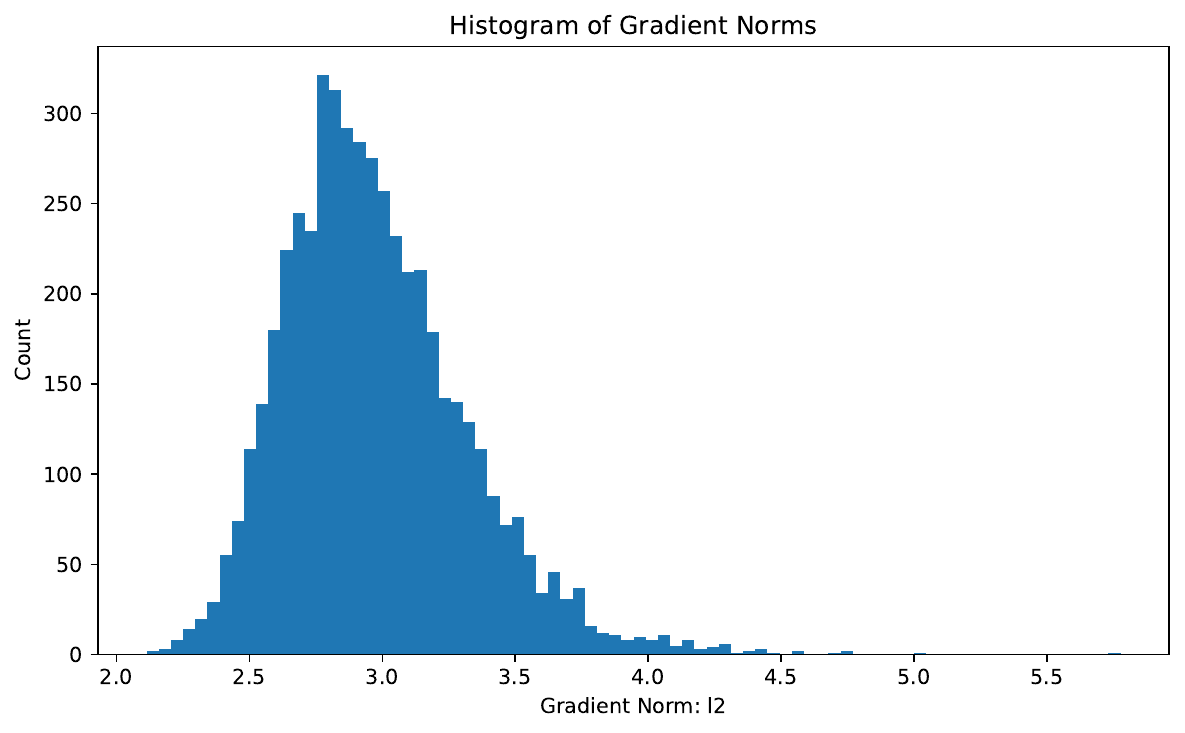}
    \end{minipage}
    \hfill
    \begin{minipage}{0.48\textwidth}
        \centering
        \includegraphics[width=\textwidth]{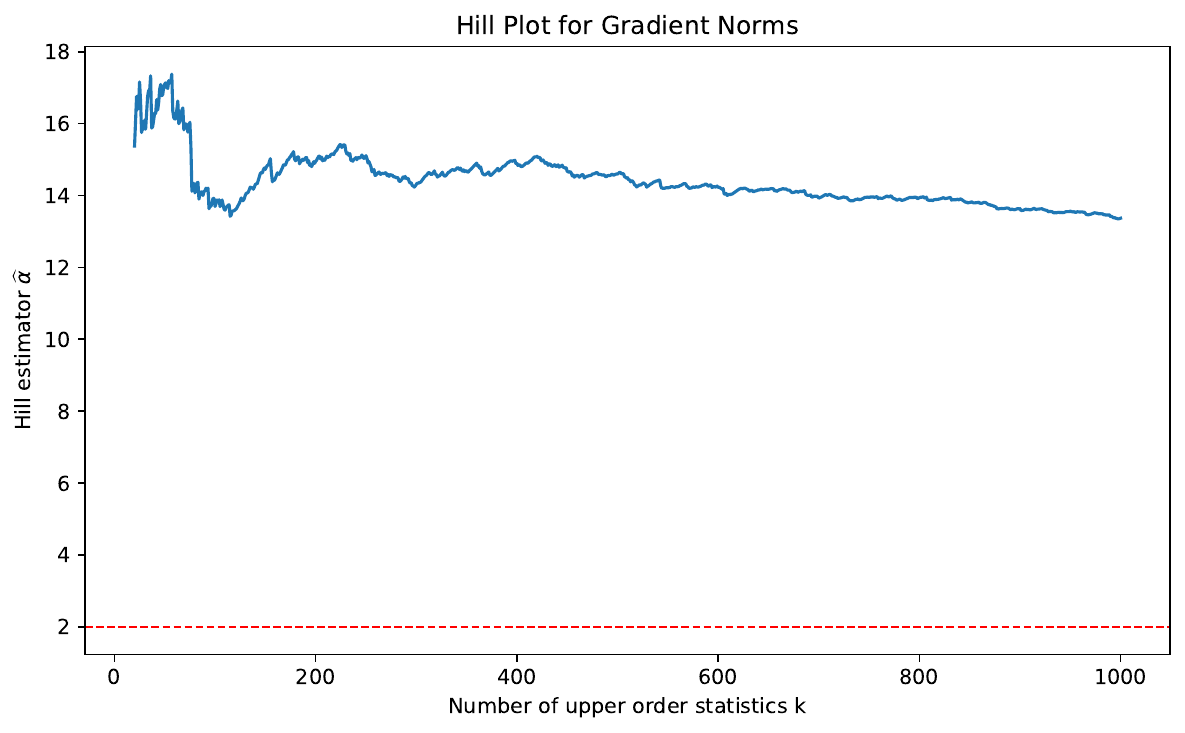}
    \end{minipage}
    \label{fig: full_tail}
\end{figure}
\paragraph*{\textbf{Random scaling.}}
As a benchmark, we compare the proposed method with the random scaling method of
\citet{lee2022fast}. The same regularized logistic regression model is used, again without
an intercept term. The training hyperparameters are the same as those used for the subsampling approach; only the critical value differs. In \citet{lee2022fast}, for an SGD trajectory $\{\theta_t\}_{t=1}^n$, the random-scaling variance estimator is
\[
    \widehat V_n^{\mathrm{RS}}
    =
    \frac{1}{n^2}\sum_{s=1}^n
    s^2(\bar{\theta}_s-\bar\theta_n)(\bar{\theta}_s-\bar\theta_n)^\top .
\]
For coordinate-wise nominal $95\%$ confidence intervals, we use
\[
    \bar\theta_{n,j}
    \pm
    q_{0.975}^{\mathrm{RS}}
    \sqrt{\frac{\widehat V^{\mathrm{RS}}_{n,jj}}{n}},
\]
where $q_{0.975}^{\mathrm{RS}}=6.747$ is the tabulated random-scaling critical value used in \cite{lee2022fast}.

\subsection{CIFAR-10: full AlexNet}
\label{sec: fullalex}
\begin{figure}[t]
    \centering
    \caption{Empirical reference coverage, coverage MSE, and fraction within the 95\% band in the CIFAR-10 experiment with full AlexNet.}
    \includegraphics[width=0.9\linewidth]{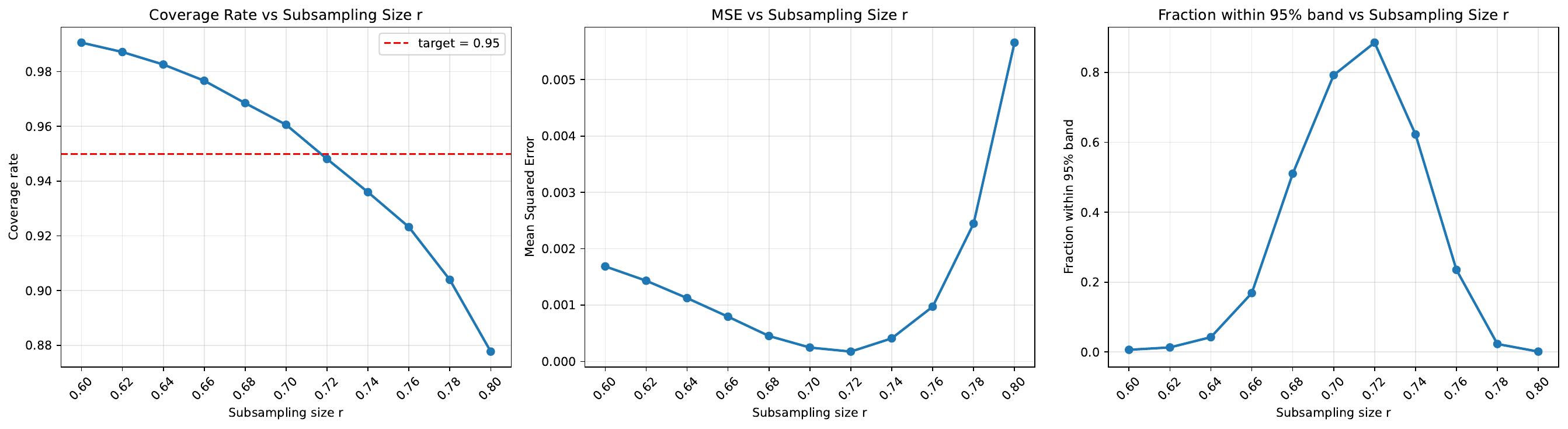}
    \label{fig: cifar10_sub_coverage}
\end{figure}

We conduct an additional CIFAR-10 experiment using the original AlexNet
architecture of \citet{krizhevsky2012imagenet}. This experiment is intended as a
robustness check for the small AlexNet experiment reported in Section~\ref{sec: real}.
The experimental protocol is the same as that described in the preceding subsection,
except that the feature extractor is replaced by the original AlexNet model. In Figure~\ref{fig: full_tail}, we observe that, with the larger feature dimension, the gradient-norm diagnostics appear more consistent with a lighter-tailed, possibly finite-variance, regime. Nevertheless, our methodology remains directly applicable. The corresponding inference results are reported below.

Figure~\ref{fig: cifar10_sub_coverage} reports the performance of the proposed method for
different subsampling levels $r\in\{0.60,0.62,\ldots,0.78,0.80\}$.
The average coverage rate decreases as $r$ increases: smaller values of $r$ tend to produce
conservative intervals, whereas larger values of $r$ lead to under-coverage. The coverage
MSE exhibits a U-shaped pattern, decreasing as $r$ increases from $0.60$ to around
$0.72$ and then increasing for larger values of $r$. This pattern is consistent with the
bias--variance trade-off in subsampling: smaller blocks may yield conservative inference,
whereas larger blocks reduce the effective number of blocks available for estimating the
critical value. Based on the coverage MSE, we select $r=0.72$ for the comparison below.

\begin{table}[t]
\caption{Coverage rate, mean squared error, and fraction within the 95\% band for nominal 95\% confidence intervals on the CIFAR-10 dataset with full AlexNet.}
\centering
\resizebox{\textwidth}{!}{\begin{tabular}{lccc}
\toprule
 & Average Coverage (\%) & Mean Squared Error ($\times 10^{-4}$) & Fraction within $95\%$ Band (\%) \\
\midrule
\texttt{Random Scaling \citep{lee2022fast}} & 93.07& 5.90& 46.97\\
\texttt{Algorithm~\ref{alg: general} ($r=0.72$)}&94.81&1.76&88.53\\
\bottomrule
\end{tabular}}
\label{table: cifar10_full_binary}
\end{table}

Table~\ref{table: cifar10_full_binary} compares the proposed method with random scaling.
Random scaling yields an average coverage rate of $93.07\%$ for nominal $95\%$
confidence intervals, whereas Algorithm~\ref{alg: general} with $r=0.72$ yields
$94.81\%$. The proposed method also reduces the coverage MSE from
$5.90\times 10^{-4}$ to $1.76\times 10^{-4}$ and increases the fraction within the
nominal $95\%$ Monte Carlo band from $46.97\%$ to $88.53\%$.
\FloatBarrier

\end{document}